\documentclass{article}

\pdfoutput=1

\usepackage[preprint]{neurips_2026}

\usepackage[utf8]{inputenc} 
\usepackage[T1]{fontenc}    
\usepackage{hyperref}       
\usepackage{url}            
\usepackage{booktabs}       
\usepackage{amsfonts}       
\usepackage{nicefrac}       
\usepackage{microtype}      
\usepackage{xcolor}         
\usepackage{graphicx}
\usepackage{multirow}
\usepackage{algorithm}
\usepackage{algorithmic}
\providecommand{\STATEx}{\item[]} 
\usepackage{amsthm}
\usepackage{amssymb}
\usepackage{amsmath}
\usepackage{bm}
\usepackage{enumitem}
\usepackage{titlesec}
\titlespacing*{\section}{0pt}{1.0ex plus 0.2ex minus 0.1ex}{0.7ex plus 0.2ex}

\newtheorem{theorem}{Theorem}
\newtheorem{lemma}[theorem]{Lemma}
\newtheorem{proposition}[theorem]{Proposition}
\newtheorem{corollary}[theorem]{Corollary}

\newtheorem{remark}{Remark}

\newcommand{\tsnn}{\textsc{Tsnn}}
\newcommand{\RR}{\mathbb{R}}
\newcommand{\transpose}{\top}
\DeclareMathOperator{\diag}{diag}
\DeclareMathOperator{\blkdiag}{blkdiag}
\DeclareMathOperator{\GRU}{GRU}
\DeclareMathOperator{\MLP}{MLP}
\DeclareMathOperator{\softmx}{softmax}
\DeclareMathOperator{\softplus}{softplus}

\newcommand{\concat}{\,\|\,}

\title{Temporal Sheaf Neural Networks with Dynamic Orthogonal Transport}

\author{%
  Md Sadek Hossain Asif\thanks{Equal contribution.} \\
  University of Dhaka \\
  \texttt{\small mdsadekhossain-2021211191@cs.du.ac.bd}
  \And
  Tanzila Khan\footnotemark[1] \\
  University of Dhaka \\
  \texttt{\small tanzila-2021011201@cs.du.ac.bd}
  \And
  Md.\ Mosaddek Khan \\
  University of Dhaka \\
  \texttt{\small mosaddek@du.ac.bd}
}

\begin{document}

\maketitle

\begin{abstract}
We introduce \textbf{Temporal Sheaf Neural Networks (\tsnn{})}, a temporal link prediction framework that equips each node with a time-varying orthogonal frame and compares node states only after explicit transport between local coordinate systems. In contrast to existing continuous-time graph models that operate in a shared global embedding space, \tsnn{} models node-specific and evolving interaction semantics through dynamic local frames. The model parameterizes per-node frames via efficient low-rank Householder products, preserves stored hidden states exactly under frame updates, and uses a geometric-residual decoder that anchors predictions on transported distances while learning residual corrections. All computations are strictly causal and use only the pre-event history. We show that the symmetric degree-normalized sheaf Laplacian is orthogonally similar to the symmetric normalized graph Laplacian, with the random-walk normalized form similar in the corresponding degree metric; the full-active, feature-scaled diffusion used by \tsnn{} is exactly a metric-gradient step on the combinatorial sheaf Dirichlet energy, with a degree-free monotone-descent and non-expansiveness guarantee. Frame drift perturbs updates only linearly. Across TGB~v2 link-prediction and temporal-heterogeneous leaderboards, together with the DGB benchmark suite, \tsnn{} matches or surpasses the strongest prior methods on most benchmarks, with the largest improvements on graphs exhibiting strong node-role heterogeneity. Ablations confirm the distinct benefit of dynamic frames, orthogonal transport, and geometric-residual decoding.\footnote{Code, configs, and reproduction scripts: \url{https://github.com/TanzilaKhan1/TSNN-Code}.}
\end{abstract}

\section{Introduction} \label{sec:introduction}
Temporal graphs model continuously evolving interactions across high-impact domains, including social networks, recommender platforms, financial transactions, communication networks, and knowledge bases \citep{huang2023tgb,gastinger2024tgb2}. A central challenge is \emph{temporal link prediction}: given a chronological stream of past events, infer the next connection. Accurate prediction enables personalized recommendation, fraud and anomaly detection, emerging-relationship discovery, demand forecasting, and proactive resource allocation. As event-driven relational data scales up, reasoning over temporal interactions becomes a fundamental challenge in dynamic relational learning.

A continuous-time temporal graph model carries a hidden state per node and reads it whenever a new interaction arrives. Every downstream prediction therefore depends on a tacit assumption: the coordinate axes defining a node's state at training time still mean the same thing at scoring time. This assumption silently fails. Roles drift as users switch communities, products change categories, and proteins acquire new partners; recurrent updates keep stored vectors numerically stable but not semantically stable, because the basis itself has shifted. Two memories can be aligned on disk yet measure different concepts, and a score that compares them can rank the wrong destination above the right one. This is the \emph{representation-alignment problem}: latent coordinates evolve heterogeneously across nodes and time, while the score function treats them as if they did not. Existing continuous-time temporal GNNs do not address it: memory-based architectures \citep{trivedi2019dyrep,kumar2019jodie,rossi2020tgn,cong2023graphmixer,wang2024contig} update states in a single shared basis, and event- and neighborhood-centric encoders \citep{xu2020tgat,wang2021cawn,luo2022nat,yu2023dygformer,lu2024tpnet,souza2022pint,wang2021tcl,zhang2024tncn} encode walks, patches, and co-occurrence in the same basis. Both inherit the unstated global-alignment assumption. Causality and scalability are necessary but not sufficient: a temporal model also needs an explicit, node-local notion of meaning and an operator that aligns interacting states before comparing them.

A complementary line of work shows the value of node-local coordinate systems and explicit transport in geometric graph learning. Sheaf-based models assign local vector spaces to nodes and connect them through learnable maps, inducing operators that generalize graph diffusion and improve robustness to heterophily and over-smoothing \citep{hansen2019spectral,hansen2020sheaf,bodnar2022neural,barbero2022attention}. Connection-based variants study orthogonal transport and spectrally controlled operators \citep{barbero2022sheaf,bamberger2025bundle,ribeiro2026csnn,borgi2025polynsd,choi2026sgpc}. Related geometric architectures also exploit local frames and transport, including LoCS \citep{kofinas2021locs}, Gauge Equivariant Mesh CNNs \citep{dehaan2021gemcnn}, the Gauge Equivariant Transformer \citep{he2021get}, ClofNet \citep{du2022clofnet}, and Hermes \citep{park2023hermes}. These works show that local coordinates improve representation consistency, but primarily target static graphs, meshes, or physical dynamical systems rather than continuous-time event streams.

This creates a precise gap. Temporal graph models deliver causal memory updates, neighborhood encoding, and event prediction at scale, but they do not provide an operator-level mechanism for preserving representation meaning under changing node-local semantics. Sheaf-based and geometric transport models provide local vector spaces, restriction maps, and coordinate alignment, yet their applications remain static graphs, meshes, or physical dynamical systems rather than chronological event streams. Other dynamic-graph priors improve connectivity, curvature, or propagation pathways, but they do not solve the same coordinate-consistency problem. A temporal model that unifies evolving node-local frames, explicit transport, and online causal memory updates is therefore missing. We address this challenge with \textbf{Temporal Sheaf Neural Networks (\tsnn{})}, a temporal link-prediction model that extends sheaf-based representation learning to continuous-time interaction streams. \tsnn{} equips each node with a time-varying orthogonal frame, aligns interacting states by explicit transport, and carries node memories consistently across frame changes, preserving semantic content under evolving local coordinates. Scoring is strictly causal, uses only $\mathcal{E}_{<t}$, and combines basis-independent geometric alignment with a residual decoder. Our main contributions are:
\begin{enumerate}[leftmargin=*,
                  itemsep=2pt,
                  topsep=2pt,
                  parsep=2pt,
                partopsep=0pt]
\item \textbf{A sheaf-structured temporal GNN for causal link prediction} (\S\ref{sec:method}). Each node has a time-varying orthogonal frame parameterized by a low-rank Householder product, requiring only $\mathcal{O}(kd)$ parameters per node, while candidate links are scored strictly from pre-event history.
   \item \textbf{Coordinate-consistent memory evolution} (\S\ref{sec:sheaf_construction}--\S\ref{sec:event_processing}). \tsnn{} transports interacting states into a common frame before aggregation and carries stored hidden states exactly across frame updates, preserving semantic meaning as local coordinates evolve.
\item \textbf{Spectral and stability guarantees} (\S\ref{sec:theory}). The induced degree-normalized sheaf Laplacian has spectrum in $[0,2]$; the full-active feature-scaled diffusion operator is metric-gradient descent on the combinatorial sheaf Dirichlet energy with degree-free step-size bound $\eta \le 1/\lambda_{\max}(\mathbf{D}_\theta)$; and perturbation grows only linearly with frame drift.
\item \textbf{Strong empirical results across temporal benchmarks}. Across TGB v2, heterogeneous temporal graph leaderboards, and 13 DGB datasets, \tsnn{} matches or outperforms strong baselines, with ablations validating the roles of dynamic frames, transport, and geometric-residual scoring.
\end{enumerate}

\section{Preliminaries}
\label{sec:preliminaries}
A continuous-time dynamic graph (CTDG) $\mathcal{G} = (\mathcal{V}, \mathcal{E})$ comprises a node set $\mathcal{V}$ and a stream of timestamped edges $\mathcal{E} = \{(u_i, v_i, t_i, r_i, \mathbf{x}_i)\}_{i=1}^{|\mathcal{E}|}$, where $u_i, v_i \in \mathcal{V}$, $t_i \in \RR_{\geq 0}$ is the timestamp, $r_i$ is the edge type (relation), and $\mathbf{x}_i \in \RR^{\dim(\mathbf{x})}$ is the edge feature vector. Edges are \emph{directed}: $(u, v, t, r, \mathbf{x})$ records an interaction from source $u$ to destination $v$ at time $t$. Events are sorted chronologically.

\textit{
A cellular sheaf $\mathcal{F}$ over a graph $G = (V, E)$ assigns:
\begin{itemize}
    \item A vector space $\mathcal{F}(v) \cong \RR^d$ to each node $v$ (the \emph{stalk} at $v$), and
    \item A linear map $\mathcal{F}_{v \unlhd e} : \mathcal{F}(v) \to \mathcal{F}(e) \cong \RR^d$ to each incidence of node $v$ in edge $e$ (the \emph{restriction map}).
\end{itemize}}
For an edge $e = (u, v)$, the two restriction maps $\mathcal{F}_{u \unlhd e}$ and $\mathcal{F}_{v \unlhd e}$ define how the stalks of $u$ and $v$ are related through $e$. The standard \emph{sheaf Laplacian} (degree-0) is the block matrix $\mathbf{L}_{\mathcal{F}} \in \RR^{|V|d \times |V|d}$ whose $(u, v)$-block is:
\begin{equation}
[\mathbf{L}_{\mathcal{F}}]_{uv} = \begin{cases}
        \sum_{e \ni u} \mathcal{F}_{u \unlhd e}^{\transpose} \mathcal{F}_{u \unlhd e} & \text{if } u = v, \\
        -\sum_{e:\,\{u,v\}\subseteq\partial e} \mathcal{F}_{u \unlhd e}^{\transpose} \mathcal{F}_{v \unlhd e} & \text{if } u\sim v \text{ in } E, \\
        \mathbf{0} & \text{otherwise,}
    \end{cases}
    \label{eq:sheaf_laplacian}
\end{equation}
where $\partial e=\{u,v\}$ denotes the endpoints of edge $e$ and $E$ is treated as a multiset, so parallel edges contribute multiplicity to both the diagonal and the off-diagonal blocks. The \emph{pairwise transport map} is $\mathbf{Q}_{uv} = \mathcal{F}_{u \unlhd e}^{\transpose} \mathcal{F}_{v \unlhd e} \in \RR^{d \times d}$. When restriction maps are orthogonal, $\mathbf{Q}_{uv}$ is also orthogonal and translates between the coordinate frames of $u$ and $v$. \tsnn{} instantiates these restriction maps by setting $\mathcal{F}_{v \unlhd e} = \mathbf{U}_v$, a per-node orthogonal frame; the pairwise transport then specializes to $\mathbf{Q}_{uv} = \mathbf{U}_u^\top \mathbf{U}_v$, so $\mathbf{Q}_{vu} = \mathbf{Q}_{uv}^\top$. Given the history $\mathcal{E}_{<t} = \{(u_i, v_i, t_i, r_i, \mathbf{x}_i) : t_i < t\}$, the goal is to estimate the destination $v$ of a query $(u, ?, r, t)$. 
\section{Temporal Sheaf Neural Networks}
\label{sec:method}

\tsnn{} processes a temporal edge stream in chronological order under a strict \emph{predict-then-update} protocol: when an event $(u, v, t, r, \mathbf{x})$ arrives, candidate scores are computed solely from the pre-event history $\mathcal{E}_{<t}$, and the observed interaction may update internal states only after the ranking loss is formed. Each node $v \in \mathcal{V}$ maintains a stalk vector $\mathbf{h}_v(t) \in \RR^d$ (semantic memory) and frame parameters $\mathbf{F}_v(t) \in \RR^{k \times d}$ whose rows are Householder vectors parameterizing an orthogonal node-local frame $\mathbf{U}_v(t) \in \RR^{d \times d}$. These states are advanced through three per-event stages: \emph{Score} ranks the true destination against $K_{\text{neg}}$ negatives via a geometric-residual decoder combined with an EdgeBank memory prior; \emph{Update} refreshes endpoint frames, transports pre-event stalks into the new coordinates, and runs a GRU on frame-aligned messages; \emph{Diffuse} applies $K$ rounds of full-active feature-scaled normalized sheaf diffusion over the event-local active graph from each endpoint's $K_{\text{nbr}}$ most recent neighbors. The remainder follows Algorithm~\ref{alg:tsnn}; Figure~\ref{fig:overview} summarizes the pipeline.

\begin{algorithm}[htbp]
\caption{\tsnn{}: Processing a single event $(u, v, t, r, \mathbf{x})$}
\label{alg:tsnn}
\footnotesize
\begin{algorithmic}[1]
\REQUIRE Event $(u, v, t, r, \mathbf{x})$; current stalks $\mathbf{h}_u(t^-), \mathbf{h}_v(t^-)$; frames $\mathbf{F}_u(t^-), \mathbf{F}_v(t^-)$
\ENSURE Query-time score $s_{\text{final}}(u,v,r,t^-)$; updated stalks and frames

\STATE \textbf{// Step 1: Query-time scoring from history only (relation-conditioned)}
\STATE Apply relation frame bias (Eq.~\eqref{eq:rel_frame_bias}); compute $\tilde{\mathbf{Q}}_{uv}^-(r)$ and $s_{\text{final}}(u,v,r,t^-)$ via Eqs.~\eqref{eq:dsq}--\eqref{eq:edgebank_mixture}
\STATE Form ranking loss against negative destinations using pre-event scores only

\STATE \textbf{// Step 2: Observe the positive event and encode time}
\STATE Compute $\phi(\delta t_u)$, $\phi(\delta t_v)$ via sinusoidal encoding (Eq.~\eqref{eq:time_encoding})

\STATE \textbf{// Step 3: Update frames using transported pre-event states}
\STATE $\Delta \mathbf{F}_u, \Delta \mathbf{F}_v \leftarrow$ Eq.~\eqref{eq:frame_update}; update $\mathbf{F}_u(t), \mathbf{F}_v(t)$
\STATE Rebuild $\mathbf{U}_u(t), \mathbf{U}_v(t)$ using the Householder product from \S\ref{sec:sheaf_construction}
\STATE $\bar{\mathbf{h}}_u^- \leftarrow \mathbf{U}_u(t)^\top \mathbf{U}_u(t^-)\mathbf{h}_u(t^-)$;  $\bar{\mathbf{h}}_v^- \leftarrow \mathbf{U}_v(t)^\top \mathbf{U}_v(t^-)\mathbf{h}_v(t^-)$ \hfill (Prop.~\ref{prop:carry_over})

\STATE \textbf{// Step 4: Message computation and endpoint update}
\STATE $\mathbf{m}_u, \mathbf{m}_v \leftarrow$ Eq.~\eqref{eq:message};  $\mathbf{h}_u(t), \mathbf{h}_v(t) \leftarrow$ Eq.~\eqref{eq:gru_update}

\STATE \textbf{// Step 5: Full-active sheaf diffusion}
\STATE Build $\mathcal{A}_t$ from the endpoint histories; retrieve $(\mathbf{h}_a^{(0)},\mathbf{F}_a)$ for every $a\in\mathcal{A}_t$, using the post-GRU stalks for $u,v$
\STATE Set $\bar d_a(t) := \max\{|\mathcal{N}_a^{\mathrm{act}}(t)|,1\}$ for every active node $a\in\mathcal{A}_t$
\FOR{$\ell = 0$ to $K-1$}
    \FOR{each active node $a \in \mathcal{A}_t$}
        \STATE $\mathbf{h}_a^{(\ell+1)} \leftarrow \mathbf{h}_a^{(\ell)} - \dfrac{\eta}{\bar d_a(t)}\, \mathbf{D}_\theta \sum_{b \in \mathcal{N}_a^{\mathrm{act}}(t)} \left(\mathbf{h}_a^{(\ell)} - \mathbf{Q}_{ab}(t) \mathbf{h}_b^{(\ell)}\right)$ \hfill (Eq.~\eqref{eq:diffusion_local})
    \ENDFOR
\ENDFOR

\STATE \textbf{// Step 6: Commit active states and endpoint metadata}
\STATE Persist $\{\mathbf{h}_a^{(K)}:a\in\mathcal{A}_t\}$; persist updated endpoint frames $\mathbf{F}_u(t),\mathbf{F}_v(t)$ and endpoint last times; update neighbor buffers $\mathcal{N}_u^-,\mathcal{N}_v^-$ and EdgeBank statistics.
\end{algorithmic}
\end{algorithm}

\begin{figure*}[htbp]
\vspace{-8pt}
\centering
\IfFileExists{figures/tsnn_overview.png}{%
    \includegraphics[width=1.0\textwidth, height=0.42\textheight]{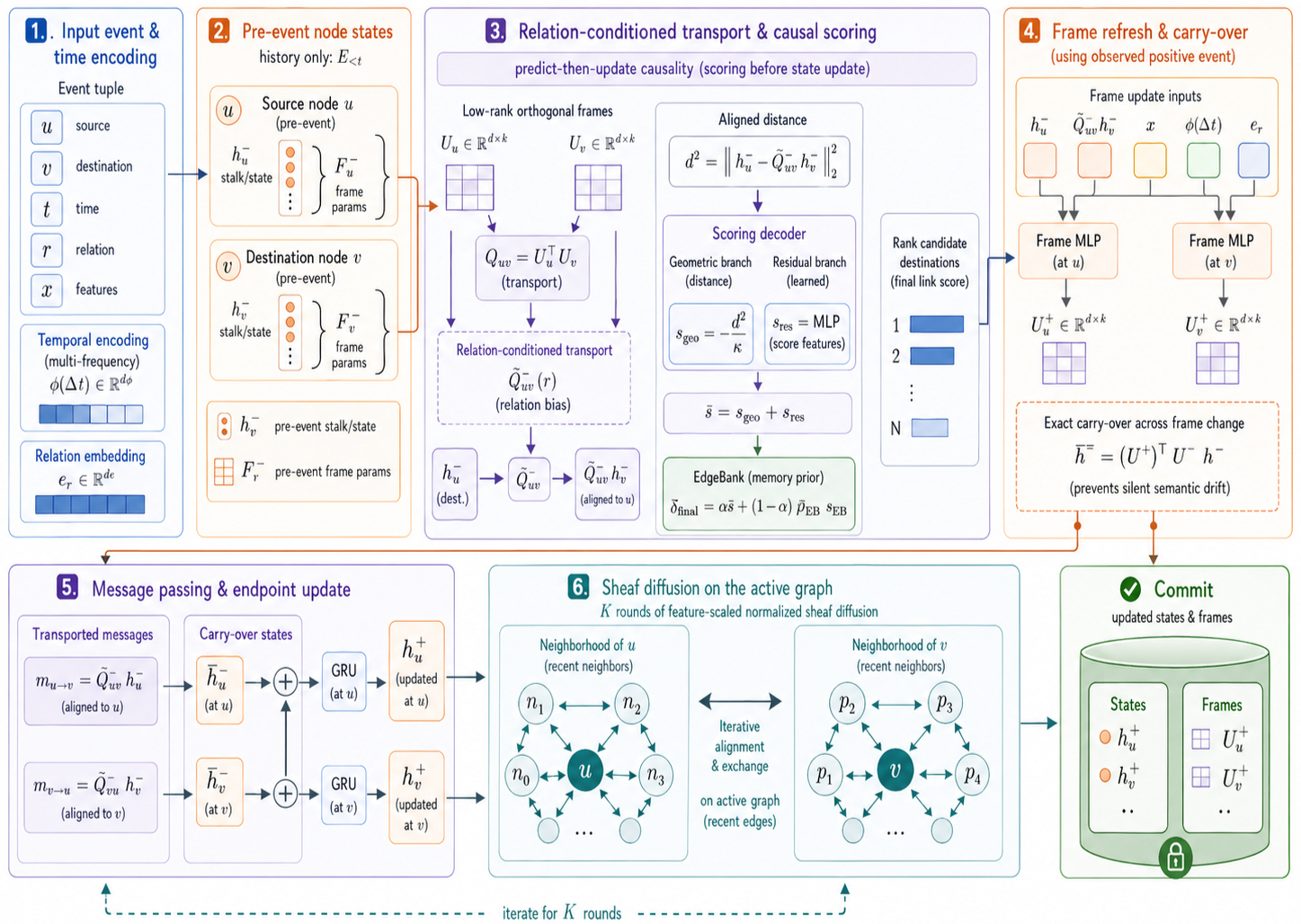}%
}{%
    \fbox{%
        \begin{minipage}[c][0.15\textheight][c]{\textwidth}
        \centering
        {\Large TSNN Overview Figure Placeholder}\\[0.75em]
        Drop your image at:\\[0.4em]
        \texttt{figures/tsnn\_overview.png}\\[0.75em]
        The paper will then automatically render it here at full width.
        \end{minipage}%
    }%
}
\vspace{-8pt}
\caption{Overview of the \tsnn{} pipeline for a single event $(u, v, t, r, \mathbf{x})$. Candidate destinations are ranked using only pre-event history $\mathcal{E}_{<t}$ via the geometric-residual decoder. The frames $\mathbf{U}_u, \mathbf{U}_v$ and stalk states $\mathbf{h}_u, \mathbf{h}_v$ are updated, with the carry-over $\bar{\mathbf{h}}^- := (\mathbf{U}^+)^\top \mathbf{U}^- \mathbf{h}^-$. The event-local active graph $\mathcal{A}_t$ is then smoothed by $K$ rounds of full-active, feature-scaled normalized sheaf diffusion (Eq.~\eqref{eq:diffusion_local}), with a single representative round depicted in the figure. \textbf{Takeaway:} every interaction is scored, updated, and diffused only after states have been transported into a common local frame, so semantic content is preserved as the per-node coordinate systems evolve over time.}
\label{fig:overview}
\end{figure*}

\subsection{Frames, Orthogonal Maps, and Transport}
\label{sec:sheaf_construction}
For every node $v \in \mathcal{V}$, we maintain \emph{frame parameters} $\mathbf{F}_v(t) \in \RR^{k \times d}$ whose $k$ rows are Householder vectors. These vectors parameterize an orthogonal map $\mathbf{U}_v(t) \in \RR^{d \times d}$, which serves as the restriction map in sheaf-theoretic terms. Let $\mathbf{f}_v^{(1)}, \ldots, \mathbf{f}_v^{(k)} \in \RR^d$ denote the rows of $\mathbf{F}_v(t)$. We use the same product convention as Algorithm~\ref{alg:transport}:
\begin{equation}
    \mathbf{U}_v(t)
    :=
    H_\varepsilon(\mathbf{f}_v^{(k)})\cdots H_\varepsilon(\mathbf{f}_v^{(1)}).
    \label{eq:householder_product}
\end{equation}
Thus $\mathbf{U}_v(t)\mathbf{h}$ is implemented by applying the row reflectors in the order $1,\ldots,k$, while $\mathbf{U}_v(t)^\top\mathbf{h}$ is implemented by applying them in the reverse order. Here $H_\varepsilon(\mathbf{f})=\mathbf{I}-2\hat{\mathbf{f}}\hat{\mathbf{f}}^\top$ with $\hat{\mathbf{f}}=\mathbf{f}/\|\mathbf{f}\|$ when $\|\mathbf{f}\|>\varepsilon$, and $H_\varepsilon(\mathbf{f})=\mathbf{I}$ otherwise.
The threshold $\varepsilon$ is a numerical safeguard against division by near-zero $\|\mathbf{f}\|$ in a single reflector. Our random initialization and additive update keep $\min_{w,i}\|\mathbf{f}_w^{(i)}\|$ comfortably above $\varepsilon=10^{-6}$ throughout training, so this threshold is never crossed in the operating regime, and the model analyzed in \S\ref{sec:theory} is the differentiable Householder reflection ($\varepsilon=0$ limit) on $\{\mathbf{f}:\|\mathbf{f}\|>0\}$. A distinct, looser operating threshold $\tau_{\mathrm{H}}\!\gg\!\varepsilon$ (typically $\tau_{\mathrm{H}}\!\approx\!10^{-2}$) governs the Lipschitz constant of the \emph{product} map $\mathbf{U}_v$ in Lemma~\ref{lem:houshlip} and Eq.~\eqref{eq:smooth_lipschitz}; the two thresholds play disjoint roles and are not interchangeable. Because $H_\varepsilon(\mathbf{f})$ is orthogonal for every $\mathbf{f}$, the product $\mathbf{U}_v(t)$ is orthogonal by construction. The resulting frame requires only $\mathcal{O}(kd)$ parameters per node.
The transport aligning $v$ to $u$ is $\mathbf{Q}_{uv}(t)=\mathbf{U}_u(t)^\top\mathbf{U}_v(t)$ (write $\mathbf{Q}_{uv}^\pm$ for the $t^\pm$ variants). Applying $\mathbf{Q}_{uv}(t)$ costs $\mathcal{O}(kd)$ via sequential Householder applications, so we never materialize the full $d \times d$ matrix. After the query-time score for the event has been formed, we update the endpoint frame parameters using transported partner states and raw time gaps $\delta t_w:=t-t_w^{\mathrm{last}}$:
\begin{subequations}\label{eq:frame_update}       
  \begin{align}
  \Delta \mathbf{F}_u &= \MLP_{\text{frame}}\!\big(\mathbf{h}_u^- \concat \mathbf{Q}_{uv}^- \mathbf{h}_v^- \concat \mathbf{x} \concat \phi(\delta t_u) \concat \mathbf{e}_r\big),                           
  \label{eq:frame_update_u}\\[-1pt]             
  \Delta \mathbf{F}_v &= \MLP_{\text{frame}}\!\big(\mathbf{h}_v^- \concat \mathbf{Q}_{vu}^- \mathbf{h}_u^- \concat \mathbf{x} \concat \phi(\delta t_v) \concat \mathbf{e}_r\big). \label{eq:frame_update_v}     
  \end{align}                                                  
  \end{subequations} 
We update the frame state additively:
$\mathbf{F}_u(t) \leftarrow \mathbf{F}_u(t^-) + \Delta \mathbf{F}_u$,
$\mathbf{F}_v(t) \leftarrow \mathbf{F}_v(t^-) + \Delta \mathbf{F}_v$.
\begin{lemma}[Low-rank orthogonal perturbation]
\label{lem:low_rank}
Let $\mathbf{U}_v$ be defined by Eq.~\eqref{eq:householder_product}. Let $m_v := |\{i : \|\mathbf{f}_v^{(i)}\| > \varepsilon\}|$ be the number of non-degenerate Householder vectors. Then $\mathrm{rank}(\mathbf{U}_v - \mathbf{I}) \leq m_v \leq k$. Consequently, for $\mathbf{Q}_{uv} = \mathbf{U}_u^\top \mathbf{U}_v$, $\mathrm{rank}(\mathbf{Q}_{uv} - \mathbf{I}) \leq m_u + m_v \leq 2k$.
\end{lemma}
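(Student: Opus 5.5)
The plan is to prove a general subadditivity fact about ranks of deviations from the identity and then apply it twice. The key identity is, for any square matrices $\mathbf{A},\mathbf{B}$,
\[
\mathbf{A}\mathbf{B}-\mathbf{I}=(\mathbf{A}-\mathbf{I})\mathbf{B}+(\mathbf{B}-\mathbf{I}).
\]
Rank is subadditive, and multiplying by any matrix cannot increase rank, so this gives
\[
\mathrm{rank}(\mathbf{A}\mathbf{B}-\mathbf{I})\le \mathrm{rank}(\mathbf{A}-\mathbf{I})+\mathrm{rank}(\mathbf{B}-\mathbf{I}).
\]
By induction on the number of factors, $\mathrm{rank}(\mathbf{A}_k\cdots\mathbf{A}_1-\mathbf{I})\le\sum_i \mathrm{rank}(\mathbf{A}_i-\mathbf{I})$.

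Next I will apply this to the Householder product in Eq.~\eqref{eq:householder_product} with $\mathbf{A}_i=H_\varepsilon(\mathbf{f}_v^{(i)})$, which requires the rank of each factor minus the identity.
\begin{itemize}
\item If $\|\mathbf{f}_v^{(i)}\|>\varepsilon$, then $H_\varepsilon(\mathbf{f}_v^{(i)})-\mathbf{I}=-2\hat{\mathbf{f}}\hat{\mathbf{f}}^\top$, which has rank exactly $1$.
\item Otherwise $H_\varepsilon(\mathbf{f}_v^{(i)})=\mathbf{I}$ by definition, so the difference is $\mathbf{0}$ with rank $0$.
\end{itemize}
Summing over the $k$ factors gives $\mathrm{rank}(\mathbf{U}_v-\mathbf{I})\le m_v$. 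The bound $m_v\le k$ is trivial, since $m_v$ counts a subset of the $k$ rows.

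For the transport map, I write $\mathbf{Q}_{uv}=\mathbf{U}_u^\top\mathbf{U}_v$ and apply the two-factor inequality:
\[
\mathrm{rank}(\mathbf{Q}_{uv}-\mathbf{I})\le \mathrm{rank}(\mathbf{U}_u^\top-\mathbf{I})+\mathrm{rank}(\mathbf{U}_v-\mathbf{I}).
\]
Since $\mathbf{U}_u^\top-\mathbf{I}=(\mathbf{U}_u-\mathbf{I})^\top$ and transposition preserves rank, the first term is at most $m_u$. This gives the bound $m_u+m_v\le 2k$.

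There is no genuine obstacle here. The only step needing care is the product identity and the fact that right multiplication by $\mathbf{B}$ does not increase rank. Orthogonality is not actually needed for the rank bound, only for the interpretation of $\mathbf{Q}_{uv}$ as a transport.
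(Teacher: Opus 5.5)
Your proof is correct, but it takes a different route from the paper's. You argue purely algebraically. The identity $\mathbf{A}\mathbf{B}-\mathbf{I}=(\mathbf{A}-\mathbf{I})\mathbf{B}+(\mathbf{B}-\mathbf{I})$, together with subadditivity of rank, gives $\mathrm{rank}(\mathbf{A}_k\cdots\mathbf{A}_1-\mathbf{I})\le\sum_i\mathrm{rank}(\mathbf{A}_i-\mathbf{I})$. You then count rank-one factors and handle $\mathbf{U}_u^\top$ by transposition.

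The paper argues geometrically instead. It sets $S_v:=\mathrm{span}\{\mathbf{f}_v^{(i)}:\|\mathbf{f}_v^{(i)}\|>\varepsilon\}$ and observes that every reflector, hence $\mathbf{U}_v$, fixes $S_v^\perp$ pointwise. So $\ker(\mathbf{U}_v-\mathbf{I})\supseteq S_v^\perp$, which gives $\mathrm{rank}(\mathbf{U}_v-\mathbf{I})\le\dim S_v$. For the transport, it notes that $\mathbf{U}_u$ and $\mathbf{U}_v$, and therefore $\mathbf{Q}_{uv}$ (using orthogonality of $\mathbf{U}_u$), all fix $(S_u+S_v)^\perp$. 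This gives $\mathrm{rank}(\mathbf{Q}_{uv}-\mathbf{I})\le\dim(S_u+S_v)$.

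Your version is more general: it never uses orthogonality and applies verbatim to any product of low-rank perturbations of the identity. The paper's version yields the sharper intermediate bounds $\dim S_v\le m_v$ and $\dim(S_u+S_v)\le m_u+m_v$. These are strict when a node's non-degenerate Householder vectors are linearly dependent, or when $S_u\cap S_v\neq\{\mathbf{0}\}$. It also identifies explicitly where the orthogonal correction lives: $\mathbf{Q}_{uv}$ acts as the identity on $(S_u+S_v)^\perp$. This gives a concrete geometric picture of the low-rank correction rather than only a rank count.
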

The proof is deferred to Appendix~\ref{app:proof_low_rank}. Lemma~\ref{lem:low_rank} bounds the \emph{rank} of the orthogonal correction; the complementary \emph{magnitude} bound, provided by Lemma~\ref{lem:houshlip} (Eq.~\eqref{eq:smooth_lipschitz}), controls operator drift under parameter updates. Each learned transport is therefore a rank-$\le 2k$ orthogonal correction to the identity, which structurally explains why a small $k$ ($k\!=\!2$--$4$ in our experiments) suffices in practice.


\subsection{Causal Query-Time Scoring}
\label{sec:scoring}
Step~1 of Algorithm~\ref{alg:tsnn} executes the query-time prediction. To enforce strict causality, the model ranks the true destination against $K_{\text{neg}}$ negative samples using only pre-event frames and stalks, prior to any internal state update. We assemble the prediction into the optional relation-frame bias; then, the geometric-residual decoder; and finally, the EdgeBank mixture that yields the score $s_{\text{final}}$ consumed by the ranking loss.

When relation information is available, we apply a learned relation-specific perturbation to the source frame before computing transport:
\begin{equation}
    \tilde{\mathbf{F}}_u^-(r) = \mathbf{F}_u(t^-) + \sigma(\gamma_r)\,\mathbf{b}_r,
    \label{eq:rel_frame_bias}
\end{equation}
where $\mathbf{b}_r \in \RR^{k \times d}$ is a per-relation frame bias, $\gamma_r$ is a learned scale, and $\sigma$ is the sigmoid function. As our Householder parameterization maps any unconstrained matrix to a valid orthogonal group element, this perturbed frame yields a safely orthogonal relation-conditioned transport $\tilde{\mathbf{Q}}_{uv}^-(r)$. Instead of parameterizing the ranking score as a monolithic neural function, we employ a \emph{geometric-residual} formulation. Let
\begin{subequations}\label{eq:sdist}
\begin{align}
d^2(u,v,t^-) &:= \big\|\mathbf{h}_u^- - \tilde{\mathbf{Q}}_{uv}^-(r)\, \mathbf{h}_v^-\big\|^2, \label{eq:dsq}\\[-1pt]
s_{\text{geo}}(u,v,t^-) &= -\,d^2(u,v,t^-)/\kappa, \label{eq:sgeo}\\[-1pt]
s_{\text{res}}(u,v,r,t^-) &= \MLP_{\text{score}}\!\big(\boldsymbol{\psi}_{uvr}^-\big), \label{eq:sres}
\end{align}
\end{subequations}
where $\kappa > 0$ is a learnable scalar temperature, parameterized as $\kappa=\kappa_{\min}+\softplus(\rho_\kappa)$ with $\kappa_{\min}=10^{-6}$ to ensure strict positivity and numerical stability. The geometric branch $s_{\text{geo}}$ enforces a contrastive inductive bias by favoring nearby transported pairs, while the residual branch $s_{\text{res}}$ corrects for systematic deviations driven by relation type, recency, and node-type metadata. The residual branch consumes scalar invariants of the aligned pair, augmented with relation and node-type embeddings:
\begin{equation}
    \boldsymbol{\psi}_{uvr}^- =
    \big[
    s_{\text{geo}},\;
    \langle \mathbf{h}_u^-, \tilde{\mathbf{Q}}_{uv}^- \mathbf{h}_v^- \rangle,\;
    \|\mathbf{h}_u^-\|^2,\;
    \|\mathbf{h}_v^-\|^2,\;
    \delta t_u,\;
    \delta t_v,\;
    \log(1+d^2),\;
    \mathbf{e}_r,\;
    \mathbf{e}_{\tau(u)},\;
    \mathbf{e}_{\tau(v)}
    \big],
    \label{eq:score_features}
\end{equation}
where $\mathbf{e}_r$ denotes the relation embedding and $\mathbf{e}_{\tau(u)},\mathbf{e}_{\tau(v)}$ are node-type embeddings; here $\tau\!:\!\mathcal{V}\!\to\!\mathcal{T}$ is the type-indexing function over the type set $\mathcal{T}$, distinct from the temperatures $\tau_{\text{geo}}$ (Eq.~\eqref{eq:loss_geo}) and $\tau_{\mathrm{H}}$ (\S\ref{sec:sheaf_construction}). The entries $\delta t_u, \delta t_v$ are \emph{raw} time gaps since the last event at $u$ and $v$; when the dataset-specific toggle $\mathbf{1}[\phi(\delta t)\text{ in }\boldsymbol{\psi}]$ is enabled, the encoded gaps $\phi(\delta t_u),\phi(\delta t_v)$ are additionally concatenated to $\boldsymbol{\psi}_{uvr}^-$. When relation or type information is absent, the corresponding entries are omitted. By design, the distance signal is injected twice: as the global geometric logit $s_{\text{geo}}$ to anchor the overall ranking, and as the scalar feature $\log(1+d^2)$ to allow the residual branch to learn distance-dependent corrections. The full neural score is $\tilde{s}(u,v,r,t^-)=s_{\text{geo}}(u,v,t^-)+s_{\text{res}}(u,v,r,t^-)$.
By construction, both branches are basis-independent: $s_{\text{geo}}$ depends strictly on the transported squared Euclidean distance, which is invariant under any common orthogonal change of frame, while $s_{\text{res}}$ consumes exclusively scalar invariants alongside frame-free metadata embeddings.

Following the insight that memory-based heuristics are surprisingly strong \citep{poursafaei2022edgebank}, we combine the neural score with an EdgeBank score $s_{\text{EB}}(u,v,t^-)$, defined as the recency-weighted count of prior $(u,v)$ interactions in a sliding time window:
\begin{equation}
    s_{\text{final}}(u, v, r, t^-) = \alpha_r \cdot \tilde{s}(u, v, r, t^-) + (1 - \alpha_r) \cdot \beta_{\text{EB}} \cdot s_{\text{EB}}(u, v, t^-),
    \label{eq:edgebank_mixture}
\end{equation}
where $\alpha_r \in [0,1]$ is a per-relation learnable gate (sigmoid-parameterized; shared across all relations when $r$ is absent) and $\beta_{\text{EB}}$ is a global temperature. Evaluated against $K_{\text{neg}}$ negative destinations, these scores yield the ranking loss defined in \S\ref{sec:training}; that loss is committed \emph{before} any state update is allowed to proceed.

\subsection{Event Processing and State Update}
\label{sec:event_processing}
Once the ranking loss has been committed, we observe the positive event $(u,v,t,r,\mathbf{x})$ and compute the raw time gap $\delta t_w:=t-t_w^{\text{last}}$ and its encoding $\phi(\delta t_w)$ in Step~2 of Algorithm~\ref{alg:tsnn} using sinusoidal features \citep{xu2020tgat}. We first log-compress nonnegative time gaps,
$\widetilde{\delta t}
    :=
    \log\!\left(1+\max(\delta t,0)\right),$ and then apply sinusoidal features:
\begin{equation}
    \phi(\delta t)_j =
    \begin{cases}
        \sin(\omega_j \widetilde{\delta t}) & j \text{ even}, \\
        \cos(\omega_j \widetilde{\delta t}) & j \text{ odd},
    \end{cases}
    \qquad
    \omega_j = 10000^{-2\lfloor j/2 \rfloor / d_t}.
    \label{eq:time_encoding}
\end{equation}
The logarithmic compression prevents large timestamp gaps from dominating the periodic encoding while preserving ordering and recency information. Step~3 applies the frame-update rule from
\S\ref{sec:sheaf_construction}. The update
uses transported pre-event partner states to compute
$\Delta\mathbf{F}_u$ and $\Delta\mathbf{F}_v$, adds these increments to
the current frame parameters, and rebuilds the corresponding orthogonal
maps via the Householder product in \S\ref{sec:sheaf_construction}. We write
$\mathbf{U}_w^- := \mathbf{U}_w(t^-)$ and
$\mathbf{U}_w^+ := \mathbf{U}_w(t)$ for the pre- and post-update frames
of endpoint $w$. The pre-event stalks are then expressed in the new local
coordinates.

\begin{proposition}[Coordinate-consistent state carry-over]
\label{prop:carry_over}
For each endpoint $w \in \{u,v\}$, define $\bar{\mathbf{h}}_w^-
    :=
    (\mathbf{U}_w^+)^\top \mathbf{U}_w^- \mathbf{h}_w(t^-). $
Then $\mathbf{U}_w^+ \bar{\mathbf{h}}_w^-
    =
    \mathbf{U}_w^- \mathbf{h}_w(t^-).$
Moreover, $\bar{\mathbf{h}}_w^-$ is the unique vector in the
post-update local coordinates whose representation in the common
synchronized frame equals the pre-event represented state. Thus, a
change of local frame does not by itself alter the semantic content of
the stored stalk.
\end{proposition}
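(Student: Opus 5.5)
The argument rests on a single fact: every frame $\mathbf{U}_w(t)$ built by the Householder product in Eq.~\eqref{eq:householder_product} is orthogonal. This holds because each factor $H_\varepsilon(\mathbf{f})$ is either $\mathbf{I}$ or $\mathbf{I}-2\hat{\mathbf{f}}\hat{\mathbf{f}}^\top$. The latter is symmetric and satisfies $H^2=\mathbf{I}-4\hat{\mathbf{f}}\hat{\mathbf{f}}^\top+4\hat{\mathbf{f}}(\hat{\mathbf{f}}^\top\hat{\mathbf{f}})\hat{\mathbf{f}}^\top=\mathbf{I}$. A product of orthogonal matrices is orthogonal. I would state this for both the pre-update parameters $\mathbf{F}_w(t^-)$ and the post-update parameters $\mathbf{F}_w(t)=\mathbf{F}_w(t^-)+\Delta\mathbf{F}_w$. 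The additive update in Eq.~\eqref{eq:frame_update} produces an arbitrary real matrix, and the construction is orthogonal for any input, so no constraint on $\Delta\mathbf{F}_w$ is needed. Hence $(\mathbf{U}_w^+)^\top\mathbf{U}_w^+=\mathbf{U}_w^+(\mathbf{U}_w^+)^\top=\mathbf{I}$.

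For the identity, I would compute directly:
\[
\mathbf{U}_w^+\bar{\mathbf{h}}_w^-=\mathbf{U}_w^+(\mathbf{U}_w^+)^\top\mathbf{U}_w^-\mathbf{h}_w(t^-)=\mathbf{U}_w^-\mathbf{h}_w(t^-),
\]
using $\mathbf{U}_w^+(\mathbf{U}_w^+)^\top=\mathbf{I}$. Orthogonality of a square matrix gives both one-sided inverses, so this step is immediate.

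For uniqueness, I would interpret ``representation in the common synchronized frame'' as the map $\mathbf{g}\mapsto\mathbf{U}_w^+\mathbf{g}$ from post-update local coordinates to the shared frame. This matches the pre-event state $\mathbf{U}_w^-\mathbf{h}_w(t^-)$. Suppose $\mathbf{g}$ satisfies $\mathbf{U}_w^+\mathbf{g}=\mathbf{U}_w^-\mathbf{h}_w(t^-)$. Left-multiplying by $(\mathbf{U}_w^+)^\top$ and using $(\mathbf{U}_w^+)^\top\mathbf{U}_w^+=\mathbf{I}$ gives $\mathbf{g}=\bar{\mathbf{h}}_w^-$. Equivalently, $\mathbf{U}_w^+$ is injective, so any two solutions differ by an element of $\ker\mathbf{U}_w^+=\{0\}$. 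The semantic claim then follows, because the globally represented vector, and hence every inner product or distance computed after transport, is unchanged. As a side remark, $\|\bar{\mathbf{h}}_w^-\|=\|\mathbf{h}_w(t^-)\|$, since the carry-over map $(\mathbf{U}_w^+)^\top\mathbf{U}_w^-$ is itself orthogonal.

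None of the steps is technically hard. The main obstacle is being precise about the definitions rather than the algebra. First, the phrase ``common synchronized frame'' must be pinned down as the image under $\mathbf{U}_w$. Second, the $\varepsilon$-threshold must not break orthogonality. The degenerate case $H_\varepsilon=\mathbf{I}$ is still orthogonal, so the result holds for every $\varepsilon\ge0$, including the $\varepsilon=0$ analytic regime of \S\ref{sec:theory}. I would add one sentence noting this so the proposition does not depend on the operating-regime assumption about $\min\|\mathbf{f}_w^{(i)}\|$.
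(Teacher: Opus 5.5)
Your proposal is correct and matches the paper's proof: the identity follows from $\mathbf{U}_w^+(\mathbf{U}_w^+)^\top=\mathbf{I}$, and uniqueness follows by left-multiplying by $(\mathbf{U}_w^+)^\top$. Your extra check that the Householder product is orthogonal for arbitrary parameters, including the degenerate factors $H_\varepsilon=\mathbf{I}$, makes explicit a fact the paper simply asserts when it constructs the frames.
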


Step~4 computes endpoint messages in the post-update local coordinates.
Let
$\mathbf{Q}_{uv}^+ := (\mathbf{U}_u^+)^\top \mathbf{U}_v^+$
denote the post-update transport. The message to $u$ is
\begin{equation}
    \mathbf{m}_u
    =
    \MLP_{\mathrm{msg}}\!\left(
        \bar{\mathbf{h}}_u^-
        \concat
        \mathbf{Q}_{uv}^+ \bar{\mathbf{h}}_v^-
        \concat
        \mathbf{x}
        \concat
        \phi(\delta t_u)
        \concat
        \mathbf{e}_r
    \right),
    \label{eq:message}
\end{equation}
and the message to $v$ is defined symmetrically using
$\mathbf{Q}_{vu}^+ = (\mathbf{Q}_{uv}^+)^\top$. The endpoint stalks are refreshed by a GRU cell ($\mathbf{h}_v(t)$ symmetric):
\begin{equation}
    \mathbf{h}_u(t)
    =
    \GRU\!\left(\bar{\mathbf{h}}_u^-,\, \mathbf{m}_u\right),
    \label{eq:gru_update}
\end{equation}

\subsection{Sheaf Laplacian Diffusion}
\label{sec:diffusion}

Step~5 of Algorithm~\ref{alg:tsnn} runs $K$ rounds of full-active sheaf Laplacian diffusion over an event-local active graph. Let $\mathcal{N}_u^-(t),\mathcal{N}_v^-(t)$ be the multisets of the $K_{\text{nbr}}$ most recent historical neighbors of $u,v$ in $\mathcal{E}_{<t}$, with active vertex set $\mathcal{A}_t:=\{u,v\}\cup\mathrm{supp}(\mathcal{N}_u^-(t))\cup\mathrm{supp}(\mathcal{N}_v^-(t))$ ($|\mathcal{A}_t|\le 2K_{\text{nbr}}+2$). The active graph $G_t^{\mathrm{act}}$ has a symmetrized edge multiset induced by these interactions, so each temporal edge contributes bidirectional message-passing. Repeated interactions are retained, so the combinatorial sheaf Laplacian's largest eigenvalue can grow with the maximum active degree, which would force a degree-dependent step size; we therefore use a degree-normalized operator. Let $\bar d_a(t):=\max\{|\mathcal{N}_a^{\mathrm{act}}(t)|,1\}$ and $\boldsymbol{\Delta}_t:=\blkdiag((\bar d_a(t)\mathbf{I}_d)_{a\in\mathcal{A}_t})$ denote the safe block-diagonal active-degree matrix. The max convention makes
isolated active nodes well-defined and leaves them unchanged because their
neighbor sums are empty. We define
\begin{equation}
\widetilde{\mathbf{L}}_{\mathcal{F}}(t)
:=
\boldsymbol{\Delta}_t^{-1}\mathbf{L}_{\mathcal{F}}(t),
\qquad
\widetilde{\mathbf{L}}_{\mathcal{F}}^{\mathrm{sym}}(t)
:=
\boldsymbol{\Delta}_t^{-1/2}
\mathbf{L}_{\mathcal{F}}(t)
\boldsymbol{\Delta}_t^{-1/2}.
\label{eq:norm_sheaf_lap}
\end{equation}
Under our node-induced orthogonal transport construction,
$\widetilde{\mathbf{L}}_{\mathcal{F}}^{\mathrm{sym}}(t)$ is
orthogonally similar to the symmetric normalized graph Laplacian on the
active graph, so
$\sigma(\widetilde{\mathbf{L}}_{\mathcal{F}}^{\mathrm{sym}}(t))
\subseteq [0,2]$.

To allow feature-wise control over the amount of smoothing, we introduce
a learnable diagonal gain
$\mathbf{D}_\theta := \diag(d_1,\ldots,d_d)$, where
$d_j = \softplus(\theta_j) > 0$, and lift it to the active-node stacked
state as $\mathbf{P}:=\mathbf{I}_{|\mathcal{A}_t|}\otimes\mathbf{D}_\theta$.
The diagonal gain $\mathbf{D}_\theta$ lets the model learn which stalk
coordinates should be contracted more strongly during diffusion. Setting
$\mathbf{D}_\theta=\mathbf{I}_d$ recovers the normalized-only update.

For active-node ordering $\mathcal{A}_t=\{a_1,\ldots,a_m\}$, stack $\mathbf{z}^{(\ell)}:=[(\mathbf{h}_{a_1}^{(\ell)})^\top,\ldots,(\mathbf{h}_{a_m}^{(\ell)})^\top]^\top\in\mathbb{R}^{md}$, with $m=|\mathcal{A}_t|$. We perform $K$ steps of feature-scaled normalized sheaf diffusion:
\begin{equation}
    \mathbf{z}^{(\ell+1)}
    =
    \mathbf{z}^{(\ell)}
    -
    \eta\,
    \mathbf{P}\,
    \widetilde{\mathbf{L}}_{\mathcal{F}}(t)\,
    \mathbf{z}^{(\ell)},
    \qquad
    \ell = 0,\ldots,K-1.
    \label{eq:diffusion_global}
\end{equation}
Equivalently, the implemented full-active operator updates each active node $a \in \mathcal{A}_t$ by
\begin{equation}
    \mathbf{h}_a^{(\ell+1)}
    =
    \mathbf{h}_a^{(\ell)}
    -
    \frac{\eta}{\bar d_a(t)}
    \mathbf{D}_\theta
    \sum_{b \in \mathcal{N}_a^{\mathrm{act}}(t)}
    \left(
        \mathbf{h}_a^{(\ell)}
        -
        \mathbf{Q}_{ab}(t)\mathbf{h}_b^{(\ell)}
    \right).
    \label{eq:diffusion_local}
\end{equation}
Equations~\eqref{eq:diffusion_global}--\eqref{eq:diffusion_local} are the implemented event-local diffusion operator: all active stalks in $\mathcal{A}_t$ are updated, while only the endpoint frames and endpoint last-time metadata are changed by the current event. This is the same operator analyzed in \S\ref{sec:theory}. The active graph $G_t^{\mathrm{act}}$ is structurally close to a 2-star around endpoints $u,v$ since edges in $E_t^{\mathrm{act}}$ are induced from $u$-historical and $v$-historical interactions; nonetheless, Eq.~\eqref{eq:diffusion_local} jointly updates every active node, including the second-hop neighbors that share an endpoint, so the operator analyzed in \S\ref{sec:theory} is the full $|\mathcal{A}_t|d \times |\mathcal{A}_t|d$ block sheaf Laplacian rather than two independent stars. This update differs from the combinatorial sheaf diffusion step in two
ways. First, the factor
$1/\bar d_a(t)$ implements safe active-degree normalization, keeping the
normalized operator spectrum bounded independently of node degree while
handling isolated active nodes. Second, the feature-wise gain $\mathbf{D}_\theta$ allows the
model to learn anisotropic smoothing strengths across stalk coordinates.
When $\mathbf{D}_\theta=\mathbf{I}_d$, Eq.~\eqref{eq:diffusion_local}
reduces to normalized sheaf diffusion.

\subsection{Training Objective and Scalability}
\label{sec:training}
After diffusion completes, Step~6 of Algorithm~\ref{alg:tsnn} persists
the updated stalks, frames, neighbor buffers, and EdgeBank statistics to the state store, closing the per-event loop. The forward pass described
in \S\ref{sec:scoring}--\S\ref{sec:diffusion} is trained end-to-end with
the following objective. We use a cross-entropy ranking loss over $K_{\text{neg}}$ destination negatives. The sampling distribution is chosen to match each benchmark's native protocol: on Track~A (TGB~v2) we use a type-aware mixture of $50\%$ historical, $25\%$ popularity-weighted, and $25\%$ uniform negatives, consistent with the hard-negative filtered-MRR evaluation; on Track~B (DGB) we draw \emph{uniformly random} destinations only, to remain directly comparable with the random-negative AP/AUC baselines reproduced from \citet{lu2024tpnet}. Full sampler details are given in Appendix~\ref{app:neg_sampling}. The per-positive loss is
\begin{equation}
    \mathcal{L}_{\text{CE}} = -\log \frac{\exp(s_{\text{final}}(u, v^+, r, t^-))}{\exp(s_{\text{final}}(u, v^+, r, t^-)) + \sum_{j=1}^{K_{\text{neg}}} \exp(s_{\text{final}}(u, v_j^-, r, t^-))}.
    \label{eq:loss_ce}
\end{equation}
In addition to the final-score cross-entropy, we use two score-time auxiliary
regularizers that are evaluated on the same pre-event candidate set
$\mathcal{C}_e=\{v^+\}\cup\{v^-_1,\ldots,v^-_{K_{\text{neg}}}\}$.

First, we add an auxiliary geometric contrastive loss on the aligned inner products $a(u,c,r,t^-):=\langle\mathbf{h}_u^-,\tilde{\mathbf{Q}}_{uc}^-(r)\mathbf{h}_c^-\rangle$ for $c\in\mathcal{C}_e$. With temperature $\tau_{\text{geo}}>0$, this loss is
\begin{equation}
    \mathcal{L}_{\text{geo}}
    =
    -\log
    \frac{
        \exp(a(u,v^+,r,t^-)/\tau_{\text{geo}})
    }{
        \sum_{c\in\mathcal{C}_e}
        \exp(a(u,c,r,t^-)/\tau_{\text{geo}})
    } .
    \label{eq:loss_geo}
\end{equation}
This auxiliary term gives direct gradient signal to the transport geometry
even when the residual scorer or EdgeBank mixture can partially explain the
ranking.

Second, to keep the residual branch from overwhelming the geometric branch, we penalize the squared residual logits over the event candidate set; we additionally apply two sheaf-specific regularizers on post-event states:
\begin{subequations}\label{eq:loss_aux}
\begin{align}
\mathcal{L}_{\text{bias}}   &= \tfrac{1}{|\mathcal{C}_e|}\!\sum_{c\in\mathcal{C}_e}\! s_{\text{res}}(u,c,r,t^-)^2, \label{eq:loss_bias}\\[-1pt]
\mathcal{L}_{\text{smooth}} &= \sum_{e=(u,v,t)}\! \big(\|\Delta \mathbf{F}_u\|_F^2 + \|\Delta \mathbf{F}_v\|_F^2\big), \label{eq:loss_smooth}\\[-1pt]
\mathcal{L}_{\text{energy}} &= \sum_{e=(u,v,t)} \|\mathbf{h}_u^{(K)}(t) - \mathbf{Q}_{uv}^+(t) \mathbf{h}_v^{(K)}(t)\|^2. \label{eq:loss_energy}
\end{align}
\end{subequations}
Here $\Delta \mathbf{F}_u, \Delta \mathbf{F}_v$ are the Householder-vector deltas from Eq.~\eqref{eq:frame_update}. $\mathcal{L}_{\text{smooth}}$ is an $\mathcal{O}(kd)$ parameter-space regularizer that \emph{provably} upper-bounds the squared orthogonal-map discrepancy: under the operating regime $\min_{w,i}\|\mathbf{f}_w^{(i)}\|\ge\tau_{\mathrm{H}}$ (\S\ref{sec:sheaf_construction}), Lemma~\ref{lem:houshlip} gives the explicit Lipschitz bound
\begin{equation}
\big\|\mathbf{I} - \mathbf{U}_w(t)^\top \mathbf{U}_w(t^-)\big\|_F^2
\;\le\; \frac{64\,k}{\tau_{\mathrm{H}}^2}\,\|\Delta \mathbf{F}_w\|_F^2,
\label{eq:smooth_lipschitz}
\end{equation}
so each application of $\mathcal{L}_{\text{smooth}}$ controls the orthogonal-map drift with explicit constant $64k/\tau_{\mathrm{H}}^2$. $\mathcal{L}_{\text{energy}}$ is evaluated on the full active state after the $K$ post-update diffusion iterations so that the diffusion operator receives a direct gradient signal. With explicit positive weights $\lambda_{\text{geo}},\lambda_{\text{bias}},\lambda_s,\lambda_e>0$, the full training objective is $\mathcal{L} = \mathcal{L}_{\text{CE}} + \lambda_{\text{geo}}\mathcal{L}_{\text{geo}} + \lambda_{\text{bias}}\mathcal{L}_{\text{bias}} + \lambda_s\mathcal{L}_{\text{smooth}} + \lambda_e\mathcal{L}_{\text{energy}}$, so that all four auxiliary terms are weighted symmetrically in $\mathcal{L}$.

\section{Theoretical Properties}
\label{sec:theory}
We establish three operator-level guarantees at a fixed time $t$, with the
symmetrized event-local active multigraph
$\mathcal{G}_t=(\mathcal{V},\mathcal{E}_t,w_t)$ of \S\ref{sec:diffusion} and node
frames $\{\mathbf{U}_w(t)\}$ held fixed; repeated interactions contribute
multiplicity to active degrees, and isolated nodes are inert. All statements
and proofs are formalized in Appendix~\ref{app:proofs}.

The node-induced transport
$\mathbf{Q}_{uv}(t)=\mathbf{U}_u(t)^{\!\top}\mathbf{U}_v(t)$ has trivial
holonomy: along any walk $u_0\!\to\!\cdots\!\to\!u_n$, the product telescopes
to $\prod_{i=1}^{n}\mathbf{Q}_{u_{i-1}u_i}(t)=\mathbf{U}_{u_0}(t)^{\!\top}\mathbf{U}_{u_n}(t)$,
so aligned inner products equal those of synchronized coordinates
$\tilde{\mathbf{x}}_w=\mathbf{U}_w(t)\mathbf{x}_w$ (Lemma~\ref{lem:flat_transport},
Proposition~\ref{prop:sync_equiv}). The normalized sheaf Laplacian therefore
admits the lifted orthogonal similarity
$\widetilde{\mathbf{L}}_{\mathcal{F}}^{\mathrm{sym}}(t)=\mathbf{S}^{\!\top}\!\left((\boldsymbol{\Pi}_{G_t}-\mathbf{D}_t^{-1/2}\mathbf{A}_t\mathbf{D}_t^{-1/2})\otimes\mathbf{I}_d\right)\!\mathbf{S}$,
where $\boldsymbol{\Pi}_{G_t}$ zeros isolated active vertices under our safe-degree convention,
yielding $\sigma(\widetilde{\mathbf{L}}_{\mathcal{F}}^{\mathrm{sym}}(t))\subseteq[0,2]$
and $\rho(\widetilde{\mathbf{L}}_{\mathcal{F}}(t))\le2$, uniformly in
$\Delta_{\max}(\mathcal{G}_t)$ (Corollary~\ref{cor:spectral_equivalence}).

For the stacked active state $\mathbf{z}$, let
$\mathcal{E}_t(\mathbf{z})=\tfrac12\mathbf{z}^{\top}\mathbf{L}_{\mathcal{F}}(t)\mathbf{z}$ be the combinatorial sheaf Dirichlet energy and
$\mathbf{M}_t:=\mathbf{D}_G\otimes\mathbf{D}_\theta^{-1}$ the degree/feature metric. The full active-graph update in Eq.~\eqref{eq:diffusion_global} is the $\mathbf{M}_t$-gradient step
$\mathbf{z}^{+}=\mathbf{z}-\eta\,\mathrm{grad}_{\mathbf{M}_t}\mathcal{E}_t(\mathbf{z})$ and satisfies
\[
\mathcal{E}_t(\mathbf{z}^{+})\le
\mathcal{E}_t(\mathbf{z})-
\eta\!\left(1-\tfrac{\eta}{2}\lambda_{\max}(\mathbf{B}_t)\right)
\|\mathbf{L}_{\mathcal{F}}(t)\mathbf{z}\|_{\mathbf{M}_t^{-1}}^2,
\quad
\mathbf{B}_t:=\mathbf{M}_t^{-1/2}\mathbf{L}_{\mathcal{F}}(t)\mathbf{M}_t^{-1/2}.
\]
Because $\lambda_{\max}(\mathbf{B}_t)\le2\lambda_{\max}(\mathbf{D}_\theta)$, the condition
$0<\eta\le1/\lambda_{\max}(\mathbf{D}_\theta)$ gives degree-free monotone descent and non-expansiveness in the $\mathbf{M}_t$-norm for the implemented full-active operator (Theorem~\ref{thm:feature_scaled}). Firm non-expansiveness is claimed only for the normalized-only case $\mathbf{D}_\theta=\mathbf{I}_d$ with $0<\eta\le\tfrac12$ (Theorem~\ref{thm:nonexpansive}).

At inference, replacing the reference frame $\hat{\mathbf{U}}_w(t)$ (e.g., the always-refreshed $\mathbf{U}_w(t)$) by a stored, possibly stale frame $\tilde{\mathbf{U}}_w(t)$ perturbs the operator linearly in
$\delta_t\!:=\!\max_{w}\|\tilde{\mathbf{U}}_w(t)-\hat{\mathbf{U}}_w(t)\|_2$:
$\|\widetilde{\mathbf{L}}_{\mathcal{F}}^{\mathrm{sym},\mathrm{stored}}(t)-\widetilde{\mathbf{L}}_{\mathcal{F}}^{\mathrm{sym}}(t)\|_2\le 2\delta_t$,
inflating the update Lipschitz constant by at most
$2\eta\lambda_{\max}(\mathbf{D}_\theta)\delta_t$ (Theorem~\ref{thm:stale}).
Lemma~\ref{lem:houshlip} pins this drift to the penalty,
$\delta_t^2\lesssim \tau_{\mathrm{H}}^{-2}\,\mathcal{L}_{\text{smooth}}$, so optimizing
$\mathcal{L}_{\text{smooth}}$ directly tightens the stability constant.

  Trivial holonomy at frozen time makes the diffusion spectrally equivalent to ordinary normalized graph diffusion by design, supplying stable coordinate-consistent smoothing on $\mathcal{G}_t$; \tsnn{}'s    
  expressive gain comes from time-varying frames $\mathbf{U}_w(t)$, exact carry-over $\mathbf{h}_v^-\!\mapsto\!(\mathbf{U}_v^+)^\top\mathbf{U}_v^-\,\mathbf{h}_v^-$, and nonlinear event updates in local   
  coordinates.                                   
\section{Experiments}
\label{sec:experiments}
We evaluate \tsnn{} along two complementary tracks. Track~A targets the Temporal Graph Benchmark under filtered MRR with hard negatives; Track~B targets the DGB benchmarks under AP/AUC with random negatives.

\subsection{Empirical Results}
\label{sec:empirical_results}
\emph{Track~A (TGB)} \citep{huang2023tgb, gastinger2024tgb2} comprises two link-prediction datasets (tgbl-wiki and tgbl-review) and a temporal heterogeneous graph (thgl-software). For this track, we compare against 15 official TGB leaderboard baselines (Table~\ref{tab:tgb_results}). Evaluation strictly follows the official TGB protocol \citep{huang2023tgb}: chronological train/val/test splits (70/15/15), filtered MRR against TGB-provided hard negative destinations, and strict no-future-leakage constraints on the splits.

\begin{table*}[htbp]
\caption{Test MRR (\%) on TGB~v2, reported as mean~$\pm$~std over 5 seeds. Baselines are from the official TGB~v2 leaderboard (\href{https://tgb.complexdatalab.com}{tgb.complexdatalab.com}, accessed May~2026). Best in \textbf{bold}, second best \underline{underlined}; {---}: unavailable on the official leaderboard. Abbreviations: EB$_{\infty}$/EB$_{tw}$~=~EdgeBank (unlimited / time window), Heur~=~Heuristic (LocalGlobal), TGN$_{et}$~=~TGN (edge type), GMixer~=~GraphMixer, DyGFmr~=~DyGFormer, DyGM~=~DyGMamba, HypEv~=~HyperEvent.}
\label{tab:tgb_results}
\centering
\tiny
\setlength{\tabcolsep}{1.2pt}
\resizebox{\textwidth}{!}{%
\begin{tabular}{lcccccccccccccccc}
\toprule
\textbf{Dataset} & \textbf{EB$_{\infty}$} & \textbf{EB$_{tw}$} & \textbf{Heur} & \textbf{TGN} & \textbf{TGN$_{et}$} & \textbf{CAWN} & \textbf{GMixer} & \textbf{NAT} & \textbf{TNCN} & \textbf{DyGFmr} & \textbf{DyGM} & \textbf{CTAN} & \textbf{STHN} & \textbf{HypEv} & \textbf{TPNet} & \textbf{\tsnn{} (Ours)} \\
\midrule
wiki-v2 & 49.50 & 57.10 & 82.10 & 39.60{\scriptsize$\pm$6.00} & {---} & 71.10{\scriptsize$\pm$0.60} & 11.80{\scriptsize$\pm$0.20} & 74.90{\scriptsize$\pm$1.00} & 71.80{\scriptsize$\pm$0.10} & 79.80{\scriptsize$\pm$0.40} & 73.90{\scriptsize$\pm$0.90} & 66.80{\scriptsize$\pm$0.70} & {---} & 81.00{\scriptsize$\pm$0.20} & \underline{82.70{\scriptsize$\pm$0.10}} & \textbf{82.82{\scriptsize$\pm$0.05}} \\
review-v2 & 2.30 & 2.50 & {---} & 34.90{\scriptsize$\pm$2.00} & {---} & 19.30{\scriptsize$\pm$0.10} & \textbf{52.10{\scriptsize$\pm$1.50}} & 34.10{\scriptsize$\pm$2.00} & 37.70{\scriptsize$\pm$1.00} & 22.40{\scriptsize$\pm$1.50} & {---} & 40.50{\scriptsize$\pm$0.40} & {---} & {---} & {---} & \underline{50.28{\scriptsize$\pm$0.22}} \\
thgl-software & 44.90 & 28.80 & {---} & 32.40{\scriptsize$\pm$1.70} & 42.40{\scriptsize$\pm$1.30} & {---} & {---} & {---} & {---} & {---} & {---} & {---} & \textbf{73.10{\scriptsize$\pm$0.50}} & {---} & {---} & \underline{71.13{\scriptsize$\pm$0.36}} \\
\bottomrule
\end{tabular}%
}
\end{table*}

\emph{Track~B (DGB)} \citep{yu2023dygformer} evaluates broad transductive generalization across 13 standard benchmarks: Wikipedia, MOOC, Reddit, LastFM, UCI, Enron, Social~Evo., Can.~Parl., Flights, US~Legis., UN~Vote, UN~Trade, and Contact. We adopt the standard DGB transductive protocol \citep{yu2023dygformer, lu2024tpnet}, utilizing chronological 70/15/15 splits and reporting Average Precision (AP) and Area Under the Curve (AUC) with one random negative per positive at test time. Baseline results for this track are sourced directly from \citet{lu2024tpnet} (Table~\ref{tab:dyglib}).

\begin{table*}[htbp]
\caption{Transductive temporal link prediction on the 13 DGB datasets. AP and AUC (\%) are reported as mean~$\pm$~std over 5 seeds. Best in \textbf{bold}, second best \underline{underlined}. Baseline values are directly taken from \citet{lu2024tpnet}. Avg.~Rank is the mean rank across the 13 datasets.}
\label{tab:dyglib}
\centering
\tiny
\setlength{\tabcolsep}{1.5pt}
\resizebox{\textwidth}{!}{%
\begin{tabular}{llccccccccccccc}
\toprule
\textbf{Metric} & \textbf{Dataset} & \textbf{JODIE} & \textbf{DyRep} & \textbf{TGAT} & \textbf{TGN} & \textbf{CAWN} & \textbf{EdgeBank} & \textbf{TCL} & \textbf{GraphMixer} & \textbf{NAT} & \textbf{PINT} & \textbf{DyGFormer} & \textbf{TPNet} & \textbf{\tsnn{} (Ours)} \\
\midrule
\multirow{14}{*}{\rotatebox[origin=c]{90}{\textbf{AP}}}
 & Wikipedia   & 96.50{\scriptsize$\pm$0.14} & 94.86{\scriptsize$\pm$0.06} & 96.94{\scriptsize$\pm$0.06} & 98.45{\scriptsize$\pm$0.06} & 98.76{\scriptsize$\pm$0.03} & 90.37{\scriptsize$\pm$0.00} & 96.47{\scriptsize$\pm$0.16} & 97.25{\scriptsize$\pm$0.03} & 98.03{\scriptsize$\pm$0.07} & 98.45{\scriptsize$\pm$0.04} & 99.03{\scriptsize$\pm$0.02} & \underline{99.32{\scriptsize$\pm$0.03}} & \textbf{99.41{\scriptsize$\pm$0.02}} \\
 & Reddit      & 98.31{\scriptsize$\pm$0.14} & 98.22{\scriptsize$\pm$0.04} & 98.52{\scriptsize$\pm$0.02} & 98.63{\scriptsize$\pm$0.06} & 99.11{\scriptsize$\pm$0.01} & 94.86{\scriptsize$\pm$0.00} & 97.53{\scriptsize$\pm$0.02} & 97.31{\scriptsize$\pm$0.01} & 99.13{\scriptsize$\pm$0.10} & 99.15{\scriptsize$\pm$0.02} & 99.22{\scriptsize$\pm$0.01} & \underline{99.27{\scriptsize$\pm$0.00}} & \textbf{99.66{\scriptsize$\pm$0.07}} \\
 & MOOC        & 80.23{\scriptsize$\pm$2.44} & 81.97{\scriptsize$\pm$0.49} & 85.84{\scriptsize$\pm$0.15} & 89.15{\scriptsize$\pm$1.60} & 80.15{\scriptsize$\pm$0.25} & 57.97{\scriptsize$\pm$0.00} & 82.38{\scriptsize$\pm$0.24} & 82.78{\scriptsize$\pm$0.15} & 85.88{\scriptsize$\pm$0.55} & 88.08{\scriptsize$\pm$0.86} & 87.52{\scriptsize$\pm$0.49} & \underline{96.39{\scriptsize$\pm$0.09}} & \textbf{99.36{\scriptsize$\pm$0.03}} \\
 & LastFM      & 70.85{\scriptsize$\pm$2.13} & 71.92{\scriptsize$\pm$2.21} & 73.42{\scriptsize$\pm$0.21} & 77.07{\scriptsize$\pm$3.97} & 86.99{\scriptsize$\pm$0.06} & 79.29{\scriptsize$\pm$0.00} & 67.27{\scriptsize$\pm$2.16} & 75.61{\scriptsize$\pm$0.24} & 88.02{\scriptsize$\pm$1.94} & 89.66{\scriptsize$\pm$1.81} & 93.00{\scriptsize$\pm$0.12} & \underline{94.50{\scriptsize$\pm$0.08}} & \textbf{94.51{\scriptsize$\pm$0.03}} \\
 & Enron       & 84.77{\scriptsize$\pm$0.30} & 82.38{\scriptsize$\pm$3.36} & 71.12{\scriptsize$\pm$0.97} & 86.53{\scriptsize$\pm$1.11} & 89.56{\scriptsize$\pm$0.09} & 83.53{\scriptsize$\pm$0.00} & 79.70{\scriptsize$\pm$0.71} & 82.25{\scriptsize$\pm$0.16} & 90.60{\scriptsize$\pm$0.66} & 92.20{\scriptsize$\pm$0.15} & 92.47{\scriptsize$\pm$0.12} & \underline{92.90{\scriptsize$\pm$0.17}} & \textbf{94.81{\scriptsize$\pm$0.19}} \\
 & Social Evo. & 89.89{\scriptsize$\pm$0.55} & 88.87{\scriptsize$\pm$0.30} & 93.16{\scriptsize$\pm$0.17} & 93.57{\scriptsize$\pm$0.17} & 84.96{\scriptsize$\pm$0.09} & 74.95{\scriptsize$\pm$0.00} & 93.13{\scriptsize$\pm$0.16} & 93.37{\scriptsize$\pm$0.07} & 88.92{\scriptsize$\pm$3.45} & 94.42{\scriptsize$\pm$0.03} & 94.73{\scriptsize$\pm$0.01} & \underline{94.73{\scriptsize$\pm$0.02}} & \textbf{95.68{\scriptsize$\pm$0.14}} \\
 & UCI         & 89.43{\scriptsize$\pm$1.09} & 65.14{\scriptsize$\pm$2.30} & 79.63{\scriptsize$\pm$0.70} & 92.34{\scriptsize$\pm$1.04} & 95.18{\scriptsize$\pm$0.06} & 76.20{\scriptsize$\pm$0.00} & 89.57{\scriptsize$\pm$1.63} & 93.25{\scriptsize$\pm$0.57} & 93.40{\scriptsize$\pm$0.26} & 96.45{\scriptsize$\pm$0.11} & 95.79{\scriptsize$\pm$0.17} & \textbf{97.35{\scriptsize$\pm$0.04}} &  \underline{97.06{\scriptsize$\pm$0.02}} \\
 & Flights     & 95.60{\scriptsize$\pm$1.73} & 95.29{\scriptsize$\pm$0.72} & 94.03{\scriptsize$\pm$0.18} & 97.95{\scriptsize$\pm$0.14} & 98.51{\scriptsize$\pm$0.01} & 89.35{\scriptsize$\pm$0.00} & 91.23{\scriptsize$\pm$0.02} & 90.99{\scriptsize$\pm$0.05} & 98.57{\scriptsize$\pm$0.12} & 98.80{\scriptsize$\pm$0.02} & \underline{98.91{\scriptsize$\pm$0.01}} & \textbf{98.93{\scriptsize$\pm$0.02}} & 98.83{\scriptsize$\pm$0.01} \\
 & Can.~Parl.  & 69.26{\scriptsize$\pm$0.31} & 66.54{\scriptsize$\pm$2.76} & 70.73{\scriptsize$\pm$0.72} & 70.88{\scriptsize$\pm$2.34} & 69.82{\scriptsize$\pm$2.34} & 64.55{\scriptsize$\pm$0.00} & 68.67{\scriptsize$\pm$2.67} & 77.04{\scriptsize$\pm$0.46} & 79.72{\scriptsize$\pm$1.76} & 68.36{\scriptsize$\pm$1.43} & \underline{97.36{\scriptsize$\pm$0.45}} & 90.28{\scriptsize$\pm$0.37} & \textbf{97.89{\scriptsize$\pm$0.04}} \\
 & US~Legis.   & 75.05{\scriptsize$\pm$1.52} & 75.34{\scriptsize$\pm$0.39} & 68.52{\scriptsize$\pm$3.16} & 75.99{\scriptsize$\pm$0.58} & 70.58{\scriptsize$\pm$0.48} & 58.39{\scriptsize$\pm$0.00} & 69.59{\scriptsize$\pm$0.48} & 70.74{\scriptsize$\pm$1.02} & 78.71{\scriptsize$\pm$0.87} & 74.85{\scriptsize$\pm$0.97} & 71.11{\scriptsize$\pm$0.59} & \underline{80.58{\scriptsize$\pm$0.23}} & \textbf{87.08{\scriptsize$\pm$0.02}} \\
 & UN~Trade    & 64.94{\scriptsize$\pm$0.31} & 63.21{\scriptsize$\pm$0.93} & 61.47{\scriptsize$\pm$0.18} & 65.03{\scriptsize$\pm$1.37} & 65.39{\scriptsize$\pm$0.12} & 60.41{\scriptsize$\pm$0.00} & 62.21{\scriptsize$\pm$0.03} & 62.61{\scriptsize$\pm$0.27} & 73.95{\scriptsize$\pm$1.16} & 70.20{\scriptsize$\pm$0.58} & 66.46{\scriptsize$\pm$1.29} & \underline{87.24{\scriptsize$\pm$0.65}} & \textbf{87.26{\scriptsize$\pm$0.05}}\\
 & UN~Vote     & 63.91{\scriptsize$\pm$0.81} & 62.81{\scriptsize$\pm$0.80} & 52.21{\scriptsize$\pm$0.98} & 65.72{\scriptsize$\pm$2.17} & 52.84{\scriptsize$\pm$0.10} & 58.49{\scriptsize$\pm$0.00} & 51.90{\scriptsize$\pm$0.30} & 52.11{\scriptsize$\pm$0.16} & 70.45{\scriptsize$\pm$0.68} & 66.25{\scriptsize$\pm$0.78} & 55.55{\scriptsize$\pm$0.42} & \underline{75.12{\scriptsize$\pm$0.29}} & \textbf{75.69{\scriptsize$\pm$0.25}} \\
 & Contact     & 95.31{\scriptsize$\pm$1.33} & 95.98{\scriptsize$\pm$0.15} & 96.28{\scriptsize$\pm$0.09} & 96.89{\scriptsize$\pm$0.56} & 90.26{\scriptsize$\pm$0.28} & 92.58{\scriptsize$\pm$0.00} & 92.44{\scriptsize$\pm$0.12} & 91.92{\scriptsize$\pm$0.03} & 97.39{\scriptsize$\pm$0.22} & 98.64{\scriptsize$\pm$0.02} & 98.29{\scriptsize$\pm$0.01} & \underline{98.66{\scriptsize$\pm$0.01}} & \textbf{99.04{\scriptsize$\pm$0.01}} \\
\cmidrule(lr){2-15}
  & \textit{Avg.~Rank} 
& 8.85 & 9.77 & 9.54 & 6.04 & 8.00 & 11.54 & 10.77 & 9.31 & 5.15 & 4.73 & 4.12 & \underline{1.96} & \textbf{1.23} \\
\midrule
\multirow{14}{*}{\rotatebox[origin=c]{90}{\textbf{AUC}}}
 & Wikipedia   & 96.33{\scriptsize$\pm$0.07} & 94.37{\scriptsize$\pm$0.09} & 96.67{\scriptsize$\pm$0.07} & 98.37{\scriptsize$\pm$0.07} & 98.54{\scriptsize$\pm$0.04} & 90.78{\scriptsize$\pm$0.00} & 95.84{\scriptsize$\pm$0.18} & 96.92{\scriptsize$\pm$0.03} & 97.75{\scriptsize$\pm$0.11} & 98.16{\scriptsize$\pm$0.06} & 98.91{\scriptsize$\pm$0.02} & \underline{99.30{\scriptsize$\pm$0.02}} & \textbf{99.31{\scriptsize$\pm$0.02}} \\
 & Reddit      & 98.31{\scriptsize$\pm$0.05} & 98.17{\scriptsize$\pm$0.05} & 98.47{\scriptsize$\pm$0.02} & 98.60{\scriptsize$\pm$0.06} & 99.01{\scriptsize$\pm$0.01} & 95.37{\scriptsize$\pm$0.00} & 97.42{\scriptsize$\pm$0.02} & 97.17{\scriptsize$\pm$0.02} & 99.09{\scriptsize$\pm$0.10} & 99.09{\scriptsize$\pm$0.03} & 99.15{\scriptsize$\pm$0.01} & \underline{99.22{\scriptsize$\pm$0.00}} & \textbf{99.59{\scriptsize$\pm$0.10}} \\
 & MOOC        & 83.81{\scriptsize$\pm$2.09} & 85.03{\scriptsize$\pm$0.58} & 87.11{\scriptsize$\pm$0.19} & 91.21{\scriptsize$\pm$1.15} & 80.38{\scriptsize$\pm$0.26} & 60.86{\scriptsize$\pm$0.00} & 83.12{\scriptsize$\pm$0.18} & 84.01{\scriptsize$\pm$0.17} & 87.42{\scriptsize$\pm$0.58} & 90.55{\scriptsize$\pm$0.43} & 87.91{\scriptsize$\pm$0.58} & \underline{97.17{\scriptsize$\pm$0.08}} & \textbf{99.49{\scriptsize$\pm$0.01}} \\
 & LastFM      & 70.49{\scriptsize$\pm$1.66} & 71.16{\scriptsize$\pm$1.89} & 71.59{\scriptsize$\pm$0.18} & 78.47{\scriptsize$\pm$2.94} & 85.92{\scriptsize$\pm$0.10} & 83.77{\scriptsize$\pm$0.00} & 64.06{\scriptsize$\pm$1.16} & 73.53{\scriptsize$\pm$0.12} & 86.92{\scriptsize$\pm$2.72} & 89.28{\scriptsize$\pm$1.63} & 93.05{\scriptsize$\pm$0.10} & \underline{94.39{\scriptsize$\pm$0.04}} & \textbf{95.04{\scriptsize$\pm$0.13}} \\
 & Enron       & 87.96{\scriptsize$\pm$0.52} & 84.89{\scriptsize$\pm$3.00} & 68.89{\scriptsize$\pm$1.10} & 88.32{\scriptsize$\pm$0.99} & 90.45{\scriptsize$\pm$0.14} & 87.05{\scriptsize$\pm$0.00} & 75.74{\scriptsize$\pm$0.72} & 84.38{\scriptsize$\pm$0.21} & 91.68{\scriptsize$\pm$0.83} & 92.87{\scriptsize$\pm$0.34} & 93.33{\scriptsize$\pm$0.13} & \underline{93.98{\scriptsize$\pm$0.26}} & \textbf{95.72{\scriptsize$\pm$0.17}} \\
 & Social Evo. & 92.05{\scriptsize$\pm$0.46} & 90.76{\scriptsize$\pm$0.21} & 94.76{\scriptsize$\pm$0.16} & 95.39{\scriptsize$\pm$0.17} & 87.34{\scriptsize$\pm$0.08} & 81.60{\scriptsize$\pm$0.00} & 94.84{\scriptsize$\pm$0.17} & 95.23{\scriptsize$\pm$0.07} & 90.84{\scriptsize$\pm$3.72} & 96.16{\scriptsize$\pm$0.02} & 96.30{\scriptsize$\pm$0.01} & \underline{96.43{\scriptsize$\pm$0.02}} & \textbf{96.88{\scriptsize$\pm$0.03}} \\
 & UCI         & 90.44{\scriptsize$\pm$0.49} & 68.77{\scriptsize$\pm$2.34} & 78.53{\scriptsize$\pm$0.74} & 92.03{\scriptsize$\pm$1.13} & 93.87{\scriptsize$\pm$0.08} & 77.30{\scriptsize$\pm$0.00} & 87.82{\scriptsize$\pm$1.36} & 91.81{\scriptsize$\pm$0.67} & 92.31{\scriptsize$\pm$0.37} & 95.57{\scriptsize$\pm$0.16} & 94.49{\scriptsize$\pm$0.26} & \textbf{96.79{\scriptsize$\pm$0.05}} & \underline{96.23{\scriptsize$\pm$0.05}} \\
 & Flights     & 96.21{\scriptsize$\pm$1.42} & 95.95{\scriptsize$\pm$0.62} & 94.13{\scriptsize$\pm$0.17} & 98.22{\scriptsize$\pm$0.13} & 98.45{\scriptsize$\pm$0.01} & 90.23{\scriptsize$\pm$0.00} & 91.21{\scriptsize$\pm$0.02} & 91.13{\scriptsize$\pm$0.01} & 98.69{\scriptsize$\pm$0.10} & 98.89{\scriptsize$\pm$0.02} & \underline{98.93{\scriptsize$\pm$0.01}} & \textbf{99.00{\scriptsize$\pm$0.02}} & 98.91{\scriptsize$\pm$0.03} \\
 & Can.~Parl.  & 78.21{\scriptsize$\pm$0.23} & 73.35{\scriptsize$\pm$3.67} & 75.69{\scriptsize$\pm$0.78} & 76.99{\scriptsize$\pm$1.80} & 75.70{\scriptsize$\pm$3.27} & 64.14{\scriptsize$\pm$0.00} & 72.46{\scriptsize$\pm$3.23} & 83.17{\scriptsize$\pm$0.53} & 84.04{\scriptsize$\pm$1.13} & 77.96{\scriptsize$\pm$1.46} & \underline{97.76{\scriptsize$\pm$0.41}} & 92.05{\scriptsize$\pm$0.34} & \textbf{97.80{\scriptsize$\pm$0.03}} \\
 & US~Legis.   & 82.85{\scriptsize$\pm$1.07} & 82.28{\scriptsize$\pm$0.32} & 75.84{\scriptsize$\pm$1.99} & 83.34{\scriptsize$\pm$0.43} & 77.16{\scriptsize$\pm$0.39} & 62.57{\scriptsize$\pm$0.00} & 76.27{\scriptsize$\pm$0.63} & 76.96{\scriptsize$\pm$0.79} & 85.36{\scriptsize$\pm$0.52} & 82.10{\scriptsize$\pm$0.85} & 77.90{\scriptsize$\pm$0.58} & \underline{86.49{\scriptsize$\pm$0.18}} & \textbf{92.10{\scriptsize$\pm$0.05}} \\
 & UN~Trade    & 69.62{\scriptsize$\pm$0.44} & 67.44{\scriptsize$\pm$0.83} & 64.01{\scriptsize$\pm$0.12} & 69.10{\scriptsize$\pm$1.67} & 68.54{\scriptsize$\pm$0.18} & 66.75{\scriptsize$\pm$0.00} & 64.72{\scriptsize$\pm$0.05} & 65.52{\scriptsize$\pm$0.51} & 77.61{\scriptsize$\pm$1.36} & 74.87{\scriptsize$\pm$0.53} & 70.20{\scriptsize$\pm$1.44} & \underline{89.17{\scriptsize$\pm$0.46}} & \textbf{89.36{\scriptsize$\pm$0.17}} \\
 & UN~Vote     & 68.53{\scriptsize$\pm$0.95} & 67.18{\scriptsize$\pm$1.04} & 52.83{\scriptsize$\pm$1.12} & 69.71{\scriptsize$\pm$2.65} & 53.09{\scriptsize$\pm$0.22} & 62.97{\scriptsize$\pm$0.00} & 51.88{\scriptsize$\pm$0.36} & 52.46{\scriptsize$\pm$0.27} & 75.32{\scriptsize$\pm$0.63} & 70.69{\scriptsize$\pm$1.02} & 57.12{\scriptsize$\pm$0.62} & \underline{79.88{\scriptsize$\pm$0.30}} & \textbf{81.34{\scriptsize$\pm$0.15}} \\
 & Contact     & 96.66{\scriptsize$\pm$0.89} & 96.48{\scriptsize$\pm$0.14} & 96.95{\scriptsize$\pm$0.08} & 97.54{\scriptsize$\pm$0.35} & 89.99{\scriptsize$\pm$0.34} & 94.34{\scriptsize$\pm$0.00} & 94.15{\scriptsize$\pm$0.09} & 93.94{\scriptsize$\pm$0.02} & 97.79{\scriptsize$\pm$0.16} & 98.90{\scriptsize$\pm$0.02} & 98.53{\scriptsize$\pm$0.01} & \underline{98.91{\scriptsize$\pm$0.01}} & \textbf{99.17{\scriptsize$\pm$0.01}} \\
\cmidrule(lr){2-15}
  & \textit{Avg.~Rank} 
& 8.15 & 9.69 & 9.92 & 6.08 & 8.15 & 11.31 & 11.15 & 9.62 & 5.12 & 4.50 & 4.15 & \underline{1.92} & \textbf{1.23} \\
\bottomrule
\end{tabular}%
}
\end{table*}

\tsnn{} establishes a new state of the art on the DGB suite, attaining the lowest average rank across both AP and AUC and delivering the largest absolute gains on the most challenging benchmarks (e.g., $+2.97$ AP on MOOC and $+6.50$ AP on US~Legis.\ over TPNet), while also surpassing DyGFormer on Can.~Parl.\ by $+0.53$ AP. Following \citet{demsar2006}, a Wilcoxon signed-rank test on the 13 dataset-level scores confirms that \tsnn{} significantly outperforms TPNet on both AP and AUC ($p<0.01$), and a Friedman test across all 13 methods rejects the null of equal performance, with pairwise post-hoc comparisons consistent with the average-rank ordering.

The per-event cost is $\mathcal{O}\bigl((K_{\text{cand}} + K\,K_{\text{nbr}})\,k\,d + d^2\bigr)$ ($K_{\text{cand}}=K_{\text{neg}}+1$ candidates per event) via $\mathcal{O}(kd)$ Householder transports, yielding per-epoch wall-clock that scales near-linearly in $|\mathcal{E}|$ and peak VRAM independent of $|\mathcal{V}|$; full per-event, per-epoch, and per-pool memory derivations are deferred to Appendix~\ref{app:complexity}.

All reported experiments were conducted on a single NVIDIA RTX PRO 6000 Blackwell Workstation Edition (96\,GB GDDR7 with ECC). Dataset statistics for both tracks and the hyperparameter settings used in our experiments are described in Appendix~\ref{app:hyperparams}.

\subsection{Ablation Study}
\label{sec:ablations}
We ablate \tsnn{} on USLegis, CanParl, and UCI. For core ablations, EdgeBank is disabled, giving \textsc{TSNN-core}; each variant measures the learned geometric mechanism rather than heuristic recurrence. We test six targeted removals: the residual scorer $s_{\mathrm{res}}$, orthogonal transport ($\mathbf{Q}_{uv}=\mathbf{I}$), frame updates ($\Delta\mathbf{F}=0$), coordinate carry-over, frame smoothness, and sheaf diffusion ($K=0$). Table~\ref{tab:core-ablation} shows that the residual scorer is the dominant component, giving the largest AP drop on all datasets. Removing transport and fixing frames also consistently degrade performance, validating the need for explicit alignment and time-varying local frames. Carry-over, smoothness, and diffusion give smaller gains, supporting their roles as state-consistency, frame-stability, and local-refinement mechanisms. Overall, the ablations identify three primary empirical drivers of \tsnn{}: residual correction over transported geometric scores, explicit orthogonal transport, and time-varying node-local frames.

\begin{table}[htbp]
\centering
\scriptsize
\setlength{\tabcolsep}{2pt}
\renewcommand{\arraystretch}{0.9}
\caption{Core architectural ablations. Entries are test AP (\%), reported as mean $\pm$ std over 3 seeds; parentheses show absolute change in percentage points from TSNN-core.}
\label{tab:core-ablation}
\resizebox{\textwidth}{!}{
\begin{tabular}{lccccccc}
\toprule
Dataset & TSNN-core & w/o residual scorer & w/o transport & Fixed frames & w/o carryover & w/o smooth & w/o diffusion \\
\midrule
USLegis & 81.90 $\pm$ 0.07 & 67.11 $\pm$ 0.49 {\scriptsize ($-14.79$)} & 74.86 $\pm$ 0.22 {\scriptsize ($-7.04$)} & 78.92 $\pm$ 0.10 {\scriptsize ($-2.98$)} & 80.64 $\pm$ 0.08 {\scriptsize ($-1.26$)} & 80.21 $\pm$ 0.06 {\scriptsize ($-1.69$)} & 80.45 $\pm$ 0.11 {\scriptsize ($-1.45$)} \\
CanParl & 97.44 $\pm$ 0.23 & 80.23 $\pm$ 0.35 {\scriptsize ($-17.21$)} & 95.25 $\pm$ 0.10 {\scriptsize ($-2.19$)} & 95.19 $\pm$ 0.14 {\scriptsize ($-2.25$)} & 96.97 $\pm$ 0.05 {\scriptsize ($-0.47$)} & 94.70 $\pm$ 0.19 {\scriptsize ($-2.74$)} & 97.01 $\pm$ 0.09 {\scriptsize ($-0.43$)} \\
UCI     & 95.05 $\pm$ 0.05 & 76.26 $\pm$ 0.51 {\scriptsize ($-18.79$)} & 94.45 $\pm$ 0.07 {\scriptsize ($-0.60$)} & 93.84 $\pm$ 0.12 {\scriptsize ($-1.21$)} & 94.56 $\pm$ 0.03 {\scriptsize ($-0.49$)} & 94.78 $\pm$ 0.08 {\scriptsize ($-0.27$)} & 94.42 $\pm$ 0.10 {\scriptsize ($-0.63$)} \\
\bottomrule
\end{tabular}
}
\end{table}

\section{Conclusion}
\label{sec:conclusion}
We introduced \tsnn{}, a causal temporal link-prediction model that supplants the single shared embedding space with time-varying node-local orthogonal frames, explicit transport, coordinate-consistent state evolution, and basis-independent decoding. The node-induced transport $\mathbf{Q}_{uv}(t)=\mathbf{U}_u(t)^\top\mathbf{U}_v(t)$ is flat: its trivial holonomy makes the symmetric degree-normalized sheaf Laplacian orthogonally similar to the symmetric normalized graph Laplacian, with the random-walk form similar in the degree metric. The full-active feature-scaled sheaf diffusion operator is therefore an explicit metric-gradient step on the combinatorial sheaf Dirichlet energy, with monotone descent and non-expansiveness under $\eta \le 1/\lambda_{\max}(\mathbf{D}_\theta)$. Empirical gains arise not from a richer diffusion spectrum but from dynamic local frames, exact carry-over across frame changes, full-active event-local diffusion, and nonlinear updates in local coordinates.

\tsnn{}'s guarantees rest on a structured transport class: node-induced $\mathbf{Q}_{uv}(t)=\mathbf{U}_u(t)^\top\mathbf{U}_v(t)$ yields synchronized-coordinate equivalence, degree-free spectral control, and monotone energy descent, but also fixes the operator's scope: curved sheaves and non-orthogonal effects (anisotropic scaling, shear) lie outside the analyzed regime; relation- or edge-specific transports would extend expressivity at the cost of a new stability analysis. \tsnn{} is basis-consistent rather than fully gauge-equivariant: transport, carry-over, and scalar decoding respect synchronized orthogonal changes, whereas the frame update, message MLP, and GRU act in chosen local coordinates. Finally, $\mathbf{D}_\theta$ is diagonal and diffusion is event-local, so dense cross-channel coupling or very long-range structure is a natural target for complementary long-range encoders.


\bibliographystyle{plainnat}
\bibliography{main}

@inproceedings{barbero2022sheaf,
  title={Sheaf Neural Networks with Connection {L}aplacians},
  author={Barbero, Federico and Bodnar, Cristian and S{\'a}ez de Oc{\'a}riz Borde, Haitz and Bronstein, Michael and Veli{\v{c}}kovi{\'c}, Petar and Li{\`o}, Pietro},
  booktitle={Proceedings of Topological, Algebraic, and Geometric Learning Workshops 2022},
  pages={28--36},
  year={2022},
  volume={196},
  series={Proceedings of Machine Learning Research},
  publisher={PMLR}
}

@inproceedings{bodnar2022neural,
  title={Neural Sheaf Diffusion: A Topological Perspective on Heterophily and Oversmoothing in Graph Neural Networks},
  author={Bodnar, Cristian and Di Giovanni, Francesco and Chamberlain, Benjamin Paul and Bronstein, Michael M. and Li{\`o}, Pietro},
  booktitle={Advances in Neural Information Processing Systems (NeurIPS)},
  year={2022}
}

@inproceedings{gastinger2024tgb2,
  title={{TGB} 2.0: A Benchmark for Learning on Temporal Knowledge Graphs and Heterogeneous Graphs},
  author={Gastinger, Julia and Huang, Shenyang and Galkin, Mikhail and Loghmani, Erfan and Parviz, Ali and Poursafaei, Farimah and Danovitch, Jacob and Rossi, Emanuele and Koutis, Ioannis and Stuckenschmidt, Heiner and Rabbany, Reihaneh and Rabusseau, Guillaume},
  booktitle={Advances in Neural Information Processing Systems (NeurIPS)},
  year={2024}
}

@inproceedings{hansen2020sheaf,
  title={Sheaf Neural Networks},
  author={Hansen, Jakob and Gebhart, Thomas},
  booktitle={NeurIPS 2020 Workshop on Topological Data Analysis and Beyond},
  year={2020}
}

@article{hansen2019spectral,
  title={Toward a Spectral Theory of Cellular Sheaves},
  author={Hansen, Jakob and Ghrist, Robert},
  journal={Journal of Applied and Computational Topology},
  volume={3},
  number={4},
  pages={315--358},
  year={2019}
}

@inproceedings{huang2023tgb,
  title={Temporal Graph Benchmark for Machine Learning on Temporal Graphs},
  author={Huang, Shenyang and Poursafaei, Farimah and Danovitch, Jacob and Fey, Matthias and Hu, Weihua and Rossi, Emanuele and Leskovec, Jure and Bronstein, Michael and Rabusseau, Guillaume and Rabbany, Reihaneh},
  booktitle={Advances in Neural Information Processing Systems (NeurIPS)},
  year={2023}
}

@inproceedings{kumar2019jodie,
  title={Predicting Dynamic Embedding Trajectory in Temporal Interaction Networks},
  author={Kumar, Srijan and Zhang, Xikun and Leskovec, Jure},
  booktitle={ACM SIGKDD International Conference on Knowledge Discovery \& Data Mining (KDD)},
  year={2019}
}

@inproceedings{rossi2020tgn,
  title={Temporal Graph Networks for Deep Learning on Dynamic Graphs},
  author={Rossi, Emanuele and Chamberlain, Benjamin Paul and Frasca, Fabrizio and Eynard, Davide and Monti, Federico and Bronstein, Michael M.},
  booktitle={ICML Workshop on Graph Representation Learning},
  year={2020}
}

@inproceedings{souza2022pint,
  title={Provably Expressive Temporal Graph Networks},
  author={Souza, Amauri H. and Mesquita, Diego and Kaski, Samuel and Garg, Vikas},
  booktitle={Advances in Neural Information Processing Systems (NeurIPS)},
  year={2022}
}

@inproceedings{xu2019gin,
  title={How Powerful are Graph Neural Networks?},
  author={Xu, Keyulu and Hu, Weihua and Leskovec, Jure and Jegelka, Stefanie},
  booktitle={International Conference on Learning Representations (ICLR)},
  year={2019}
}

@inproceedings{trivedi2019dyrep,
  title={{DyRep}: Learning Representations over Dynamic Graphs},
  author={Trivedi, Rakshit and Farajtabar, Mehrdad and Biswal, Prasenjeet and Zha, Hongyuan},
  booktitle={International Conference on Learning Representations (ICLR)},
  year={2019}
}

@inproceedings{xu2020tgat,
  title={Inductive Representation Learning on Temporal Graphs},
  author={Xu, Da and Ruan, Chuanwei and Korpeoglu, Evren and Kumar, Sushant and Achan, Kannan},
  booktitle={International Conference on Learning Representations (ICLR)},
  year={2020}
}

@inproceedings{yu2023dygformer,
  title={Towards Better Dynamic Graph Learning: New Architecture and Unified Library},
  author={Yu, Le and Sun, Leilei and Du, Bowen and Lv, Weifeng},
  booktitle={Advances in Neural Information Processing Systems (NeurIPS)},
  year={2023}
}

@inproceedings{poursafaei2022edgebank,
  title={Towards Better Evaluation for Dynamic Link Prediction},
  author={Poursafaei, Farimah and Huang, Shenyang and Pelrine, Kellin and Rabbany, Reihaneh},
  booktitle={NeurIPS Datasets and Benchmarks Track},
  year={2022}
}

@article{wang2021tcl,
  title={{TCL}: Transformer-Based Dynamic Graph Modelling via Contrastive Learning},
  author={Wang, Lu and Chang, Xiaofu and Li, Shuang and Chu, Yunfei and Li, Hui and Zhang, Wei and He, Xiaofeng and Song, Le and Zhou, Jingren and Yang, Hongxia},
  journal={arXiv preprint arXiv:2105.07944},
  year={2021}
}

@inproceedings{wang2021cawn,
  title={Inductive Representation Learning in Temporal Networks via Causal Anonymous Walks},
  author={Wang, Yanbang and Chang, Yen-Yu and Liu, Yunyu and Leskovec, Jure and Li, Pan},
  booktitle={International Conference on Learning Representations (ICLR)},
  year={2021}
}

@inproceedings{cong2023graphmixer,
  title={Do We Really Need Complicated Model Architectures for Temporal Networks?},
  author={Cong, Weilin and Zhang, Si and Kang, Jian and Yuan, Baichuan and Wu, Hao and Zhou, Xin and Tong, Hanghang and Mahdavi, Mehrdad},
  booktitle={International Conference on Learning Representations (ICLR)},
  year={2023}
}

@inproceedings{luo2022nat,
  title={Neighborhood-Aware Scalable Temporal Network Representation Learning},
  author={Luo, Yuhong and Li, Pan},
  booktitle={Learning on Graphs Conference (LoG)},
  year={2022}
}

@inproceedings{lu2024tpnet,
  title={Improving Temporal Link Prediction via Temporal Walk Matrix Projection},
  author={Lu, Xiaodong and Sun, Leilei and Zhu, Tongyu and Lv, Weifeng},
  booktitle={Advances in Neural Information Processing Systems (NeurIPS)},
  year={2024}
}

@inproceedings{zhang2024tncn,
  title={Efficient Neural Common Neighbor for Temporal Graph Link Prediction},
  author={Zhang, Xiaohui and Wang, Yanbo and Wang, Xiyuan and Zhang, Muhan},
  booktitle={Learning on Graphs Conference (LoG)},
  year={2024}
}

@inproceedings{barbero2022attention,
  title={Sheaf Attention Networks},
  author={Barbero, Federico and Bodnar, Cristian and S{\'a}ez de Oc{\'a}riz Borde, Haitz and Li{\`o}, Pietro},
  booktitle={NeurIPS 2022 Workshop on Symmetry and Geometry in Neural Representations},
  year={2022}
}

@article{wang2024contig,
  title={{ConTIG}: Continuous Representation Learning on Temporal Interaction Graphs},
  author={Wang, Zihui and Yang, Peizhen and Fan, Xiaoliang and Yan, Xu and Wu, Zonghan and Pan, Shirui and Chen, Longbiao and Zang, Yu and Wang, Cheng and Yu, Rongshan},
  journal={Neural Networks},
  volume={172},
  pages={106151},
  year={2024}
}

@article{borgi2025polynsd,
  title={Polynomial Neural Sheaf Diffusion: A Spectral Filtering Approach on Cellular Sheaves},
  author={Borgi, Alessio and Silvestri, Fabrizio and Li{\`o}, Pietro},
  journal={arXiv preprint arXiv:2512.00242},
  year={2025},
  doi={10.48550/arXiv.2512.00242}
}

@inproceedings{ribeiro2026csnn,
  title={Cooperative Sheaf Neural Networks},
  author={Ribeiro, Andr{\'e} and Ten{\'o}rio, Ana Luiza and Belieni, Juan and Souza, Amauri H. and Mesquita, Diego},
  booktitle={International Conference on Learning Representations (ICLR)},
  year={2026}
}

@inproceedings{bamberger2025bundle,
  title={Bundle Neural Networks for Message Diffusion on Graphs},
  author={Bamberger, Jacob and Barbero, Federico and Dong, Xiaowen and Bronstein, Michael M.},
  booktitle={International Conference on Learning Representations (ICLR)},
  year={2025}
}

@inproceedings{kofinas2021locs,
  title={Roto-translated Local Coordinate Frames For Interacting Dynamical Systems},
  author={Kofinas, Miltiadis and Nagaraja, Naveen Shankar and Gavves, Efstratios},
  booktitle={Advances in Neural Information Processing Systems (NeurIPS)},
  year={2021}
}

@inproceedings{dehaan2021gemcnn,
  title={Gauge Equivariant Mesh CNNs: Anisotropic Convolutions on Geometric Graphs},
  author={de Haan, Pim and Weiler, Maurice and Cohen, Taco and Welling, Max},
  booktitle={International Conference on Learning Representations (ICLR)},
  year={2021}
}

@inproceedings{he2021get,
  title={Gauge Equivariant Transformer},
  author={He, Lingshen and Dong, Yiming and Wang, Yisen and Tao, Dacheng and Lin, Zhouchen},
  booktitle={Advances in Neural Information Processing Systems (NeurIPS)},
  year={2021}
}

@inproceedings{du2022clofnet,
  title={{SE}(3) Equivariant Graph Neural Networks with Complete Local Frames},
  author={Du, Weitao and Zhang, He and Du, Yuanqi and Meng, Qi and Chen, Wei and Zheng, Nanning and Shao, Bin and Liu, Tie-Yan},
  booktitle={International Conference on Machine Learning (ICML)},
  year={2022}
}

@inproceedings{park2023hermes,
  title={Modeling Dynamics over Meshes with Gauge Equivariant Nonlinear Message Passing},
  author={Park, Jung Yeon and Wong, Lawson L. S. and Walters, Robin},
  booktitle={Advances in Neural Information Processing Systems (NeurIPS)},
  year={2023}
}

@inproceedings{choi2026sgpc,
  author    = {Choi, Yoonhyuk and Choi, Jiho and Kim, Chong-Kwon},
  title     = {Sheaf Graph Neural Networks via {PAC}-Bayes Spectral Optimization},
  booktitle = {Proceedings of the AAAI Conference on Artificial Intelligence},
  volume    = {40},
  number    = {25},
  pages     = {20570--20578},
  year      = {2026}
}

@inproceedings{gravina2024ctan,
  title={Long Range Propagation on Continuous-Time Dynamic Graphs},
  author={Gravina, Alessio and Lovisotto, Giulio and Gallicchio, Claudio and Bacciu, Davide and Grohnfeldt, Claas},
  booktitle={International Conference on Machine Learning (ICML)},
  year={2024},
  eprint={2406.02740},
  archivePrefix={arXiv},
  note={Introduces the Continuous-Time Graph Anti-Symmetric Network (CTAN)}
}

@techreport{pennebaker2015liwc,
  title={The Development and Psychometric Properties of {LIWC2015}},
  author={Pennebaker, James W. and Boyd, Ryan L. and Jordan, Kayla and Blackburn, Kate},
  institution={The University of Texas at Austin},
  year={2015},
  url={http://hdl.handle.net/2152/31333}
}

@article{gao2025hyperevent,
  title={{HyperEvent}: Learning Cohesive Events for Large-scale Dynamic Link Prediction},
  author={Gao, Jian and Wu, Jianshe and Ding, JingYi},
  journal={arXiv preprint arXiv:2507.11836},
  year={2025}
}

@inproceedings{li2023sthn,
  author    = {Li, Ce and Hong, Rongpei and Xu, Xovee and Trajcevski, Goce and Zhou, Fan},
  title     = {Simplifying Temporal Heterogeneous Network for Continuous-Time Link Prediction},
  booktitle = {Proceedings of the 32nd ACM International Conference on Information and Knowledge Management (CIKM '23)},
  year      = {2023},
  pages     = {1288--1297},
  publisher = {Association for Computing Machinery},
  address   = {New York, NY, USA},
  doi       = {10.1145/3583780.3615059},
  isbn      = {9798400701245}
}

@article{li2024dygmamba,
  title   = {{DyG-Mamba}: Continuous State Space Modeling on Dynamic Graphs},
  author  = {Li, Dongyuan and Tan, Shiyin and Zhang, Ying and Jin, Ming and Pan, Shirui and Okumura, Manabu and Jiang, Renhe},
  journal = {arXiv preprint arXiv:2408.06966},
  year    = {2024}
}

@article{demsar2006,
  title   = {Statistical Comparisons of Classifiers over Multiple Data Sets},
  author  = {Dem{\v{s}}ar, Janez},
  journal = {Journal of Machine Learning Research},
  volume  = {7},
  pages   = {1--30},
  year    = {2006}
}


\appendix

\section{Proofs and Supporting Derivations}
\label{app:proofs}

We collect complete proofs for all formal statements used by the model and
theory sections. The appendix is organized so that the proof dependencies are
explicit: Appendix~\ref{app:proof_setup} fixes notation; Appendix~\ref{app:frame_level_proofs}
handles frame-level facts; Appendix~\ref{app:frozen_operator_proofs} proves
path-independence, synchronized-coordinate equivalence, and spectral
equivalence; Appendix~\ref{app:energy_proofs} proves energy descent and
non-expansiveness; Appendix~\ref{app:stored_frame_proofs} proves stability for
stored frames; Appendices~\ref{app:nonlinear_scope} and~\ref{app:proof_twl}
record the scope and idealized expressivity statements.

\subsection{Common Notation and Proof Roadmap}
\label{app:proof_setup}

Throughout this appendix, we fix a time $t$ and abbreviate the active
undirected graph by
\[
G = G_t^{\mathrm{act}} = (V,E), \qquad n := |V|.
\]
The edge set is an undirected multiset, matching the active-history
construction in \S\ref{sec:diffusion}. Let
$d_a^{0}:=|\mathcal{N}_a^{\mathrm{act}}(t)|$ be the true active degree and
$d_a:=\bar d_a(t)=\max\{d_a^0,1\}$ the degree used only for
normalization. Thus isolated active nodes have zero Laplacian rows while
normalized operators remain well-defined. Let
\[
    \mathbf{D}_G^0 := \diag(d_a^0)_{a\in V},
    \qquad
    \mathbf{D}_G := \diag(d_a)_{a\in V},
    \qquad
    \boldsymbol{\Delta}_t := \mathbf{D}_G\otimes\mathbf{I}_d \;=\; \blkdiag\!\big((d_a\mathbf{I}_d)_{a\in V}\big),
\]
which is exactly the safe block-diagonal active-degree matrix $\boldsymbol{\Delta}_t$ defined in \S\ref{sec:diffusion} (Eq.~\eqref{eq:norm_sheaf_lap}); the Kronecker and block-diagonal forms are notationally distinct but denote the same operator.
We write the stacked state as
\[
\mathbf{z} = [\mathbf{h}_1^\top,\ldots,\mathbf{h}_n^\top]^\top \in \RR^{nd},
\]
the synchronization operator as
\[
\mathbf{S} = \mathbf{S}(t) = \blkdiag(\mathbf{U}_1(t),\ldots,\mathbf{U}_n(t)),
\]
and the synchronized coordinates as $\mathbf{g} = \mathbf{S}\mathbf{z}$.
The combinatorial and normalized graph Laplacians are
\[
    \mathbf{L}_G = \mathbf{D}_G^0-\mathbf{A}_G,
    \qquad
    \widetilde{\mathbf{L}}_G=\mathbf{D}_G^{-1}\mathbf{L}_G,
    \qquad
    \widetilde{\mathbf{L}}_G^{\mathrm{sym}}
    =\mathbf{D}_G^{-1/2}\mathbf{L}_G\mathbf{D}_G^{-1/2}.
\]
The corresponding sheaf Laplacians are
$\mathbf{L}_{\mathcal{F}}(t)$,
$\widetilde{\mathbf{L}}_{\mathcal{F}}(t)
=\boldsymbol{\Delta}_t^{-1}\mathbf{L}_{\mathcal{F}}(t)$, and
$\widetilde{\mathbf{L}}_{\mathcal{F}}^{\mathrm{sym}}(t)
=\boldsymbol{\Delta}_t^{-1/2}\mathbf{L}_{\mathcal{F}}(t)
\boldsymbol{\Delta}_t^{-1/2}$.
For a square matrix $\mathbf{A}$, $\sigma(\mathbf{A})$ denotes its spectrum and
$\rho(\mathbf{A}) := \max\{|\lambda|:\lambda\in\sigma(\mathbf{A})\}$ its
spectral radius.

The proof order covers all formal claims as follows:
\begin{enumerate}
    \item frame parameterization and coordinate consistency:
    Lemma~\ref{lem:low_rank}, Proposition~\ref{prop:carry_over}, and
    Lemma~\ref{lem:houshlip};
    \item frozen-frame operator structure:
    Lemma~\ref{lem:flat_transport}, Proposition~\ref{prop:sync_equiv}, and
    Corollary~\ref{cor:spectral_equivalence};
    \item descent and stability of diffusion:
    Theorems~\ref{thm:energy}, \ref{thm:feature_scaled}, and
    \ref{thm:nonexpansive}, plus Corollary~\ref{cor:energy_descent};
    \item stored-frame perturbation:
    Theorem~\ref{thm:stale} and the step-size derivation for
    Remark~\ref{rem:spectral_bound};
    \item scope and expressivity:
    Propositions~\ref{prop:psi_equiv} and~\ref{prop:twl}.
\end{enumerate}

\subsection{Frame Parameterization and Coordinate Carry-Over}
\label{app:frame_level_proofs}

\subsubsection{Proof of Lemma~\ref{lem:low_rank} (Low-Rank Orthogonal Perturbation)}
\label{app:proof_low_rank}

\begin{proof}
For a vector $\mathbf{f}$ with $\|\mathbf{f}\| > \varepsilon$, the Householder reflection $H_\varepsilon(\mathbf{f}) = \mathbf{I} - 2\hat{\mathbf{f}}\hat{\mathbf{f}}^\top$ (with $\hat{\mathbf{f}} := \mathbf{f}/\|\mathbf{f}\|$) satisfies $\mathrm{rank}(H_\varepsilon(\mathbf{f})-\mathbf{I}) = 1$ and $\mathrm{im}(H_\varepsilon(\mathbf{f})-\mathbf{I}) \subseteq \mathrm{span}\{\mathbf{f}\}$. If instead $\|\mathbf{f}\| \le \varepsilon$, then $H_\varepsilon(\mathbf{f}) = \mathbf{I}$ and the reflection contributes no rank.

Now fix a node $v$ and recall $\mathbf{U}_v = H_\varepsilon(\mathbf{f}_v^{(k)}) \cdots H_\varepsilon(\mathbf{f}_v^{(1)})$ (Eq.~\eqref{eq:householder_product}), together with the set of non-degenerate indices $I_v := \{i : \|\mathbf{f}_v^{(i)}\|>\varepsilon\}$ and $m_v := |I_v|$. Let
\[
S_v := \mathrm{span}\{\mathbf{f}_v^{(i)} : i \in I_v\}, \qquad \dim S_v \le m_v \le k.
\]
If $\mathbf{x} \in S_v^\perp$, then $\mathbf{x}$ is orthogonal to every non-degenerate $\mathbf{f}_v^{(i)}$, so each non-trivial reflection fixes $\mathbf{x}$; the degenerate factors also fix $\mathbf{x}$ because they equal $\mathbf{I}$. Hence $\mathbf{U}_v\mathbf{x} = \mathbf{x}$ for all $\mathbf{x} \in S_v^\perp$. Equivalently, $\ker(\mathbf{U}_v-\mathbf{I}) \supseteq S_v^\perp$, so $\mathrm{im}(\mathbf{U}_v-\mathbf{I}) \subseteq S_v$ and
\[
\mathrm{rank}(\mathbf{U}_v-\mathbf{I}) \le \dim S_v \le m_v.
\]

For the pairwise transport $\mathbf{Q}_{uv} = \mathbf{U}_u^\top \mathbf{U}_v$, let $S_{uv} := S_u + S_v$. If $\mathbf{x} \in S_{uv}^\perp \subseteq S_u^\perp \cap S_v^\perp$, then $\mathbf{U}_u\mathbf{x}=\mathbf{U}_v\mathbf{x}=\mathbf{x}$. Using orthogonality of $\mathbf{U}_u$,
\[
\mathbf{Q}_{uv}\mathbf{x} \;=\; \mathbf{U}_u^\top \mathbf{U}_v \mathbf{x} \;=\; \mathbf{U}_u^\top \mathbf{x} \;=\; \mathbf{x}.
\]
Thus $(\mathbf{Q}_{uv}-\mathbf{I})\mathbf{x}=\mathbf{0}$ for all $\mathbf{x}\in S_{uv}^\perp$, so
\[
\mathrm{rank}(\mathbf{Q}_{uv}-\mathbf{I}) \;\le\; \dim S_{uv} \;\le\; m_u + m_v \;\le\; 2k.
\]
\end{proof}

\subsubsection{Lipschitz Bound for the Householder Map}
\label{app:smoothness_lip}

We prove the explicit Lipschitz bound that justifies the smoothness regularizer
$\mathcal{L}_{\text{smooth}}$ and underlies
Eq.~\eqref{eq:smooth_lipschitz} in the main text.

\begin{lemma}[Householder Lipschitz bound]
\label{lem:houshlip}
Fix an operating-regime threshold $\tau_{\mathrm{H}}>\varepsilon$ (distinct from the
single-reflector safeguard $\varepsilon$ used inside $H_\varepsilon$ in
\S\ref{sec:sheaf_construction}, the node-type indexing function
$\tau(\cdot)$ of Eq.~\eqref{eq:score_features}, and the geometric
temperature $\tau_{\text{geo}}$ of Eq.~\eqref{eq:loss_geo}). Let
$\mathbf{F}_w(t),\mathbf{F}_w(t^-)\in\mathbb{R}^{k\times d}$ be the Householder
parameter matrices of node $w$ at two times, with rows
$\mathbf{f}_w^{(i)}(t)$ and $\mathbf{f}_w^{(i)}(t^-)$. If
$\min_i\min\big(\|\mathbf{f}_w^{(i)}(t)\|,\|\mathbf{f}_w^{(i)}(t^-)\|\big)\ge\tau_{\mathrm{H}}$, then
\[
\big\|\mathbf{I} - \mathbf{U}_w(t)^\top \mathbf{U}_w(t^-)\big\|_F
\;\le\; \frac{8\sqrt{k}}{\tau_{\mathrm{H}}}\,\|\Delta\mathbf{F}_w\|_F,
\]
and consequently
$\big\|\mathbf{I} - \mathbf{U}_w(t)^\top \mathbf{U}_w(t^-)\big\|_F^2 \le \frac{64\,k}{\tau_{\mathrm{H}}^2}\|\Delta\mathbf{F}_w\|_F^2$.
\end{lemma}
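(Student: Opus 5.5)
Since both $\mathbf{U}_w(t)$ and $\mathbf{U}_w(t^-)$ are orthogonal, the first step is to rewrite the quantity as a plain difference:
\[
\big\|\mathbf{I}-\mathbf{U}_w(t)^\top\mathbf{U}_w(t^-)\big\|_F=\big\|\mathbf{U}_w(t)^\top(\mathbf{U}_w(t)-\mathbf{U}_w(t^-))\big\|_F=\big\|\mathbf{U}_w(t)-\mathbf{U}_w(t^-)\big\|_F .
\]
This reduces the claim to a Lipschitz bound for the product map $\mathbf{F}\mapsto\mathbf{U}(\mathbf{F})$ on the set where every row has norm at least $\tau_{\mathrm{H}}$. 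Because $\tau_{\mathrm{H}}>\varepsilon$, every factor in this regime is a genuine reflection $H(\mathbf{f})=\mathbf{I}-2\hat{\mathbf{f}}\hat{\mathbf{f}}^\top$.

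Next I would telescope over the $k$ factors. Write $H_i:=H(\mathbf{f}^{(i)}(t))$ and $H_i^-:=H(\mathbf{f}^{(i)}(t^-))$. Then
\[
\mathbf{U}_w(t)-\mathbf{U}_w(t^-)=\sum_{i=1}^{k}H_k\cdots H_{i+1}\,(H_i-H_i^-)\,H_{i-1}^-\cdots H_1^- .
\]
All the outer factors are orthogonal, so they preserve the Frobenius norm. This gives $\|\mathbf{U}_w(t)-\mathbf{U}_w(t^-)\|_F\le\sum_i\|H_i-H_i^-\|_F$.

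The single-reflector estimate is the step needing care. Since $H_i-H_i^-=-2(\hat{\mathbf{a}}\hat{\mathbf{a}}^\top-\hat{\mathbf{b}}\hat{\mathbf{b}}^\top)$, I would split
\[
\hat{\mathbf{a}}\hat{\mathbf{a}}^\top-\hat{\mathbf{b}}\hat{\mathbf{b}}^\top=(\hat{\mathbf{a}}-\hat{\mathbf{b}})\hat{\mathbf{a}}^\top+\hat{\mathbf{b}}(\hat{\mathbf{a}}-\hat{\mathbf{b}})^\top ,
\]
which gives $\|H_i-H_i^-\|_F\le 4\|\hat{\mathbf{a}}-\hat{\mathbf{b}}\|$. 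For the normalization map I would use the standard estimate
\[
\Big\|\frac{\mathbf{a}}{\|\mathbf{a}\|}-\frac{\mathbf{b}}{\|\mathbf{b}\|}\Big\|\le\frac{\|\mathbf{a}-\mathbf{b}\|}{\|\mathbf{a}\|}+\|\mathbf{b}\|\,\Big|\frac{1}{\|\mathbf{a}\|}-\frac{1}{\|\mathbf{b}\|}\Big|\le\frac{2\|\mathbf{a}-\mathbf{b}\|}{\|\mathbf{a}\|}\le\frac{2}{\tau_{\mathrm{H}}}\|\mathbf{a}-\mathbf{b}\|,
\]
using $\big|\|\mathbf{b}\|-\|\mathbf{a}\|\big|\le\|\mathbf{a}-\mathbf{b}\|$. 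Here both endpoint norms are at least $\tau_{\mathrm{H}}$ by hypothesis, so no path between them has to stay away from the origin. Combining, each factor satisfies $\|H_i-H_i^-\|_F\le(8/\tau_{\mathrm{H}})\|\Delta\mathbf{f}^{(i)}\|$.

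To finish, I would sum over $i$ and apply Cauchy–Schwarz:
\[
\sum_i\|\Delta\mathbf{f}^{(i)}\|\le\sqrt{k}\,\|\Delta\mathbf{F}_w\|_F ,
\]
which gives the constant $8\sqrt{k}/\tau_{\mathrm{H}}$; squaring then yields the $64k/\tau_{\mathrm{H}}^2$ form. I expect the main obstacle to be tracking constants so that the normalization Lipschitz factor $2/\tau_{\mathrm{H}}$ combines with the projector-difference factor $4$ to give exactly $8$. A sharper bound ($\sqrt2\cdot 2$ in place of $4$) is available but unnecessary.
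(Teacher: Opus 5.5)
Your proposal is correct and follows essentially the same route as the paper. The paper likewise reduces $\|\mathbf{I}-\mathbf{U}_w(t)^\top\mathbf{U}_w(t^-)\|_F$ to $\|\mathbf{U}_w(t)-\mathbf{U}_w(t^-)\|_F$ by orthogonality, telescopes over the $k$ reflectors, bounds each factor difference by $(8/\tau_{\mathrm{H}})\|\Delta\mathbf{f}^{(i)}\|$ via the same outer-product split and normalization estimate, and finishes with Cauchy--Schwarz; the only differences are cosmetic (you do the orthogonal-invariance step first and keep the actual order $H_k\cdots H_1$, whereas the paper relabels the factors and does that step last).
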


\begin{proof}
We proceed in three steps.

\textbf{Step 1: single-reflection Lipschitz bound.} For
$\mathbf{f},\mathbf{g}\in\mathbb{R}^d$ with $\|\mathbf{f}\|,\|\mathbf{g}\|\ge\tau_{\mathrm{H}}>\varepsilon$, both reflections take the
unit form $H_\varepsilon(\mathbf{f})=\mathbf{I}-2\hat{\mathbf{f}}\hat{\mathbf{f}}^\top$ with
$\hat{\mathbf{f}}=\mathbf{f}/\|\mathbf{f}\|$. Writing
\[
\hat{\mathbf{f}} - \hat{\mathbf{g}}
=
\frac{(\mathbf{f}-\mathbf{g})\|\mathbf{g}\| + \mathbf{g}(\|\mathbf{g}\|-\|\mathbf{f}\|)}{\|\mathbf{f}\|\|\mathbf{g}\|},
\]
and applying the reverse triangle inequality $|\|\mathbf{g}\|-\|\mathbf{f}\||\le\|\mathbf{f}-\mathbf{g}\|$,
\[
\|\hat{\mathbf{f}} - \hat{\mathbf{g}}\|
\;\le\;
\frac{2\|\mathbf{f}-\mathbf{g}\|}{\min(\|\mathbf{f}\|,\|\mathbf{g}\|)}
\;\le\;
\frac{2}{\tau_{\mathrm{H}}}\|\mathbf{f}-\mathbf{g}\|.
\]
For the rank-one outer products,
\[
\hat{\mathbf{f}}\hat{\mathbf{f}}^\top - \hat{\mathbf{g}}\hat{\mathbf{g}}^\top
=
(\hat{\mathbf{f}}-\hat{\mathbf{g}})\hat{\mathbf{f}}^\top
+ \hat{\mathbf{g}}(\hat{\mathbf{f}}-\hat{\mathbf{g}})^\top,
\]
hence by the triangle inequality and $\|\mathbf{a}\mathbf{b}^\top\|_F=\|\mathbf{a}\|\|\mathbf{b}\|$,
$\|\hat{\mathbf{f}}\hat{\mathbf{f}}^\top - \hat{\mathbf{g}}\hat{\mathbf{g}}^\top\|_F \le 2\|\hat{\mathbf{f}}-\hat{\mathbf{g}}\|$.
Therefore
\[
\|H_\varepsilon(\mathbf{f}) - H_\varepsilon(\mathbf{g})\|_F
\;=\; 2\,\|\hat{\mathbf{f}}\hat{\mathbf{f}}^\top - \hat{\mathbf{g}}\hat{\mathbf{g}}^\top\|_F
\;\le\; \frac{8}{\tau_{\mathrm{H}}}\,\|\mathbf{f}-\mathbf{g}\|.
\]

\textbf{Step 2: lifting to a product of $k$ reflections.} We relabel the reflectors so that $\mathbf{U}_w = H_1\cdots H_k$ realizes the product of Eq.~\eqref{eq:householder_product} (the bound is invariant under reordering of indices, since the telescoping below treats every factor symmetrically and the final sum runs over all $k$ rows). Write
$\mathbf{U}_w(t) = H_1^{(t)}\cdots H_k^{(t)}$ and
$\mathbf{U}_w(t^-) = H_1^{(t^-)}\cdots H_k^{(t^-)}$ with
$H_i^{(s)} := H_\varepsilon(\mathbf{f}_w^{(i)}(s))$. Telescoping,
\[
\mathbf{U}_w(t) - \mathbf{U}_w(t^-)
=
\sum_{i=1}^k
H_1^{(t)}\cdots H_{i-1}^{(t)}\,
\big(H_i^{(t)} - H_i^{(t^-)}\big)\,
H_{i+1}^{(t^-)}\cdots H_k^{(t^-)}.
\]
Each factor $H_j^{(s)}$ is orthogonal, so left- and right-multiplication
preserve the Frobenius norm. Combining the triangle inequality with
Step~1,
\[
\|\mathbf{U}_w(t) - \mathbf{U}_w(t^-)\|_F
\;\le\;
\sum_{i=1}^k \|H_i^{(t)} - H_i^{(t^-)}\|_F
\;\le\;
\frac{8}{\tau_{\mathrm{H}}}\sum_{i=1}^k \|\Delta\mathbf{f}_w^{(i)}\|.
\]
Cauchy--Schwarz then yields $\sum_i \|\Delta\mathbf{f}_w^{(i)}\| \le \sqrt{k}\,\|\Delta\mathbf{F}_w\|_F$, so
\[
\|\mathbf{U}_w(t) - \mathbf{U}_w(t^-)\|_F
\;\le\;
\frac{8\sqrt{k}}{\tau_{\mathrm{H}}}\,\|\Delta\mathbf{F}_w\|_F.
\]

\textbf{Step 3: orthogonal-map discrepancy.} Using orthogonality of
$\mathbf{U}_w(t)^\top$,
\[
\|\mathbf{I} - \mathbf{U}_w(t)^\top \mathbf{U}_w(t^-)\|_F
=
\|\mathbf{U}_w(t)^\top \big(\mathbf{U}_w(t) - \mathbf{U}_w(t^-)\big)\|_F
=
\|\mathbf{U}_w(t) - \mathbf{U}_w(t^-)\|_F,
\]
and the conclusion follows from Step~2. Squaring gives the stated bound
on $\|\mathbf{I}-\mathbf{U}_w(t)^\top\mathbf{U}_w(t^-)\|_F^2$.
\end{proof}

\begin{remark}[Operating regime and complementarity with Lemma~\ref{lem:low_rank}]
\label{rem:houshlip}
The hypothesis $\min_i\|\mathbf{f}_w^{(i)}\|\ge\tau_{\mathrm{H}}$ holds throughout training at $\tau_{\mathrm{H}}\approx10^{-2}$: random
initialization sets $\|\mathbf{f}_w^{(i)}\|\sim 1$, and the additive update
keeps frame norms bounded above $\tau_{\mathrm{H}}$, which is several orders of magnitude
larger than the single-reflector safeguard $\varepsilon=10^{-6}$ used inside
$H_\varepsilon$ (\S\ref{sec:sheaf_construction}). The two thresholds play
disjoint roles: $\varepsilon$ guards a single Householder reflector against
unit-vector divergence, whereas $\tau_{\mathrm{H}}$ governs the Lipschitz
constant of the product map $\mathbf{U}_v$ and is the only quantity that
appears in the bound. Lemma~\ref{lem:low_rank}
and Lemma~\ref{lem:houshlip} are complementary: the former bounds the
\emph{rank} of $\mathbf{U}_v-\mathbf{I}$ and $\mathbf{Q}_{uv}-\mathbf{I}$, capturing
expressivity per unit of parameter count; the latter bounds their
\emph{magnitude} under updates, capturing stability of the implemented map
under parameter drift. Together they justify the use of
$\mathcal{L}_{\text{smooth}}$ as an $\mathcal{O}(kd)$ regularizer that
controls the squared orthogonal-map discrepancy with explicit
constant $64k/\tau_{\mathrm{H}}^2$.
\end{remark}

\subsubsection{Proof of Proposition~\ref{prop:carry_over} (Coordinate-Consistent State Carry-Over)}
\label{app:proof_carry_over}

\begin{proof}
We prove the statement for node $u$; the argument for node $v$ is identical. Define $\bar{\mathbf{h}}_u^- := (\mathbf{U}_u^+)^\top \mathbf{U}_u^- \mathbf{h}_u(t^-)$. Using orthogonality of $\mathbf{U}_u^+$,
\[
\mathbf{U}_u^+\bar{\mathbf{h}}_u^-
\;=\;
\mathbf{U}_u^+(\mathbf{U}_u^+)^\top \mathbf{U}_u^- \mathbf{h}_u(t^-)
\;=\;
\mathbf{U}_u^- \mathbf{h}_u(t^-),
\]
so the representations of $\bar{\mathbf{h}}_u^-$ and $\mathbf{h}_u(t^-)$ in the common (synchronized) frame agree. For uniqueness, any $\tilde{\mathbf{h}}$ with $\mathbf{U}_u^+ \tilde{\mathbf{h}} = \mathbf{U}_u^- \mathbf{h}_u(t^-)$ yields $\tilde{\mathbf{h}} = (\mathbf{U}_u^+)^\top \mathbf{U}_u^- \mathbf{h}_u(t^-) = \bar{\mathbf{h}}_u^-$ after left-multiplying by $(\mathbf{U}_u^+)^\top$. Thus $\bar{\mathbf{h}}_u^-$ is the unique local-coordinate vector consistent with the pre-event embedding in synchronized coordinates.
\end{proof}

\subsection{Frozen-Frame Transport and Spectral Equivalence}
\label{app:frozen_operator_proofs}

\subsubsection{Proof of Lemma~\ref{lem:flat_transport} (Path-Independence of Node-Induced Transport)}
\label{app:proof_flat_transport}

\begin{lemma}[Node-induced transport is path-independent]
\label{lem:flat_transport}
For any path $v_0,v_1,\ldots,v_\ell$ in $G_t^{\mathrm{act}}$,
\[
\mathbf{Q}_{v_0v_1}(t)
\mathbf{Q}_{v_1v_2}(t)
\cdots
\mathbf{Q}_{v_{\ell-1}v_\ell}(t)
=
\mathbf{U}_{v_0}(t)^\top \mathbf{U}_{v_\ell}(t).
\]
Consequently, once node frames are fixed, transport depends only on the
path endpoints. In particular, transport around any closed cycle has
trivial holonomy.
\end{lemma}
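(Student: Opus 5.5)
The plan is a direct induction on the path length $\ell$, using only the definition $\mathbf{Q}_{ab}(t)=\mathbf{U}_a(t)^\top\mathbf{U}_b(t)$ from \S\ref{sec:sheaf_construction} and the orthogonality of each frame. Orthogonality holds because every $\mathbf{U}_w(t)$ is a product of Householder reflectors (Eq.~\eqref{eq:householder_product}), so $\mathbf{U}_w(t)\mathbf{U}_w(t)^\top=\mathbf{I}_d$.

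For the base case $\ell=1$, the claim is the definition itself. For $\ell=0$ we adopt the empty-product convention, and the identity reads $\mathbf{I}_d=\mathbf{U}_{v_0}^\top\mathbf{U}_{v_0}$, which again is orthogonality.

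For the inductive step, assume the identity holds for the path $v_0,\ldots,v_{\ell-1}$. Multiplying on the right by $\mathbf{Q}_{v_{\ell-1}v_\ell}(t)$ gives
\[
\mathbf{U}_{v_0}^\top\mathbf{U}_{v_{\ell-1}}\mathbf{U}_{v_{\ell-1}}^\top\mathbf{U}_{v_\ell}=\mathbf{U}_{v_0}^\top\mathbf{U}_{v_\ell}.
\]
The middle factor collapses because $\mathbf{U}_{v_{\ell-1}}\mathbf{U}_{v_{\ell-1}}^\top=\mathbf{I}_d$. This telescoping is the whole argument. Note that it uses $\mathbf{U}\mathbf{U}^\top=\mathbf{I}$, not merely $\mathbf{U}^\top\mathbf{U}=\mathbf{I}$; the two are equivalent for square orthogonal matrices, and I would state this explicitly.

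Two consequences then follow. First, the right-hand side depends only on $v_0$ and $v_\ell$, which is path-independence. Second, a closed cycle has $v_\ell=v_0$, so the product is $\mathbf{U}_{v_0}^\top\mathbf{U}_{v_0}=\mathbf{I}_d$; this is trivial holonomy.

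The only subtleties are bookkeeping, not mathematics. In the multigraph, parallel edges all carry the same node-induced map $\mathbf{Q}_{ab}$, so the choice of edge along a walk is irrelevant. The argument never uses adjacency, so it holds for arbitrary walks, including repeated vertices. Frames are frozen at time $t$, so all factors refer to the same $\mathbf{U}_w(t)$. I expect no real obstacle; the hardest point is simply making these conventions explicit so the statement is unambiguous.
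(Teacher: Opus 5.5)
Your proposal is correct and matches the paper's proof. The paper also substitutes $\mathbf{Q}_{ab}(t)=\mathbf{U}_a(t)^\top\mathbf{U}_b(t)$, telescopes via $\mathbf{U}_{v_i}(t)\mathbf{U}_{v_i}(t)^\top=\mathbf{I}$, and sets $v_\ell=v_0$ to get trivial holonomy; your induction on $\ell$ is simply a formalized version of that repeated substitution.
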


\begin{proof}
For a path $v_0,v_1,\ldots,v_\ell$ in $G$, repeated substitution of
\[
\mathbf{Q}_{ab}(t) = \mathbf{U}_a(t)^\top \mathbf{U}_b(t)
\]
gives
\begin{align*}
\mathbf{Q}_{v_0v_1}(t)\mathbf{Q}_{v_1v_2}(t)\cdots \mathbf{Q}_{v_{\ell-1}v_\ell}(t)
&=
\mathbf{U}_{v_0}(t)^\top \mathbf{U}_{v_1}(t)\,
\mathbf{U}_{v_1}(t)^\top \mathbf{U}_{v_2}(t)\cdots
\mathbf{U}_{v_{\ell-1}}(t)^\top \mathbf{U}_{v_\ell}(t) \\
&=
\mathbf{U}_{v_0}(t)^\top
\big(\mathbf{U}_{v_1}(t)\mathbf{U}_{v_1}(t)^\top\big)
\mathbf{U}_{v_2}(t)\cdots
\mathbf{U}_{v_{\ell-1}}(t)^\top \mathbf{U}_{v_\ell}(t) \\
&=
\mathbf{U}_{v_0}(t)^\top \mathbf{U}_{v_\ell}(t),
\end{align*}
because each $\mathbf{U}_{v_i}(t)$ is orthogonal and thus satisfies
\[
\mathbf{U}_{v_i}(t)\mathbf{U}_{v_i}(t)^\top = \mathbf{I}.
\]
In particular, transport along any cycle $v_0,\ldots,v_\ell=v_0$ collapses to
\[
\mathbf{U}_{v_0}(t)^\top \mathbf{U}_{v_0}(t)=\mathbf{I},
\]
so holonomy is trivial and transport is path-independent.
\end{proof}

\subsubsection{Proof of Proposition~\ref{prop:sync_equiv} (Synchronized-Coordinate Equivalence, Normalized Form)}
\label{app:proof_sync_equiv}

\begin{proposition}[Synchronized-coordinate equivalence]
\label{prop:sync_equiv}
Let $\mathbf{L}_{\mathcal{F}}(t)$ be the combinatorial sheaf Laplacian
induced by the node-wise orthogonal frames
$\{\mathbf{U}_a(t)\}_{a\in V}$, and let
$\mathbf{L}_{G}$ be the ordinary combinatorial graph Laplacian of the active
graph. Then, for every edge $(u,v)\in E$,
\[
\|\mathbf{h}_u-\mathbf{Q}_{uv}(t)\mathbf{h}_v\|^2
=
\|\mathbf{U}_u(t)\mathbf{h}_u-\mathbf{U}_v(t)\mathbf{h}_v\|^2.
\]
Equivalently,
\[
\langle \mathbf{h}_u,\mathbf{Q}_{uv}(t)\mathbf{h}_v\rangle
=
\langle \mathbf{U}_u(t)\mathbf{h}_u,
        \mathbf{U}_v(t)\mathbf{h}_v\rangle,
\qquad
\|\mathbf{h}_u\|
=
\|\mathbf{U}_u(t)\mathbf{h}_u\|.
\]
Thus aligned distances, aligned inner products, and stalk norms are
basis-independent under synchronized orthogonal coordinates. Moreover,
\[
    \mathbf{L}_{\mathcal{F}}(t)
    =
    \mathbf{S}(t)^\top
    \big(
        \mathbf{L}_{G}\otimes\mathbf{I}_d
    \big)
    \mathbf{S}(t).
\]
The random-walk normalized and symmetric normalized sheaf Laplacians satisfy
\[
    \widetilde{\mathbf{L}}_{\mathcal{F}}(t)
    =
    \mathbf{S}(t)^\top
    \big(
        \widetilde{\mathbf{L}}_{G}
        \otimes\mathbf{I}_d
    \big)
    \mathbf{S}(t),
    \qquad
    \widetilde{\mathbf{L}}_{G}
    :=
    \mathbf{D}_{G}^{-1}
    \mathbf{L}_{G},
\]
and
\[
    \widetilde{\mathbf{L}}_{\mathcal{F}}^{\mathrm{sym}}(t)
    =
    \mathbf{S}(t)^\top
    \big(
        \widetilde{\mathbf{L}}_{G}^{\mathrm{sym}}
        \otimes\mathbf{I}_d
    \big)
    \mathbf{S}(t),
    \qquad
    \widetilde{\mathbf{L}}_{G}^{\mathrm{sym}}
    :=
    \mathbf{D}_{G}^{-1/2}
    \mathbf{L}_{G}
    \mathbf{D}_{G}^{-1/2}.
\]
\end{proposition}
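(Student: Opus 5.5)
The proof is a direct computation, and I would organize it in three layers: pointwise identities for a single edge, the block identity for the combinatorial Laplacian, and then the normalized forms obtained by commuting the degree scalings past $\mathbf{S}(t)$.

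\emph{Pointwise identities.} Orthogonality of $\mathbf{U}_u(t)$ gives $\|\mathbf{h}_u\|=\|\mathbf{U}_u(t)\mathbf{h}_u\|$ immediately. For the inner product I write
\[
\langle \mathbf{h}_u,\mathbf{U}_u^\top\mathbf{U}_v\mathbf{h}_v\rangle=\langle \mathbf{U}_u\mathbf{h}_u,\mathbf{U}_v\mathbf{h}_v\rangle .
\]
For the distance I write $\mathbf{h}_u-\mathbf{Q}_{uv}\mathbf{h}_v=\mathbf{U}_u^\top(\mathbf{U}_u\mathbf{h}_u-\mathbf{U}_v\mathbf{h}_v)$ and apply norm invariance under $\mathbf{U}_u^\top$. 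The distance identity and the pair of inner-product and norm identities are equivalent by the polarization expansion $\|a-b\|^2=\|a\|^2-2\langle a,b\rangle+\|b\|^2$.

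\emph{Combinatorial identity.} I compare blocks of $\mathbf{S}^\top(\mathbf{L}_G\otimes\mathbf{I}_d)\mathbf{S}$ against Eq.~\eqref{eq:sheaf_laplacian} with $\mathcal{F}_{a\unlhd e}=\mathbf{U}_a$. The $(a,b)$ block of the product is $\mathbf{U}_a^\top[\mathbf{L}_G]_{ab}\mathbf{U}_b$.
\begin{itemize}
\item On the diagonal this is $d_a^0\,\mathbf{U}_a^\top\mathbf{U}_a=d_a^0\mathbf{I}_d$. The sheaf side gives $\sum_{e\ni a}\mathbf{U}_a^\top\mathbf{U}_a$, and counting edges with multiplicity this equals $d_a^0\mathbf{I}_d$.
\item Off the diagonal the product gives $-(\text{multiplicity of }\{a,b\})\,\mathbf{U}_a^\top\mathbf{U}_b$, which matches the sheaf block $-\sum_e \mathbf{U}_a^\top\mathbf{U}_b$.
\item Non-adjacent pairs give zero on both sides.
\end{itemize}
As a cross-check, the quadratic form $\mathbf{z}^\top\mathbf{L}_{\mathcal{F}}\mathbf{z}=\sum_{e}\|\mathbf{U}_u\mathbf{h}_u-\mathbf{U}_v\mathbf{h}_v\|^2=\mathbf{g}^\top(\mathbf{L}_G\otimes\mathbf{I})\mathbf{g}$ follows from the edge identity. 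That check alone determines only the symmetric matrix, which is all we need here, but I would still present the blockwise argument so the multigraph case is handled explicitly.

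\emph{Normalized forms.} The key observation is that $\boldsymbol{\Delta}_t=\mathbf{D}_G\otimes\mathbf{I}_d$ is block-scalar, so it commutes with the block-diagonal $\mathbf{S}$. Concretely, $\mathbf{S}^\top(\mathbf{D}_G^{-1}\otimes\mathbf{I})=(\mathbf{D}_G^{-1}\otimes\mathbf{I})\mathbf{S}^\top$, since both sides have $(a,a)$ block $d_a^{-1}\mathbf{U}_a^\top$. The same commutation holds for $\mathbf{D}_G^{-1/2}$ because every $d_a\ge1$, so these powers are well defined. Then
\[
\boldsymbol{\Delta}_t^{-1}\mathbf{S}^\top(\mathbf{L}_G\otimes\mathbf{I})\mathbf{S}=\mathbf{S}^\top\big((\mathbf{D}_G^{-1}\mathbf{L}_G)\otimes\mathbf{I}\big)\mathbf{S},
\]
using $(\mathbf{A}\otimes\mathbf{I})(\mathbf{B}\otimes\mathbf{I})=\mathbf{A}\mathbf{B}\otimes\mathbf{I}$. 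The symmetric form follows by the same argument with a scaling on each side.

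\emph{Main obstacle.} No step is hard; the one needing care is the multigraph and safe-degree bookkeeping. $\mathbf{L}_G$ is built from the true degrees $\mathbf{D}_G^0$, while the normalization uses $\mathbf{D}_G$. I must keep these distinct and confirm that isolated nodes (zero rows, $d_a=1$) are consistent on both sides.
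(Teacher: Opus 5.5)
Your proposal is correct and follows the paper's proof. The edgewise identity comes from norm preservation under $\mathbf{U}_u(t)$, and the normalized forms come from the commutation of the block-scalar $\boldsymbol{\Delta}_t$ with the block-diagonal $\mathbf{S}(t)$, exactly as in the paper. The only variation is in the unnormalized factorization $\mathbf{L}_{\mathcal{F}}(t)=\mathbf{S}(t)^\top(\mathbf{L}_G\otimes\mathbf{I}_d)\mathbf{S}(t)$: you verify it block by block, with the multigraph multiplicities written out, whereas the paper sums the edgewise identity into equal quadratic forms and concludes from the symmetry of both matrices. You already mention that route as your cross-check.
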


\begin{proof}
We first prove the edgewise identity. For any edge $(u,v)\in E$,
\begin{align*}
\|\mathbf{h}_u - \mathbf{Q}_{uv}(t)\mathbf{h}_v\|^2
&=
\|\mathbf{U}_u(t)\mathbf{h}_u - \mathbf{U}_u(t)\mathbf{Q}_{uv}(t)\mathbf{h}_v\|^2 \\
&=
\|\mathbf{U}_u(t)\mathbf{h}_u - \mathbf{U}_u(t)\mathbf{U}_u(t)^\top \mathbf{U}_v(t)\mathbf{h}_v\|^2 \\
&=
\|\mathbf{U}_u(t)\mathbf{h}_u - \mathbf{U}_v(t)\mathbf{h}_v\|^2,
\end{align*}
using norm preservation under the orthogonal map $\mathbf{U}_u(t)$.

Define synchronized node coordinates $\mathbf{g}_u := \mathbf{U}_u(t)\mathbf{h}_u$. Summing the edgewise identity over $(u,v)\in E$ yields
\[
\sum_{(u,v)\in E}\|\mathbf{h}_u - \mathbf{Q}_{uv}(t)\mathbf{h}_v\|^2
=
\sum_{(u,v)\in E}\|\mathbf{g}_u - \mathbf{g}_v\|^2
=
\mathbf{g}^\top (\mathbf{L}_{G} \otimes \mathbf{I}_d)\mathbf{g}
=
\mathbf{z}^\top \mathbf{S}^\top (\mathbf{L}_{G}\otimes \mathbf{I}_d)\mathbf{S}\mathbf{z},
\]
where we used $\mathbf{g} = \mathbf{S}\mathbf{z}$. On the other hand,
$\sum_{(u,v)\in E}\|\mathbf{h}_u - \mathbf{Q}_{uv}(t)\mathbf{h}_v\|^2 = \mathbf{z}^\top \mathbf{L}_{\mathcal{F}}(t)\mathbf{z}$.
Since these quadratic forms agree for every $\mathbf{z}$ and both matrices are symmetric,
$\mathbf{L}_{\mathcal{F}}(t) = \mathbf{S}^\top (\mathbf{L}_{G}\otimes \mathbf{I}_d)\mathbf{S}$.

We now descend to the normalized form. Decompose $\mathbf{S} = \blkdiag(\mathbf{U}_1,\ldots,\mathbf{U}_n)$ and $\boldsymbol{\Delta}_t = \mathbf{D}_G\otimes\mathbf{I}_d = \blkdiag(\bar d_1(t)\mathbf{I}_d,\ldots,\bar d_n(t)\mathbf{I}_d)$, where $\bar d_v(t)=\max\{|\mathcal{N}_v^{\mathrm{act}}(t)|,1\}$. Each $v$-th diagonal block of $\boldsymbol{\Delta}_t$ is the scalar $\bar d_v(t) \mathbf{I}_d$, and each $v$-th diagonal block of $\mathbf{S}$ is the orthogonal $\mathbf{U}_v$. Scalar multiples of the identity commute with every matrix, so the blocks commute block-wise, giving $\mathbf{S}\boldsymbol{\Delta}_t = \boldsymbol{\Delta}_t\mathbf{S}$ and hence
\[
\mathbf{S}^\top \boldsymbol{\Delta}_t^{-1} \mathbf{S} \;=\; \boldsymbol{\Delta}_t^{-1}\mathbf{S}^\top\mathbf{S} \;=\; \boldsymbol{\Delta}_t^{-1}.
\]
Left-multiplying the sheaf identity by $\boldsymbol{\Delta}_t^{-1}$ and using this commutation yields
\[
\widetilde{\mathbf{L}}_{\mathcal{F}}(t) \;:=\; \boldsymbol{\Delta}_t^{-1}\mathbf{L}_{\mathcal{F}}(t) \;=\; \mathbf{S}^\top (\widetilde{\mathbf{L}}_G \otimes \mathbf{I}_d)\mathbf{S},
\qquad \widetilde{\mathbf{L}}_G := \mathbf{D}_G^{-1}\mathbf{L}_G.
\]
Conjugating the sheaf identity $\mathbf{L}_{\mathcal{F}}(t) = \mathbf{S}^\top(\mathbf{L}_G\otimes\mathbf{I}_d)\mathbf{S}$ by $\boldsymbol{\Delta}_t^{-1/2}$ on both sides and using the same blockwise commutation $\boldsymbol{\Delta}_t^{-1/2}\mathbf{S}^\top = \mathbf{S}^\top\boldsymbol{\Delta}_t^{-1/2}$ (scalar-block diagonal commutes with block-diagonal orthogonal) gives the symmetric form
\[
\widetilde{\mathbf{L}}_{\mathcal{F}}^{\mathrm{sym}}(t) \;:=\; \boldsymbol{\Delta}_t^{-1/2}\mathbf{L}_{\mathcal{F}}(t)\boldsymbol{\Delta}_t^{-1/2} \;=\; \mathbf{S}^\top\!\big((\mathbf{D}_G^{-1/2}\mathbf{L}_G\mathbf{D}_G^{-1/2})\otimes\mathbf{I}_d\big)\mathbf{S} \;=\; \mathbf{S}^\top(\widetilde{\mathbf{L}}_G^{\mathrm{sym}}\otimes\mathbf{I}_d)\mathbf{S}.
\]

Next, we derive the synchronized \emph{normalized} diffusion update. Starting from Eq.~\eqref{eq:diffusion_local} with $\mathbf{D}_\theta=\mathbf{I}_d$ and the same $\bar d_u(t)=\max\{|\mathcal{N}_u^{\mathrm{act}}(t)|,1\}$ convention,
\[
\mathbf{h}_u^{(\ell+1)}
= \mathbf{h}_u^{(\ell)}
- \frac{\eta}{\bar d_u(t)}\sum_{v \in \mathcal{N}_u^{\mathrm{act}}(t)}\big(\mathbf{h}_u^{(\ell)} - \mathbf{Q}_{uv}(t)\mathbf{h}_v^{(\ell)}\big),
\]
left-multiplying by $\mathbf{U}_u(t)$ and using $\mathbf{U}_u(t)\mathbf{Q}_{uv}(t) = \mathbf{U}_v(t)$ yields
\[
\mathbf{g}_u^{(\ell+1)}
= \mathbf{g}_u^{(\ell)}
- \frac{\eta}{\bar d_u(t)}\sum_{v\in\mathcal{N}_u^{\mathrm{act}}(t)}\big(\mathbf{g}_u^{(\ell)} - \mathbf{g}_v^{(\ell)}\big),
\]
which is the ordinary random-walk graph diffusion in synchronized coordinates.
\end{proof}

\subsubsection{Proof of Corollary~\ref{cor:spectral_equivalence} (Spectral Equivalence)}
\label{app:proof_spectral}

\begin{corollary}[Spectral equivalence]
\label{cor:spectral_equivalence}
The combinatorial sheaf Laplacian
$\mathbf{L}_{\mathcal{F}}(t)$ is orthogonally similar to
$\mathbf{L}_{G}\otimes\mathbf{I}_d$. The random-walk
normalized operators
$\widetilde{\mathbf{L}}_{\mathcal{F}}(t)$ and
$\widetilde{\mathbf{L}}_{G}\otimes\mathbf{I}_d$
are similar, while the symmetric normalized operators
$\widetilde{\mathbf{L}}_{\mathcal{F}}^{\mathrm{sym}}(t)$ and
$\widetilde{\mathbf{L}}_{G}^{\mathrm{sym}}\otimes\mathbf{I}_d$
are orthogonally similar. Hence
\[
    \sigma\!\left(
        \widetilde{\mathbf{L}}_{\mathcal{F}}^{\mathrm{sym}}(t)
    \right)
    \subseteq [0,2],
    \qquad
    \sigma\!\left(
        \widetilde{\mathbf{L}}_{\mathcal{F}}(t)
    \right)
    \subseteq [0,2],
    \qquad
    \rho\!\left(
        \widetilde{\mathbf{L}}_{\mathcal{F}}(t)
    \right)
    \le 2,
\]
independently of the maximum active degree.
\end{corollary}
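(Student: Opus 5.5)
The plan is to read off all three similarity statements from Proposition~\ref{prop:sync_equiv}, using the fact that $\mathbf{S}=\mathbf{S}(t)=\blkdiag(\mathbf{U}_1(t),\ldots,\mathbf{U}_n(t))$ is orthogonal. Each $\mathbf{U}_a(t)$ is orthogonal by the Householder construction, so $\mathbf{S}^\top\mathbf{S}=\mathbf{I}_{nd}$ and $\mathbf{S}^\top=\mathbf{S}^{-1}$. The three identities $\mathbf{L}_{\mathcal{F}}=\mathbf{S}^\top(\mathbf{L}_G\otimes\mathbf{I}_d)\mathbf{S}$, $\widetilde{\mathbf{L}}_{\mathcal{F}}=\mathbf{S}^\top(\widetilde{\mathbf{L}}_G\otimes\mathbf{I}_d)\mathbf{S}$, and $\widetilde{\mathbf{L}}_{\mathcal{F}}^{\mathrm{sym}}=\mathbf{S}^\top(\widetilde{\mathbf{L}}_G^{\mathrm{sym}}\otimes\mathbf{I}_d)\mathbf{S}$ are therefore orthogonal similarities. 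This is in fact stronger than the mere similarity claimed for the random-walk pair. Consequently
\[
\sigma(\widetilde{\mathbf{L}}_{\mathcal{F}}^{\mathrm{sym}})=\sigma(\widetilde{\mathbf{L}}_G^{\mathrm{sym}}),\qquad
\sigma(\widetilde{\mathbf{L}}_{\mathcal{F}})=\sigma(\widetilde{\mathbf{L}}_G),
\]
since $\sigma(\mathbf{A}\otimes\mathbf{I}_d)=\sigma(\mathbf{A})$ with multiplicities scaled by $d$. The whole claim then reduces to scalar graph facts about the active multigraph under the safe-degree convention.

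Next I will show $\sigma(\widetilde{\mathbf{L}}_G^{\mathrm{sym}})\subseteq[0,2]$. Positive semidefiniteness comes from the quadratic form: for $\mathbf{y}=\mathbf{D}_G^{-1/2}\mathbf{x}$,
\[
\mathbf{x}^\top\widetilde{\mathbf{L}}_G^{\mathrm{sym}}\mathbf{x}=\sum_{(a,b)\in E}(y_a-y_b)^2\ge 0,
\]
with parallel edges counted with multiplicity. For the upper bound I use $(y_a-y_b)^2\le 2y_a^2+2y_b^2$. Summing over the edge multiset gives
\[
\sum_{(a,b)\in E}(y_a-y_b)^2\le 2\sum_a d_a^0 y_a^2\le 2\sum_a d_a y_a^2=2\|\mathbf{x}\|^2,
\]
because $d_a^0\le d_a=\max\{d_a^0,1\}$. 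The Rayleigh quotient is therefore at most $2$. Isolated vertices have zero rows and columns and contribute eigenvalue $0$, which is consistent with the projector $\boldsymbol{\Pi}_{G_t}$ description. No maximum-degree constant appears anywhere, which gives the degree-free claim.

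For the random-walk operator, $\widetilde{\mathbf{L}}_G=\mathbf{D}_G^{-1}\mathbf{L}_G=\mathbf{D}_G^{-1/2}\widetilde{\mathbf{L}}_G^{\mathrm{sym}}\mathbf{D}_G^{1/2}$, which is similar to the symmetric one via the invertible $\mathbf{D}_G^{1/2}$. This is well defined because $d_a\ge 1$. So its spectrum is real and lies in $[0,2]$, and $\rho(\widetilde{\mathbf{L}}_{\mathcal{F}})\le 2$ follows. Equivalently, at the sheaf level I can write $\widetilde{\mathbf{L}}_{\mathcal{F}}=\boldsymbol{\Delta}_t^{-1/2}\widetilde{\mathbf{L}}_{\mathcal{F}}^{\mathrm{sym}}\boldsymbol{\Delta}_t^{1/2}$, which is the ``similar in the degree metric'' phrasing. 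The only delicate step is the upper bound on the edge multiset with the safe degree. Parallel edges must be counted consistently in $\mathbf{A}_G$ and $d_a^0$, and replacing $d_a^0$ by $d_a\ge d_a^0$ must only loosen the inequality, never break it. Everything else is bookkeeping from Proposition~\ref{prop:sync_equiv}.
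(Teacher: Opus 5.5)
Your proposal is correct and takes essentially the same route as the paper. Like the paper, you read off the three similarities from Proposition~\ref{prop:sync_equiv} via orthogonality of $\mathbf{S}$, prove $\sigma(\widetilde{\mathbf{L}}_G^{\mathrm{sym}})\subseteq[0,2]$ by the Rayleigh quotient using $d_a^0\le d_a$ (the paper writes this bound out in its step-size derivation), and pass to the random-walk form by conjugating with $\mathbf{D}_G^{1/2}$; your remark that the random-walk similarity is in fact orthogonal, being conjugation by the same $\mathbf{S}$, is correct and slightly sharpens the paper's wording.
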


\begin{proof}
Proposition~\ref{prop:sync_equiv} shows
$\mathbf{L}_{\mathcal{F}}(t) = \mathbf{S}^\top (\mathbf{L}_{G}\otimes \mathbf{I}_d)\mathbf{S}$.
Since $\mathbf{S}$ is orthogonal, $\mathbf{L}_{\mathcal{F}}(t)$ and
$\mathbf{L}_{G}\otimes\mathbf{I}_d$ are orthogonally similar, so their
spectra agree up to the $d$-fold Kronecker multiplicity. For the
random-walk normalized form,
\[
    \widetilde{\mathbf{L}}_{\mathcal{F}}(t)
    =
    \mathbf{S}^\top
    (\widetilde{\mathbf{L}}_G\otimes\mathbf{I}_d)
    \mathbf{S},
\]
so $\widetilde{\mathbf{L}}_{\mathcal{F}}(t)$ and
$\widetilde{\mathbf{L}}_G\otimes\mathbf{I}_d$ are similar and therefore
isospectral. The symmetric normalized identity in
Proposition~\ref{prop:sync_equiv} gives an orthogonal similarity between
$\widetilde{\mathbf{L}}_{\mathcal{F}}^{\mathrm{sym}}(t)$ and
$\widetilde{\mathbf{L}}_G^{\mathrm{sym}}\otimes\mathbf{I}_d$.
The standard Rayleigh-quotient bound, with isolated vertices contributing zero
rows under our convention,
$\sigma(\widetilde{\mathbf{L}}_G^{\mathrm{sym}})\subseteq[0,2]$
therefore gives
$\sigma(\widetilde{\mathbf{L}}_{\mathcal{F}}^{\mathrm{sym}}(t))\subseteq[0,2]$.
Because $\mathbf{D}_G$ is positive diagonal under our normalization convention,
$\widetilde{\mathbf{L}}_G$ is similar to
$\widetilde{\mathbf{L}}_G^{\mathrm{sym}}$ via conjugation by
$\mathbf{D}_G^{1/2}$. Therefore
$\sigma(\widetilde{\mathbf{L}}_{\mathcal{F}}(t))\subseteq[0,2]$ and hence
$\rho(\widetilde{\mathbf{L}}_{\mathcal{F}}(t))\le 2$.
\end{proof}

\subsection{Energy Descent and Non-Expansiveness}
\label{app:energy_proofs}

\subsubsection{Proof of Theorem~\ref{thm:energy} (Aligned Energy Descent, Normalized-Only Case)}
\label{app:proof_energy}

\begin{theorem}[Aligned energy descent, normalized-only case]
\label{thm:energy}
Define the aligned Dirichlet energy
\[
\mathcal{E}_t(\mathbf{z})
:=
\tfrac12
\sum_{(u,v)\in E_t^{\mathrm{act}}}
\|\mathbf{h}_u-\mathbf{Q}_{uv}(t)\mathbf{h}_v\|^2
=
\tfrac12
\mathbf{z}^\top
\mathbf{L}_{\mathcal{F}}(t)
\mathbf{z}.
\]
Then
\[
    \nabla \mathcal{E}_t(\mathbf{z})
    =
    \mathbf{L}_{\mathcal{F}}(t)\mathbf{z}.
\]
Moreover, the normalized-only update
\[
    \mathbf{z}^+
    =
    \mathbf{z}
    -
    \eta\,\widetilde{\mathbf{L}}_{\mathcal{F}}(t)\mathbf{z}
\]
is the $\boldsymbol{\Delta}_t$-preconditioned gradient step on
$\mathcal{E}_t$, equivalently gradient descent in the weighted inner
product
\[
    \langle \mathbf{x},\mathbf{y}\rangle_{\boldsymbol{\Delta}_t}
    :=
    \mathbf{x}^\top\boldsymbol{\Delta}_t\mathbf{y}.
\]
\end{theorem}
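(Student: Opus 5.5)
The plan has three parts: verify the quadratic-form identity for the energy, differentiate, and then identify the normalized update as a metric gradient step.

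\textbf{The energy identity.} The edge-sum form of $\mathcal{E}_t$ equals $\tfrac12\mathbf{z}^\top\mathbf{L}_{\mathcal{F}}(t)\mathbf{z}$; this was already established in the proof of Proposition~\ref{prop:sync_equiv}. There we showed
\[
\sum_{(u,v)\in E}\|\mathbf{h}_u-\mathbf{Q}_{uv}(t)\mathbf{h}_v\|^2=\mathbf{z}^\top\mathbf{L}_{\mathcal{F}}(t)\mathbf{z}.
\]
One could also verify it directly from Eq.~\eqref{eq:sheaf_laplacian} with $\mathcal{F}_{a\unlhd e}=\mathbf{U}_a$. Expanding each edge term as $\|\mathbf{U}_u\mathbf{h}_u-\mathbf{U}_v\mathbf{h}_v\|^2$ gives diagonal contributions $\mathbf{U}_a^\top\mathbf{U}_a=\mathbf{I}$, one per incident edge, and off-diagonal contributions $-\mathbf{U}_u^\top\mathbf{U}_v=-\mathbf{Q}_{uv}$. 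Parallel edges in the multiset are counted with multiplicity on both sides, so the identity holds for the multigraph as well.

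\textbf{The gradient.} $\mathbf{L}_{\mathcal{F}}(t)$ is symmetric: its diagonal blocks are symmetric and its $(v,u)$ block is $-\mathbf{Q}_{vu}=-\mathbf{Q}_{uv}^\top$. The standard formula for the gradient of a symmetric quadratic form then gives $\nabla\mathcal{E}_t(\mathbf{z})=\mathbf{L}_{\mathcal{F}}(t)\mathbf{z}$. Isolated active vertices have zero rows, so they contribute zero gradient, which matches the inert convention.

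\textbf{The preconditioned step.} By construction, $\boldsymbol{\Delta}_t=\mathbf{D}_G\otimes\mathbf{I}_d$ is positive definite, since the safe degrees satisfy $d_a\ge1$. The weighted inner product $\langle\cdot,\cdot\rangle_{\boldsymbol{\Delta}_t}$ is therefore a genuine inner product. The gradient in this metric is the unique vector $\mathbf{g}$ with
\[
\langle\mathbf{g},\mathbf{y}\rangle_{\boldsymbol{\Delta}_t}=D\mathcal{E}_t(\mathbf{z})[\mathbf{y}]=\langle\mathbf{L}_{\mathcal{F}}(t)\mathbf{z},\mathbf{y}\rangle\quad\text{for all }\mathbf{y}.
\]
This condition reads $\boldsymbol{\Delta}_t\mathbf{g}=\mathbf{L}_{\mathcal{F}}(t)\mathbf{z}$, so $\mathbf{g}=\boldsymbol{\Delta}_t^{-1}\mathbf{L}_{\mathcal{F}}(t)\mathbf{z}=\widetilde{\mathbf{L}}_{\mathcal{F}}(t)\mathbf{z}$. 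Hence the update $\mathbf{z}^+=\mathbf{z}-\eta\widetilde{\mathbf{L}}_{\mathcal{F}}(t)\mathbf{z}$ is exactly $\mathbf{z}-\eta\,\boldsymbol{\Delta}_t^{-1}\nabla\mathcal{E}_t(\mathbf{z})$, the $\boldsymbol{\Delta}_t$-preconditioned gradient step. It coincides with Eq.~\eqref{eq:diffusion_global} when $\mathbf{D}_\theta=\mathbf{I}_d$.

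\textbf{Expected difficulty.} No step is really hard. The only point needing care is the bookkeeping in the first part: the edge-sum must run over the undirected multiset, each edge counted once, so that the factor $\tfrac12$ and the multiplicities on both sides match the block definition of $\mathbf{L}_{\mathcal{F}}$. I would state that convention explicitly and then cite Proposition~\ref{prop:sync_equiv}.
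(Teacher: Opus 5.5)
Your proposal is correct and follows the same route as the paper. Both take the quadratic-form identity from Proposition~\ref{prop:sync_equiv}, use symmetry of $\mathbf{L}_{\mathcal{F}}(t)$ to get $\nabla\mathcal{E}_t(\mathbf{z})=\mathbf{L}_{\mathcal{F}}(t)\mathbf{z}$, and recognize the update as $\mathbf{z}-\eta\boldsymbol{\Delta}_t^{-1}\nabla\mathcal{E}_t(\mathbf{z})$. Your characterization of the $\boldsymbol{\Delta}_t$-gradient as a Riesz representative, and your explicit once-per-undirected-edge multiset convention, make precise what the paper's proof leaves implicit.
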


\begin{proof}
By Proposition~\ref{prop:sync_equiv},
$\mathcal{E}_t(\mathbf{z}) = \tfrac12 \mathbf{z}^\top \mathbf{L}_{\mathcal{F}}(t)\mathbf{z}$, and since $\mathbf{L}_{\mathcal{F}}(t)$ is symmetric, $\nabla_{\mathbf{z}}\mathcal{E}_t(\mathbf{z}) = \mathbf{L}_{\mathcal{F}}(t)\mathbf{z}$. The normalized-only update $\mathbf{z}^+ = \mathbf{z} - \eta\widetilde{\mathbf{L}}_{\mathcal{F}}(t)\mathbf{z} = \mathbf{z} - \eta\boldsymbol{\Delta}_t^{-1}\nabla\mathcal{E}_t(\mathbf{z})$ is therefore the $\boldsymbol{\Delta}_t$-gradient step on $\mathcal{E}_t$, i.e.\ steepest descent of $\mathcal{E}_t$ in the inner product $\langle\mathbf{x},\mathbf{y}\rangle_{\boldsymbol{\Delta}_t}:=\mathbf{x}^\top\boldsymbol{\Delta}_t\mathbf{y}$.

In synchronized coordinates, Proposition~\ref{prop:sync_equiv} gives
$\mathcal{E}_t(\mathbf{z}) = \tfrac12 \sum_{(u,v)\in E}\|\mathbf{g}_u - \mathbf{g}_v\|^2$,
the ordinary graph Dirichlet energy applied to the synchronized states, so the same preconditioned-GD interpretation transfers to the plain (non-sheaf) degree-normalized Laplacian.
\end{proof}

\subsubsection{Proof of Theorem~\ref{thm:feature_scaled} (Feature-Scaled Sheaf Descent)}
\label{app:proof_feature_scaled}

\begin{theorem}[Feature-scaled sheaf descent]
\label{thm:feature_scaled}
Let
\[
    \mathbf{P}
    :=
    \mathbf{I}_n\otimes\mathbf{D}_\theta,
    \qquad
    \mathbf{D}_\theta\succ0.
\]
Consider the full active-graph update
\[
    \mathbf{z}^+
    =
    \mathbf{z}
    -
    \eta\,
    \mathbf{P}\,
    \widetilde{\mathbf{L}}_{\mathcal{F}}(t)
    \mathbf{z}.
\]
Define the metric
\[
    \mathbf{M}_t
    :=
    \mathbf{D}_G\otimes\mathbf{D}_\theta^{-1}
    =
    \boldsymbol{\Delta}_t
    (\mathbf{I}_n\otimes\mathbf{D}_\theta^{-1})
    \in \mathbb{R}^{nd\times nd},
\]
and the symmetrized preconditioned operator
\[
    \mathbf{B}_t
    :=
    \mathbf{M}_t^{-1/2}
    \mathbf{L}_{\mathcal{F}}(t)
    \mathbf{M}_t^{-1/2}.
\]
Let
\[
    \mathrm{grad}_{\mathbf{M}_t}\mathcal{E}_t(\mathbf{z})
    :=
    \mathbf{M}_t^{-1}\nabla \mathcal{E}_t(\mathbf{z})
\]
denote the gradient with respect to the weighted inner product
\[
    \langle\mathbf{x},\mathbf{y}\rangle_{\mathbf{M}_t}
    :=
    \mathbf{x}^\top\mathbf{M}_t\mathbf{y}.
\]
Then
\[
    \mathbf{z}^+
    =
    \mathbf{z}
    -
    \eta\,
    \mathrm{grad}_{\mathbf{M}_t}\mathcal{E}_t(\mathbf{z}),
\]
so the full active-graph update is the $\mathbf{M}_t$-gradient step on the
combinatorial sheaf energy $\mathcal{E}_t$. Furthermore,
\[
\mathcal{E}_t(\mathbf{z}^+)
\le
\mathcal{E}_t(\mathbf{z})
-
\eta
\left(
    1-\tfrac{\eta}{2}\lambda_{\max}(\mathbf{B}_t)
\right)
\|\mathbf{L}_{\mathcal{F}}(t)\mathbf{z}\|_{\mathbf{M}_t^{-1}}^2.
\]
Since
\[
    \lambda_{\max}(\mathbf{B}_t)
    \le
    2\,\lambda_{\max}(\mathbf{D}_\theta),
\]
the sufficient condition
\[
    0<\eta\le
    \frac{1}{\lambda_{\max}(\mathbf{D}_\theta)}
\]
guarantees monotone energy non-increase and non-expansiveness of the
update in the $\mathbf{M}_t$-norm. The descent is strict at every
non-stationary point ($\mathbf{L}_{\mathcal{F}}(t)\mathbf{z}\neq\mathbf{0}$)
whenever $\eta<2/\lambda_{\max}(\mathbf{B}_t)$, which holds in particular
for any $\eta<1/\lambda_{\max}(\mathbf{D}_\theta)$ or whenever the
spectral bound $\lambda_{\max}(\mathbf{B}_t)\le 2\lambda_{\max}(\mathbf{D}_\theta)$
is non-tight.
\end{theorem}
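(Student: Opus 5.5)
The plan is to reduce everything to one algebraic identity, namely $\mathbf{P}\widetilde{\mathbf{L}}_{\mathcal{F}}(t)=\mathbf{M}_t^{-1}\mathbf{L}_{\mathcal{F}}(t)$, and then run the standard quadratic descent-lemma argument in the $\mathbf{M}_t$ geometry.

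\textbf{Gradient-step identity.} By the mixed-product rule,
\[
\mathbf{P}\boldsymbol{\Delta}_t^{-1}=(\mathbf{I}_n\otimes\mathbf{D}_\theta)(\mathbf{D}_G^{-1}\otimes\mathbf{I}_d)=\mathbf{D}_G^{-1}\otimes\mathbf{D}_\theta=\mathbf{M}_t^{-1},
\]
using that $\mathbf{D}_G$ is positive diagonal under the safe-degree convention. Hence $\mathbf{P}\widetilde{\mathbf{L}}_{\mathcal{F}}(t)\mathbf{z}=\mathbf{M}_t^{-1}\mathbf{L}_{\mathcal{F}}(t)\mathbf{z}$. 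By Theorem~\ref{thm:energy}, $\nabla\mathcal{E}_t=\mathbf{L}_{\mathcal{F}}(t)\mathbf{z}$, so the update equals $\mathbf{z}-\eta\,\mathrm{grad}_{\mathbf{M}_t}\mathcal{E}_t(\mathbf{z})$.

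\textbf{Descent inequality.} Because $\mathcal{E}_t$ is quadratic, Taylor's expansion is exact. With $\mathbf{r}:=\mathbf{L}_{\mathcal{F}}(t)\mathbf{z}$,
\[
\mathcal{E}_t(\mathbf{z}^+)=\mathcal{E}_t(\mathbf{z})-\eta\,\mathbf{r}^\top\mathbf{M}_t^{-1}\mathbf{r}+\tfrac{\eta^2}{2}\,\mathbf{r}^\top\mathbf{M}_t^{-1}\mathbf{L}_{\mathcal{F}}(t)\mathbf{M}_t^{-1}\mathbf{r}.
\]
Substituting $\mathbf{w}:=\mathbf{M}_t^{-1/2}\mathbf{r}$ turns the last quadratic form into $\mathbf{w}^\top\mathbf{B}_t\mathbf{w}\le\lambda_{\max}(\mathbf{B}_t)\|\mathbf{w}\|^2$. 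Since $\|\mathbf{w}\|^2=\|\mathbf{r}\|^2_{\mathbf{M}_t^{-1}}$, this gives the stated inequality. Strict descent at non-stationary points follows whenever the coefficient $1-\tfrac{\eta}{2}\lambda_{\max}(\mathbf{B}_t)$ is positive, i.e.\ whenever $\eta<2/\lambda_{\max}(\mathbf{B}_t)$. Both sufficient conditions in the statement then follow from the spectral bound below.

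\textbf{Spectral bound on $\mathbf{B}_t$.} This is the step needing care. One cannot simply pass to synchronized coordinates: $\mathbf{S}$ commutes with $\boldsymbol{\Delta}_t$ but not with $\mathbf{I}_n\otimes\mathbf{D}_\theta$. Instead I factor
\[
\mathbf{M}_t^{-1/2}=(\mathbf{D}_G^{-1/2}\otimes\mathbf{I}_d)(\mathbf{I}_n\otimes\mathbf{D}_\theta^{1/2}),
\]
where the two factors commute. This gives
\[
\mathbf{B}_t=(\mathbf{I}_n\otimes\mathbf{D}_\theta^{1/2})\,\widetilde{\mathbf{L}}_{\mathcal{F}}^{\mathrm{sym}}(t)\,(\mathbf{I}_n\otimes\mathbf{D}_\theta^{1/2}),
\]
which is symmetric positive semidefinite. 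A Rayleigh-quotient argument, using $\lambda_{\max}(\widetilde{\mathbf{L}}_{\mathcal{F}}^{\mathrm{sym}})\le 2$ from Corollary~\ref{cor:spectral_equivalence}, yields
\[
\mathbf{x}^\top\mathbf{B}_t\mathbf{x}\le 2\,\|(\mathbf{I}\otimes\mathbf{D}_\theta^{1/2})\mathbf{x}\|^2\le 2\lambda_{\max}(\mathbf{D}_\theta)\|\mathbf{x}\|^2 .
\]
Under $\eta\le 1/\lambda_{\max}(\mathbf{D}_\theta)$ the descent coefficient is therefore $\ge 0$, giving monotone non-increase of $\mathcal{E}_t$.

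\textbf{Non-expansiveness.} Write $T:=\mathbf{I}-\eta\mathbf{M}_t^{-1}\mathbf{L}_{\mathcal{F}}(t)$. Since $T$ is linear, $\|T\mathbf{x}-T\mathbf{y}\|_{\mathbf{M}_t}=\|T(\mathbf{x}-\mathbf{y})\|_{\mathbf{M}_t}$, so it suffices to bound $\|T\|_{\mathbf{M}_t}$. Conjugating, $\mathbf{M}_t^{1/2}T\mathbf{M}_t^{-1/2}=\mathbf{I}-\eta\mathbf{B}_t$, which is symmetric with eigenvalues in $[1-2\eta\lambda_{\max}(\mathbf{D}_\theta),\,1]\subseteq[-1,1]$. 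Hence $\|T\|_{\mathbf{M}_t}\le 1$.

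The main obstacle I anticipate is keeping the Kronecker commutations straight when forming $\mathbf{B}_t$. The remaining steps are the routine descent lemma.
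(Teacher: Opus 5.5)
Your proposal is correct and follows essentially the same route as the paper. Both arguments rest on the identity $\mathbf{P}\boldsymbol{\Delta}_t^{-1}=\mathbf{M}_t^{-1}$, on the factorization $\mathbf{B}_t=(\mathbf{I}_n\otimes\mathbf{D}_\theta^{1/2})\,\widetilde{\mathbf{L}}_{\mathcal{F}}^{\mathrm{sym}}(t)\,(\mathbf{I}_n\otimes\mathbf{D}_\theta^{1/2})$ combined with Corollary~\ref{cor:spectral_equivalence}, and on the spectrum of $\mathbf{I}-\eta\mathbf{B}_t$ for non-expansiveness. The one cosmetic difference is how you get the descent inequality: you use the exact quadratic expansion plus a Rayleigh bound on $\mathbf{w}^\top\mathbf{B}_t\mathbf{w}$, while the paper whitens to $\mathbf{y}=\mathbf{M}_t^{1/2}\mathbf{z}$ and expands in an eigenbasis of $\mathbf{B}_t$. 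These two derivations are equivalent.
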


\begin{proof}
Set $\mathbf{P}:=\mathbf{I}_n\otimes\mathbf{D}_\theta$ and recall $\mathbf{M}_t = \mathbf{D}_G\otimes\mathbf{D}_\theta^{-1} \succ 0$. Using the Kronecker identity $(\mathbf{D}_G^{-1}\otimes\mathbf{D}_\theta) = (\mathbf{I}_n\otimes\mathbf{D}_\theta)(\mathbf{D}_G^{-1}\otimes\mathbf{I}_d) = \mathbf{P}\,\boldsymbol{\Delta}_t^{-1}$, we have
\[
\mathbf{M}_t^{-1} \;=\; \mathbf{D}_G^{-1}\otimes\mathbf{D}_\theta \;=\; \mathbf{P}\,\boldsymbol{\Delta}_t^{-1}.
\]
Since $\nabla\mathcal{E}_t(\mathbf{z}) = \mathbf{L}_{\mathcal{F}}(t)\mathbf{z}$, the full active-graph update satisfies
\[
\mathbf{z}^+ \;=\; \mathbf{z} - \eta\,\mathbf{P}\,\widetilde{\mathbf{L}}_{\mathcal{F}}(t)\,\mathbf{z} \;=\; \mathbf{z} - \eta\,\mathbf{P}\,\boldsymbol{\Delta}_t^{-1}\,\mathbf{L}_{\mathcal{F}}(t)\,\mathbf{z} \;=\; \mathbf{z} - \eta\,\mathbf{M}_t^{-1}\,\nabla\mathcal{E}_t(\mathbf{z}),
\]
the $\mathbf{M}_t$-gradient step.

Let $\mathbf{y} := \mathbf{M}_t^{1/2}\mathbf{z}$. In these whitened coordinates the update is $\mathbf{y}^+ = (\mathbf{I} - \eta\mathbf{B}_t)\mathbf{y}$, and
$\mathcal{E}_t(\mathbf{z}) = \tfrac12 \mathbf{y}^\top\mathbf{B}_t\mathbf{y}$. Because $\mathbf{B}_t$ is symmetric PSD, its eigendecomposition $\mathbf{B}_t = \mathbf{U}\boldsymbol{\Lambda}\mathbf{U}^\top$ with $\mathbf{y} = \sum_i \alpha_i\mathbf{u}_i$ gives
\[
\mathcal{E}_t(\mathbf{z}^+) - \mathcal{E}_t(\mathbf{z})
= -\eta\sum_i \lambda_i(\mathbf{B}_t)^2\Big(1 - \tfrac{\eta}{2}\lambda_i(\mathbf{B}_t)\Big)\alpha_i^2
\le
-\eta\Big(1 - \tfrac{\eta}{2}\lambda_{\max}(\mathbf{B}_t)\Big)\,\|\mathbf{L}_{\mathcal{F}}(t)\mathbf{z}\|_{\mathbf{M}_t^{-1}}^2,
\]
where we used $\|\mathbf{L}_{\mathcal{F}}(t)\mathbf{z}\|_{\mathbf{M}_t^{-1}}^2 = \sum_i \lambda_i(\mathbf{B}_t)^2\alpha_i^2$. Hence $\mathcal{E}_t(\mathbf{z}^+)\le\mathcal{E}_t(\mathbf{z})$ whenever $0<\eta\le 2/\lambda_{\max}(\mathbf{B}_t)$. For $0<\eta<2/\lambda_{\max}(\mathbf{B}_t)$ the factor $1-\tfrac{\eta}{2}\lambda_i(\mathbf{B}_t)$ is strictly positive for every $i$, so the descent is strict at every non-stationary point ($\mathbf{L}_{\mathcal{F}}(t)\mathbf{z}\neq\mathbf{0}$); at the boundary $\eta=2/\lambda_{\max}(\mathbf{B}_t)$ the descent is strict unless $\mathbf{L}_{\mathcal{F}}(t)\mathbf{z}$ lies entirely in the top eigenspace of $\mathbf{B}_t$, a measure-zero condition.

For the spectral bound, factor
$\mathbf{B}_t = (\mathbf{I}_n\otimes\mathbf{D}_\theta^{1/2})\,\widetilde{\mathbf{L}}_{\mathcal{F}}^{\mathrm{sym}}(t)\,(\mathbf{I}_n\otimes\mathbf{D}_\theta^{1/2})$ where $\widetilde{\mathbf{L}}_{\mathcal{F}}^{\mathrm{sym}}(t) = \boldsymbol{\Delta}_t^{-1/2}\mathbf{L}_{\mathcal{F}}(t)\boldsymbol{\Delta}_t^{-1/2}$. From Corollary~\ref{cor:spectral_equivalence}, $\widetilde{\mathbf{L}}_{\mathcal{F}}^{\mathrm{sym}}(t)\preceq 2\mathbf{I}$, hence
$\mathbf{B}_t\preceq 2(\mathbf{I}_n\otimes\mathbf{D}_\theta)\preceq 2\lambda_{\max}(\mathbf{D}_\theta)\mathbf{I}$.
Therefore $\eta \le 1/\lambda_{\max}(\mathbf{D}_\theta)$ is sufficient for
monotone energy non-increase.

The same eigendecomposition gives non-expansiveness in the $\mathbf{M}_t$-norm.
For any input difference vector $\mathbf{w}$, let $\mathbf{w}^+$ denote the
corresponding output difference after one update step, set
$\mathbf{r}:=\mathbf{M}_t^{1/2}\mathbf{w}$ and write
$\mathbf{r}=\sum_i \beta_i\mathbf{u}_i$ in an eigenbasis of
$\mathbf{B}_t$. Then
\[
\|\mathbf{w}^+ \|_{\mathbf{M}_t}^2
=
\|(\mathbf{I}-\eta\mathbf{B}_t)\mathbf{r}\|_2^2
=
\sum_i (1-\eta\lambda_i(\mathbf{B}_t))^2\beta_i^2 .
\]
If $0<\eta\le 2/\lambda_{\max}(\mathbf{B}_t)$, every multiplier satisfies
$|1-\eta\lambda_i(\mathbf{B}_t)|\le1$, so the update is non-expansive.
If $\lambda_{\max}(\mathbf{B}_t)=0$ this is immediate; otherwise the displayed
spectral bound implies
$1/\lambda_{\max}(\mathbf{D}_\theta)\le 2/\lambda_{\max}(\mathbf{B}_t)$.
Hence the sufficient condition in the theorem guarantees non-expansiveness.
\end{proof}

\begin{corollary}[Monotone energy descent in the normalized-only case]
\label{cor:energy_descent}
Under the assumptions of Theorems~\ref{thm:energy} and~\ref{thm:feature_scaled} with $\mathbf{D}_\theta=\mathbf{I}_d$, and for $0<\eta\le 1$,
\[
\mathcal{E}_t(\mathbf{z}^+)
\;\le\;
\mathcal{E}_t(\mathbf{z}) \;-\; \eta\Big(1 - \tfrac{\eta}{2}\,\lambda_{\max}(\widetilde{\mathbf{L}}_{\mathcal{F}}^{\mathrm{sym}}(t))\Big)\,\|\mathbf{L}_{\mathcal{F}}(t)\mathbf{z}\|_{\boldsymbol{\Delta}_t^{-1}}^2.
\]
In particular $\mathcal{E}_t(\mathbf{z}^+) \le \mathcal{E}_t(\mathbf{z})$, with strict inequality whenever $\mathbf{L}_{\mathcal{F}}(t)\mathbf{z} \neq \mathbf{0}$ and $0<\eta<1$.
\end{corollary}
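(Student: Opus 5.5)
The plan is to specialize Theorem~\ref{thm:feature_scaled} to $\mathbf{D}_\theta=\mathbf{I}_d$ and then re-express each quantity in the normalized-only notation of Theorem~\ref{thm:energy}. With $\mathbf{D}_\theta=\mathbf{I}_d$ we get $\mathbf{P}=\mathbf{I}_{nd}$, so the update $\mathbf{z}^+=\mathbf{z}-\eta\widetilde{\mathbf{L}}_{\mathcal{F}}(t)\mathbf{z}$ is exactly the normalized-only step of Theorem~\ref{thm:energy}. The metric also simplifies: $\mathbf{M}_t=\mathbf{D}_G\otimes\mathbf{I}_d=\boldsymbol{\Delta}_t$. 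Hence $\|\mathbf{L}_{\mathcal{F}}(t)\mathbf{z}\|_{\mathbf{M}_t^{-1}}^2=\|\mathbf{L}_{\mathcal{F}}(t)\mathbf{z}\|_{\boldsymbol{\Delta}_t^{-1}}^2$. Finally, $\mathbf{B}_t=\boldsymbol{\Delta}_t^{-1/2}\mathbf{L}_{\mathcal{F}}(t)\boldsymbol{\Delta}_t^{-1/2}=\widetilde{\mathbf{L}}_{\mathcal{F}}^{\mathrm{sym}}(t)$, by the definition in Eq.~\eqref{eq:norm_sheaf_lap}. Substituting these three identifications into the descent inequality of Theorem~\ref{thm:feature_scaled} gives the displayed bound verbatim.

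For the monotonicity claim, invoke Corollary~\ref{cor:spectral_equivalence}, which gives $\lambda_{\max}(\widetilde{\mathbf{L}}_{\mathcal{F}}^{\mathrm{sym}}(t))\le 2$. For $0<\eta\le 1$ this yields $\tfrac{\eta}{2}\lambda_{\max}\le \eta\le 1$. The coefficient $\eta(1-\tfrac{\eta}{2}\lambda_{\max})$ is therefore nonnegative, and $\mathcal{E}_t(\mathbf{z}^+)\le\mathcal{E}_t(\mathbf{z})$ follows. This is also the case $\lambda_{\max}(\mathbf{D}_\theta)=1$ of the step-size condition in Theorem~\ref{thm:feature_scaled}.

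For strictness with $0<\eta<1$, we have $\tfrac{\eta}{2}\lambda_{\max}\le\eta<1$, so the coefficient is strictly positive. It remains to show that the norm term is strictly positive whenever $\mathbf{L}_{\mathcal{F}}(t)\mathbf{z}\neq\mathbf{0}$. This holds because $\boldsymbol{\Delta}_t^{-1}\succ0$: the safe-degree convention $\bar d_a\ge1$ makes every diagonal block positive. Hence $\|\mathbf{L}_{\mathcal{F}}(t)\mathbf{z}\|_{\boldsymbol{\Delta}_t^{-1}}^2>0$, and the inequality is strict.

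I expect no step to be a genuine obstacle. The only point needing care is confirming that $\mathbf{B}_t$ collapses exactly to $\widetilde{\mathbf{L}}_{\mathcal{F}}^{\mathrm{sym}}(t)$, and that $\boldsymbol{\Delta}_t$ is the same matrix in both theorems, including the $\max\{\cdot,1\}$ convention for isolated nodes. That convention is what keeps $\mathbf{M}_t$ invertible, so the square roots and the weighted norm are well defined.
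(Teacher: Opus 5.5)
Your proposal is correct and matches the paper's proof: it specializes Theorem~\ref{thm:feature_scaled} to $\mathbf{D}_\theta=\mathbf{I}_d$, so that $\mathbf{M}_t=\boldsymbol{\Delta}_t$ and $\mathbf{B}_t=\widetilde{\mathbf{L}}_{\mathcal{F}}^{\mathrm{sym}}(t)$, and then uses $\lambda_{\max}(\widetilde{\mathbf{L}}_{\mathcal{F}}^{\mathrm{sym}}(t))\le 2$ from Corollary~\ref{cor:spectral_equivalence} to sign the descent coefficient. Your added remark that $\boldsymbol{\Delta}_t^{-1}\succ 0$ (by the safe-degree convention), so the weighted norm is strictly positive whenever $\mathbf{L}_{\mathcal{F}}(t)\mathbf{z}\neq\mathbf{0}$, makes explicit a small step the paper leaves implicit in the strictness claim.
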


\begin{proof}
This is the $\mathbf{D}_\theta=\mathbf{I}_d$ specialization of
Theorem~\ref{thm:feature_scaled}: substituting
$\mathbf{D}_\theta=\mathbf{I}_d$ in that theorem yields
$\mathbf{M}_t = \boldsymbol{\Delta}_t$ and
$\mathbf{B}_t = \boldsymbol{\Delta}_t^{-1/2}\mathbf{L}_{\mathcal{F}}(t)
\boldsymbol{\Delta}_t^{-1/2}
= \widetilde{\mathbf{L}}_{\mathcal{F}}^{\mathrm{sym}}(t)$.
The descent inequality from Theorem~\ref{thm:feature_scaled} gives the stated
bound, and Corollary~\ref{cor:spectral_equivalence} provides
$\lambda_{\max}(\widetilde{\mathbf{L}}_{\mathcal{F}}^{\mathrm{sym}}(t))\le 2$,
so the descent coefficient is nonnegative for $0<\eta\le1$ and strictly
positive at non-stationary points when $0<\eta<1$.
\end{proof}

\subsubsection{Proof of Theorem~\ref{thm:nonexpansive} (Firm Non-Expansiveness, Normalized-Only Case)}
\label{app:proof_nonexpansive}

\begin{theorem}[Firm non-expansiveness, normalized-only case]
\label{thm:nonexpansive}
Set $\mathbf{D}_\theta=\mathbf{I}_d$ and define
\[
    \mathbf{T}_t(\mathbf{z})
    :=
    \mathbf{z}
    -
    \eta\,\widetilde{\mathbf{L}}_{\mathcal{F}}(t)\mathbf{z}.
\]
For any $0<\eta\le\tfrac12$, $\mathbf{T}_t$ is firmly non-expansive in
the $\boldsymbol{\Delta}_t$-inner product:
\begin{equation}
    \|\mathbf{T}_t(\mathbf{z}_1)-\mathbf{T}_t(\mathbf{z}_2)\|_{\boldsymbol{\Delta}_t}^2
    +
    \|(\mathbf{I}-\mathbf{T}_t)(\mathbf{z}_1)
      -(\mathbf{I}-\mathbf{T}_t)(\mathbf{z}_2)\|_{\boldsymbol{\Delta}_t}^2
    \le
    \|\mathbf{z}_1-\mathbf{z}_2\|_{\boldsymbol{\Delta}_t}^2.
    \label{eq:firm_nonexpansive}
\end{equation}
Firm non-expansiveness implies ordinary non-expansiveness and convergence of
the diffusion iterates to the kernel of $\widetilde{\mathbf{L}}_{\mathcal{F}}(t)$.
\end{theorem}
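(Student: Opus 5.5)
Since $\mathbf{T}_t$ is linear, it suffices to prove the inequality for a single difference vector $\mathbf{w}=\mathbf{z}_1-\mathbf{z}_2$. Writing $\mathbf{A}:=\eta\,\widetilde{\mathbf{L}}_{\mathcal{F}}(t)$, the claim becomes
\[
\|(\mathbf{I}-\mathbf{A})\mathbf{w}\|_{\boldsymbol{\Delta}_t}^2+\|\mathbf{A}\mathbf{w}\|_{\boldsymbol{\Delta}_t}^2\le\|\mathbf{w}\|_{\boldsymbol{\Delta}_t}^2 .
\]
The plan is to whiten in the $\boldsymbol{\Delta}_t$-metric, exactly as in the proof of Theorem~\ref{thm:feature_scaled} with $\mathbf{D}_\theta=\mathbf{I}_d$. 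Set $\mathbf{r}:=\boldsymbol{\Delta}_t^{1/2}\mathbf{w}$. Then
\[
\boldsymbol{\Delta}_t^{1/2}\widetilde{\mathbf{L}}_{\mathcal{F}}(t)\boldsymbol{\Delta}_t^{-1/2}=\widetilde{\mathbf{L}}_{\mathcal{F}}^{\mathrm{sym}}(t)=:\mathbf{B},
\]
which is symmetric PSD with $\sigma(\mathbf{B})\subseteq[0,2]$ by Corollary~\ref{cor:spectral_equivalence}. In these coordinates all three $\boldsymbol{\Delta}_t$-norms become Euclidean norms of $(\mathbf{I}-\eta\mathbf{B})\mathbf{r}$, $\eta\mathbf{B}\mathbf{r}$, and $\mathbf{r}$.

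Next, expand $\mathbf{r}=\sum_i\beta_i\mathbf{u}_i$ in an orthonormal eigenbasis of $\mathbf{B}$. The inequality then reduces to the scalar condition, for each eigenvalue $\lambda\in[0,2]$ and $s:=\eta\lambda$,
\[
(1-s)^2+s^2\le 1\iff 2s(s-1)\le 0\iff s\in[0,1].
\]
Since $0\le\eta\lambda\le 2\eta\le 1$ whenever $\eta\le\tfrac12$, every multiplier satisfies the condition. Summing over $i$ with weights $\beta_i^2$ gives the firm non-expansiveness inequality \eqref{eq:firm_nonexpansive}. Ordinary non-expansiveness follows at once by dropping the second, nonnegative term.

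For convergence, I will use the same eigen-expansion. Each mode with $\lambda_i=0$ is fixed. Each mode with $\lambda_i>0$ is multiplied by $1-\eta\lambda_i\in[0,1)$ at every step, because $\eta\lambda_i\in(0,1]$. Hence $(\mathbf{I}-\eta\mathbf{B})^K\mathbf{r}$ converges to the orthogonal projection of $\mathbf{r}$ onto $\ker\mathbf{B}$. Mapping back with $\boldsymbol{\Delta}_t^{-1/2}$, and using $\ker\widetilde{\mathbf{L}}_{\mathcal{F}}(t)=\boldsymbol{\Delta}_t^{-1/2}\ker\mathbf{B}$, the iterates $\mathbf{z}^{(K)}$ converge to a point of $\ker\widetilde{\mathbf{L}}_{\mathcal{F}}(t)$, namely the $\boldsymbol{\Delta}_t$-orthogonal projection of $\mathbf{z}^{(0)}$ onto it. One could alternatively cite the Krasnosel'ski\u{\i}--Mann theorem for firmly non-expansive maps, but the spectral argument is self-contained.

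The main point requiring care is the convention on isolated vertices. There $\bar d_a=1$ while the Laplacian row is zero, so $\boldsymbol{\Delta}_t$ remains positive definite and the whitening is legitimate. Those coordinates lie in the kernel and are fixed, consistent with the remark in \S\ref{sec:diffusion}. Beyond this, the only substantive input is the bound $\lambda_{\max}(\mathbf{B})\le2$, which Corollary~\ref{cor:spectral_equivalence} supplies and which forces the threshold $\eta\le\tfrac12$.
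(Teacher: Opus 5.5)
Your proposal is correct and follows essentially the same route as the paper: you whiten by $\boldsymbol{\Delta}_t^{1/2}$ to reduce to $\mathbf{I}-\eta\widetilde{\mathbf{L}}_{\mathcal{F}}^{\mathrm{sym}}(t)$, expand in an orthonormal eigenbasis, and check $(1-\eta\lambda)^2+\eta^2\lambda^2\le 1$ for $\eta\lambda\in[0,1]$ using the spectral bound from Corollary~\ref{cor:spectral_equivalence}. Your mode-by-mode convergence argument, with the identification $\ker\widetilde{\mathbf{L}}_{\mathcal{F}}(t)=\boldsymbol{\Delta}_t^{-1/2}\ker\widetilde{\mathbf{L}}_{\mathcal{F}}^{\mathrm{sym}}(t)$, spells out a step that the paper's proof only asserts in its final line.
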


\begin{proof}
Set $\mathbf{D}_\theta=\mathbf{I}_d$ and write $\widetilde{\mathbf{L}}_{\mathcal{F}}^{\mathrm{sym}} := \boldsymbol{\Delta}_t^{-1/2}\mathbf{L}_{\mathcal{F}}(t)\boldsymbol{\Delta}_t^{-1/2}$, which is symmetric positive semidefinite with eigenvalues in $[0,2]$ by Corollary~\ref{cor:spectral_equivalence}. Define $\mathbf{T}_t := \mathbf{I} - \eta\widetilde{\mathbf{L}}_{\mathcal{F}}(t)$ and let $\mathbf{w} := \mathbf{z}_1 - \mathbf{z}_2$. Whiten to $\mathbf{u} := \boldsymbol{\Delta}_t^{1/2}\mathbf{w}$, so $\|\mathbf{w}\|_{\boldsymbol{\Delta}_t}^2 = \|\mathbf{u}\|_2^2$. Conjugating by $\boldsymbol{\Delta}_t^{1/2}$ gives
\[
\boldsymbol{\Delta}_t^{1/2}\mathbf{T}_t\boldsymbol{\Delta}_t^{-1/2}
\;=\;
\mathbf{I} - \eta\,\boldsymbol{\Delta}_t^{1/2}\widetilde{\mathbf{L}}_{\mathcal{F}}(t)\boldsymbol{\Delta}_t^{-1/2}
\;=\;
\mathbf{I} - \eta\,\widetilde{\mathbf{L}}_{\mathcal{F}}^{\mathrm{sym}},
\]
and therefore
\[
\|\mathbf{T}_t\mathbf{w}\|_{\boldsymbol{\Delta}_t}
= \|(\mathbf{I}-\eta\widetilde{\mathbf{L}}_{\mathcal{F}}^{\mathrm{sym}})\mathbf{u}\|_2,
\qquad
\|(\mathbf{I}-\mathbf{T}_t)\mathbf{w}\|_{\boldsymbol{\Delta}_t} = \|\eta\widetilde{\mathbf{L}}_{\mathcal{F}}^{\mathrm{sym}}\mathbf{u}\|_2.
\]

Let $\{\mathbf{v}_i\}$ be an orthonormal eigenbasis of $\widetilde{\mathbf{L}}_{\mathcal{F}}^{\mathrm{sym}}$ with eigenvalues $0\le \lambda_1\le\cdots\le \lambda_m\le 2$, and write $\mathbf{u} = \sum_i \alpha_i\mathbf{v}_i$. Then
\[
(\mathbf{I} - \eta\widetilde{\mathbf{L}}_{\mathcal{F}}^{\mathrm{sym}})\mathbf{u}
= \sum_i (1-\eta\lambda_i)\alpha_i\mathbf{v}_i,
\qquad
\eta\widetilde{\mathbf{L}}_{\mathcal{F}}^{\mathrm{sym}}\mathbf{u}
= \sum_i \eta\lambda_i\alpha_i\mathbf{v}_i.
\]
Orthogonality of the eigenbasis $\{\mathbf{v}_i\}$ gives
\begin{align*}
\|\mathbf{T}_t\mathbf{w}\|_{\boldsymbol{\Delta}_t}^2 + \|(\mathbf{I}-\mathbf{T}_t)\mathbf{w}\|_{\boldsymbol{\Delta}_t}^2
&= \sum_i \big((1-\eta\lambda_i)^2 + \eta^2\lambda_i^2\big)\alpha_i^2 \\
&= \sum_i \big(1 - 2\eta\lambda_i + 2\eta^2\lambda_i^2\big)\alpha_i^2.
\end{align*}
For $0 < \eta \le \tfrac12$ and $\lambda_i\in[0,2]$ we have $\eta\lambda_i\in[0,1]$, hence $1 - 2\eta\lambda_i + 2\eta^2\lambda_i^2 \le 1$. Therefore
\[
\|\mathbf{T}_t\mathbf{w}\|_{\boldsymbol{\Delta}_t}^2 + \|(\mathbf{I}-\mathbf{T}_t)\mathbf{w}\|_{\boldsymbol{\Delta}_t}^2
\;\le\; \sum_i \alpha_i^2
\;=\; \|\mathbf{u}\|_2^2
\;=\; \|\mathbf{w}\|_{\boldsymbol{\Delta}_t}^2,
\]
which is Eq.~\eqref{eq:firm_nonexpansive}. Firm non-expansiveness in the $\boldsymbol{\Delta}_t$-metric implies ordinary non-expansiveness $\|\mathbf{T}_t\mathbf{w}\|_{\boldsymbol{\Delta}_t} \le \|\mathbf{w}\|_{\boldsymbol{\Delta}_t}$. Iterating $\mathbf{T}_t^k$ converges to the $\boldsymbol{\Delta}_t$-orthogonal projector onto $\ker\widetilde{\mathbf{L}}_{\mathcal{F}}(t) = \ker\mathbf{L}_{\mathcal{F}}(t)$.
\end{proof}

\subsection{Stored Frames and Drift Stability}
\label{app:stored_frame_proofs}

\subsubsection{Proof of Theorem~\ref{thm:stale} (Stored-Frame Transport and Frame-Drift Stability)}
\label{app:proof_stale}

\begin{theorem}[Stored-frame transport and frame-drift stability]
\label{thm:stale}
For each node $w$, let $s_w(t)\le t$ denote its last update time, and
set
\[
    \tilde{\mathbf{U}}_w(t)
    :=
    \mathbf{U}_w(s_w(t)),
    \qquad
    \tilde{\mathbf{Q}}_{ab}(t)
    :=
    \tilde{\mathbf{U}}_a(t)^\top\tilde{\mathbf{U}}_b(t).
\]
Then the stored-frame transport is still node-induced and
path-independent, with random-walk normalized operator
\[
\widetilde{\mathbf{L}}_{\mathcal{F}}^{\mathrm{stored}}(t)
=
\tilde{\mathbf{S}}(t)^\top
\big(
    \widetilde{\mathbf{L}}_G
    \otimes\mathbf{I}_d
\big)
\tilde{\mathbf{S}}(t).
\]
For any reference orthogonal frame family
$\{\hat{\mathbf{U}}_w(t)\}_{w\in V_t^{\mathrm{act}}}\subset O(d)$, let
$\widehat{\widetilde{\mathbf{L}}}_{\mathcal{F}}(t)$ denote the
random-walk normalized sheaf Laplacian induced by
$\{\hat{\mathbf{U}}_w(t)\}$, and define
\[
    \delta_t
    :=
    \max_{w\in V_t^{\mathrm{act}}}
    \|\tilde{\mathbf{U}}_w(t)-\hat{\mathbf{U}}_w(t)\|_2.
\]
Then
\[
\left\|
\widetilde{\mathbf{L}}_{\mathcal{F}}^{\mathrm{stored}}(t)
-
\widehat{\widetilde{\mathbf{L}}}_{\mathcal{F}}(t)
\right\|_{\boldsymbol{\Delta}_t\to\boldsymbol{\Delta}_t}
\le
2\,\delta_t.
\]
Moreover, for
\[
    \widetilde{\mathbf{T}}_t(\mathbf{z})
    :=
    \mathbf{z}
    -
    \eta\,
    \mathbf{P}\,
    \widetilde{\mathbf{L}}_{\mathcal{F}}^{\mathrm{stored}}(t)
    \mathbf{z},
\]
if $0<\eta\le 1/\lambda_{\max}(\mathbf{D}_\theta)$, then
\[
\|\widetilde{\mathbf{T}}_t\mathbf{x}
-
\widetilde{\mathbf{T}}_t\mathbf{y}\|_{\mathbf{M}_t}
\le
\big(
    1+
    2\eta\,\lambda_{\max}(\mathbf{D}_\theta)\,\delta_t
\big)
\|\mathbf{x}-\mathbf{y}\|_{\mathbf{M}_t}.
\]
\end{theorem}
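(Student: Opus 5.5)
The plan is to reduce everything to the synchronized-coordinate factorization of Proposition~\ref{prop:sync_equiv} and then bound the perturbation in whitened coordinates. First, each stored frame $\tilde{\mathbf{U}}_w(t)=\mathbf{U}_w(s_w(t))$ is a Householder product and hence orthogonal. Therefore Lemma~\ref{lem:flat_transport} applies verbatim to $\tilde{\mathbf{Q}}_{ab}(t)=\tilde{\mathbf{U}}_a^\top\tilde{\mathbf{U}}_b$, which gives path-independence. The proof of Proposition~\ref{prop:sync_equiv} used only orthogonality of the node frames and the commutation $\tilde{\mathbf{S}}\boldsymbol{\Delta}_t=\boldsymbol{\Delta}_t\tilde{\mathbf{S}}$, with $\tilde{\mathbf{S}}:=\blkdiag(\tilde{\mathbf{U}}_w)$. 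Repeating it yields $\widetilde{\mathbf{L}}^{\mathrm{stored}}_{\mathcal{F}}=\tilde{\mathbf{S}}^\top(\widetilde{\mathbf{L}}_G\otimes\mathbf{I}_d)\tilde{\mathbf{S}}$. The same argument gives $\widehat{\widetilde{\mathbf{L}}}_{\mathcal{F}}=\hat{\mathbf{S}}^\top(\widetilde{\mathbf{L}}_G\otimes\mathbf{I}_d)\hat{\mathbf{S}}$ for the reference family.

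For the operator-norm bound, I will use $\|\mathbf{A}\|_{\boldsymbol{\Delta}_t\to\boldsymbol{\Delta}_t}=\|\boldsymbol{\Delta}_t^{1/2}\mathbf{A}\boldsymbol{\Delta}_t^{-1/2}\|_2$ and commute $\boldsymbol{\Delta}_t^{\pm1/2}$ through $\tilde{\mathbf{S}},\hat{\mathbf{S}}$. This turns the difference into $\tilde{\mathbf{S}}^\top\mathbf{K}\tilde{\mathbf{S}}-\hat{\mathbf{S}}^\top\mathbf{K}\hat{\mathbf{S}}$ with $\mathbf{K}:=\widetilde{\mathbf{L}}_G^{\mathrm{sym}}\otimes\mathbf{I}_d$. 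The naive splitting $(\tilde{\mathbf{S}}-\hat{\mathbf{S}})^\top\mathbf{K}\tilde{\mathbf{S}}+\hat{\mathbf{S}}^\top\mathbf{K}(\tilde{\mathbf{S}}-\hat{\mathbf{S}})$ together with $\|\mathbf{K}\|_2\le2$ only yields $4\delta_t$. This is the main obstacle: the constant.

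To obtain $2\delta_t$, I will decompose $\widetilde{\mathbf{L}}_G^{\mathrm{sym}}=\boldsymbol{\Pi}_G-\mathbf{D}_G^{-1/2}\mathbf{A}_G\mathbf{D}_G^{-1/2}$. Here $\boldsymbol{\Pi}_G$ is the diagonal $0/1$ matrix that zeros isolated vertices.
\begin{itemize}
\item The diagonal part conjugates to itself, since $\tilde{\mathbf{U}}_w^\top\tilde{\mathbf{U}}_w=\hat{\mathbf{U}}_w^\top\hat{\mathbf{U}}_w=\mathbf{I}$ blockwise. It therefore cancels exactly in the difference.
\item Only the adjacency part $\mathbf{N}:=(\mathbf{D}_G^{-1/2}\mathbf{A}_G\mathbf{D}_G^{-1/2})\otimes\mathbf{I}_d$ survives. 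Its norm satisfies $\|\mathbf{N}\|_2\le1$: for non-isolated vertices, $\mathbf{D}_G^{-1/2}\mathbf{A}_G\mathbf{D}_G^{-1/2}$ has spectrum in $[-1,1]$, and isolated vertices contribute zero rows and columns.
\item Since $\tilde{\mathbf{S}}-\hat{\mathbf{S}}$ is block diagonal, $\|\tilde{\mathbf{S}}-\hat{\mathbf{S}}\|_2=\max_w\|\tilde{\mathbf{U}}_w-\hat{\mathbf{U}}_w\|_2=\delta_t$. Also $\|\tilde{\mathbf{S}}\|_2=\|\hat{\mathbf{S}}\|_2=1$.
\end{itemize}
The two-term splitting applied to $\mathbf{N}$ then gives the bound $\delta_t+\delta_t=2\delta_t$.

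For the Lipschitz statement, $\widetilde{\mathbf{T}}_t$ is linear, so it suffices to bound its $\mathbf{M}_t$-operator norm. I write $\widetilde{\mathbf{T}}_t=\widehat{\mathbf{T}}_t-\eta\mathbf{P}\big(\widetilde{\mathbf{L}}^{\mathrm{stored}}_{\mathcal{F}}-\widehat{\widetilde{\mathbf{L}}}_{\mathcal{F}}\big)$, where $\widehat{\mathbf{T}}_t$ is the update with reference frames.
\begin{itemize}
\item Theorem~\ref{thm:feature_scaled} applies to any orthogonal frame family, so $\widehat{\mathbf{T}}_t$ is $\mathbf{M}_t$-non-expansive for $\eta\le1/\lambda_{\max}(\mathbf{D}_\theta)$.
\item For the perturbation, I conjugate by $\mathbf{M}_t^{1/2}=\boldsymbol{\Delta}_t^{1/2}(\mathbf{I}_n\otimes\mathbf{D}_\theta^{-1/2})$. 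Because $\boldsymbol{\Delta}_t$ commutes with $\mathbf{I}_n\otimes\mathbf{D}_\theta^{\pm1/2}$, one gets $\mathbf{M}_t^{1/2}\mathbf{P}\mathbf{E}'\mathbf{M}_t^{-1/2}=(\mathbf{I}_n\otimes\mathbf{D}_\theta^{1/2})\,\boldsymbol{\Delta}_t^{1/2}\mathbf{E}'\boldsymbol{\Delta}_t^{-1/2}\,(\mathbf{I}_n\otimes\mathbf{D}_\theta^{1/2})$, with $\mathbf{E}'$ the Laplacian difference.
\item The middle factor has norm at most $2\delta_t$ by the previous step, and each outer factor has norm $\lambda_{\max}(\mathbf{D}_\theta)^{1/2}$.
\end{itemize}
The triangle inequality then gives the factor $1+2\eta\lambda_{\max}(\mathbf{D}_\theta)\delta_t$.
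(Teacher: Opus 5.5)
Your proposal is correct and follows the paper's proof essentially step for step. It uses the same $\boldsymbol{\Pi}_G-\mathbf{N}_G$ split: the scalar diagonal part cancels under conjugation, so only $\|\mathbf{N}_G\|_2\le 1$ enters the two-term splitting, which gives $2\delta_t$ rather than the naive $4\delta_t$. It then uses the same triangle-inequality argument, with Theorem~\ref{thm:feature_scaled} for the reference update and the conjugation $\mathbf{M}_t^{1/2}\mathbf{P}\mathbf{E}_t\mathbf{M}_t^{-1/2}=(\mathbf{I}_n\otimes\mathbf{D}_\theta^{1/2})\,\mathbf{E}_t^{\mathrm{sym}}\,(\mathbf{I}_n\otimes\mathbf{D}_\theta^{1/2})$ for the perturbation term. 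You are slightly more explicit than the paper in two places: path-independence is Lemma~\ref{lem:flat_transport} applied verbatim, and Theorem~\ref{thm:feature_scaled} holds for an arbitrary orthogonal reference family.
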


\begin{proof}
Define the stored and refreshed synchronization operators and update maps
\[
\tilde{\mathbf{S}}(t) := \blkdiag(\tilde{\mathbf{U}}_w(t))_{w\in V_t^{\mathrm{act}}},
\qquad
\hat{\mathbf{S}}(t) := \blkdiag(\hat{\mathbf{U}}_w(t))_{w\in V_t^{\mathrm{act}}},
\]
\[
\tilde{\mathbf{T}}_t(\mathbf{z}) := \mathbf{z} - \eta\,\mathbf{P}\,\widetilde{\mathbf{L}}_{\mathcal{F}}^{\mathrm{stored}}(t)\mathbf{z},
\qquad
\hat{\mathbf{T}}_t(\mathbf{z}) := \mathbf{z} - \eta\,\mathbf{P}\,\widehat{\widetilde{\mathbf{L}}}_{\mathcal{F}}(t)\mathbf{z}.
\]
The stored-frame factorization then follows from the argument of Proposition~\ref{prop:sync_equiv} with $\mathbf{S}$ replaced by $\tilde{\mathbf{S}}(t)$: since each $\tilde{\mathbf{U}}_w(t)$ is orthogonal, the derivation $\mathbf{L}_{\mathcal{F}} = \tilde{\mathbf{S}}^\top(\mathbf{L}_G\otimes\mathbf{I}_d)\tilde{\mathbf{S}}$ is unchanged, and the blockwise commutation of $\boldsymbol{\Delta}_t$ with $\tilde{\mathbf{S}}$ gives $\widetilde{\mathbf{L}}_{\mathcal{F}}^{\mathrm{stored}} = \tilde{\mathbf{S}}^\top(\widetilde{\mathbf{L}}_G\otimes\mathbf{I}_d)\tilde{\mathbf{S}}$.

We prove the two bounds in turn.

\textbf{Operator bound.} Let
$\mathbf{N}_G := \mathbf{D}_G^{-1/2}\mathbf{A}_G\mathbf{D}_G^{-1/2}$
be the symmetric normalized adjacency of the active graph.
Let $\boldsymbol{\Pi}_G:=\diag(\mathbf{1}_{\{d_a^0>0\}})_{a\in V}$. Under the
isolated-node convention,
$\widetilde{\mathbf{L}}_G^{\mathrm{sym}}=\boldsymbol{\Pi}_G-\mathbf{N}_G$:
the $\boldsymbol{\Pi}_G$ term is block-diagonal with scalar blocks and therefore
commutes with both $\tilde{\mathbf{S}}(t)$ and $\hat{\mathbf{S}}(t)$, so it
cancels in the difference between the stored and refreshed operators. The
standard edgewise bound gives $\|\mathbf{N}_G\|_2\le1$. Using
Proposition~\ref{prop:sync_equiv},
\[
\widetilde{\mathbf{L}}_{\mathcal{F}}^{\mathrm{stored,sym}}(t) - \widehat{\widetilde{\mathbf{L}}}_{\mathcal{F}}^{\mathrm{sym}}(t)
=
\hat{\mathbf{S}}(t)^\top(\mathbf{N}_G\otimes\mathbf{I}_d)\hat{\mathbf{S}}(t)
\;-\;
\tilde{\mathbf{S}}(t)^\top(\mathbf{N}_G\otimes\mathbf{I}_d)\tilde{\mathbf{S}}(t),
\]
and $\|\tilde{\mathbf{S}}(t) - \hat{\mathbf{S}}(t)\|_2 = \delta_t$ because $\tilde{\mathbf{S}}, \hat{\mathbf{S}}$ are block-diagonal. A standard $a b - c d = (a-c)b + c(b-d)$ split therefore yields
\[
\|\widetilde{\mathbf{L}}_{\mathcal{F}}^{\mathrm{stored,sym}}(t) - \widehat{\widetilde{\mathbf{L}}}_{\mathcal{F}}^{\mathrm{sym}}(t)\|_2
\le 2\,\|\mathbf{N}_G\otimes\mathbf{I}_d\|_2\,\delta_t \le 2\delta_t.
\]
Since
$\|\mathbf{A}\|_{\boldsymbol{\Delta}_t\to\boldsymbol{\Delta}_t}
= \|\boldsymbol{\Delta}_t^{1/2}\mathbf{A}\boldsymbol{\Delta}_t^{-1/2}\|_2$,
this is equivalent to
$\|\widetilde{\mathbf{L}}_{\mathcal{F}}^{\mathrm{stored}}(t)
- \widehat{\widetilde{\mathbf{L}}}_{\mathcal{F}}(t)\|_{\boldsymbol{\Delta}_t\to\boldsymbol{\Delta}_t}
\le 2\delta_t$.

\textbf{Lipschitz bound.} Let
$\mathbf{E}_t := \widetilde{\mathbf{L}}_{\mathcal{F}}^{\mathrm{stored}}(t)
- \widehat{\widetilde{\mathbf{L}}}_{\mathcal{F}}(t)$ and
$\mathbf{w} := \mathbf{x}-\mathbf{y}$, so
$\tilde{\mathbf{T}}_t\mathbf{w}
= \hat{\mathbf{T}}_t\mathbf{w} - \eta\,\mathbf{P}\mathbf{E}_t\mathbf{w}$.
Theorem~\ref{thm:feature_scaled} gives
$\|\hat{\mathbf{T}}_t\mathbf{w}\|_{\mathbf{M}_t}\le\|\mathbf{w}\|_{\mathbf{M}_t}$
under the stated step-size condition. For the perturbation term, set
$\mathbf{E}_t^{\mathrm{sym}} := \boldsymbol{\Delta}_t^{1/2}\mathbf{E}_t\boldsymbol{\Delta}_t^{-1/2}$
with $\|\mathbf{E}_t^{\mathrm{sym}}\|_2\le 2\delta_t$; then
\[
\mathbf{M}_t^{1/2}\mathbf{P}\mathbf{E}_t\mathbf{M}_t^{-1/2}
=
(\mathbf{I}_n\otimes\mathbf{D}_\theta^{1/2})\,\mathbf{E}_t^{\mathrm{sym}}\,(\mathbf{I}_n\otimes\mathbf{D}_\theta^{1/2}),
\]
hence $\|\mathbf{P}\mathbf{E}_t\|_{\mathbf{M}_t\to\mathbf{M}_t} \le \lambda_{\max}(\mathbf{D}_\theta)\,\|\mathbf{E}_t^{\mathrm{sym}}\|_2 \le 2\lambda_{\max}(\mathbf{D}_\theta)\,\delta_t$. Combining,
\[
\|\tilde{\mathbf{T}}_t\mathbf{x} - \tilde{\mathbf{T}}_t\mathbf{y}\|_{\mathbf{M}_t}
\le
\|\mathbf{w}\|_{\mathbf{M}_t} + 2\eta\,\lambda_{\max}(\mathbf{D}_\theta)\,\delta_t\,\|\mathbf{w}\|_{\mathbf{M}_t}.
\]
\end{proof}

\begin{remark}[Frame-drift control by $\mathcal{L}_{\text{smooth}}$]
\label{rem:drift_control}
Choosing the always-refreshed reference $\hat{\mathbf{U}}_w(t):=\mathbf{U}_w(t)$
in Theorem~\ref{thm:stale}, Lemma~\ref{lem:houshlip} gives
$\|\mathbf{U}_w(t)-\mathbf{U}_w(t^-)\|_F
\le (8\sqrt{k}/\tau_{\mathrm{H}})\|\Delta\mathbf{F}_w(t)\|_F$ between consecutive event updates.
Thus $\mathcal{L}_{\text{smooth}}$ controls $\|\mathbf{U}_w(t)-\mathbf{U}_w(t^-)\|_F^2$ via the explicit constant $64k/\tau_{\mathrm{H}}^2$.
\end{remark}

\begin{remark}[Step-size bound and scope]
\label{rem:spectral_bound}
The normalized-only condition $\eta\le\tfrac12$ and the feature-scaled condition
$\eta\le1/\lambda_{\max}(\mathbf{D}_\theta)$ are independent of the maximum
active degree. These guarantees apply to the frozen-frame diffusion operator;
the full \tsnn{} model also includes evolving frames, carry-over across frame
changes, and nonlinear message and recurrent updates in local coordinates.
\end{remark}

\subsubsection{Step-Size Derivation for Remark~\ref{rem:spectral_bound}}
\label{app:proof_step_size}

\begin{proof}[Derivation]
We justify the step-size bound. By Corollary~\ref{cor:spectral_equivalence},
$\sigma(\widetilde{\mathbf{L}}_{\mathcal{F}}^{\mathrm{sym}}(t))\subseteq[0,2]$
independent of $d_{\max}^{\mathrm{act}}$. The bound follows directly from the
Rayleigh quotient of the symmetric-normalized graph Laplacian under our
isolated-node convention. For any $\mathbf{x}\neq\mathbf{0}$, set
$\mathbf{y}=\mathbf{D}_G^{-1/2}\mathbf{x}$. Then
\[
\mathbf{x}^\top\widetilde{\mathbf{L}}_G^{\mathrm{sym}}\mathbf{x}
=
\sum_{(u,v)\in E}(y_u-y_v)^2
\ge 0,
\]
and, using $(a-b)^2\le2a^2+2b^2$ edgewise,
\[
\mathbf{x}^\top\widetilde{\mathbf{L}}_G^{\mathrm{sym}}\mathbf{x}
\le
2\sum_v d_v^0y_v^2
\le
2\sum_v d_v y_v^2
=2\mathbf{x}^\top\mathbf{x}.
\]
Thus $\sigma(\widetilde{\mathbf{L}}_G^{\mathrm{sym}})\subseteq[0,2]$ and, by
Corollary~\ref{cor:spectral_equivalence},
$\sigma(\widetilde{\mathbf{L}}_{\mathcal{F}}^{\mathrm{sym}}(t))\subseteq[0,2]$.
Similarly, for $\mathbf{N}_G=\mathbf{D}_G^{-1/2}\mathbf{A}_G\mathbf{D}_G^{-1/2}$,
applying $|2 y_u y_v| \le y_u^2 + y_v^2$ edgewise gives
\[
\left|\mathbf{x}^\top\mathbf{D}_G^{-1/2}\mathbf{A}_G\mathbf{D}_G^{-1/2}\mathbf{x}\right|
=|\mathbf{y}^\top\mathbf{A}_G\mathbf{y}|
\le \sum_v d_v^0 y_v^2
\le \sum_v d_v y_v^2
= \mathbf{x}^\top\mathbf{x}.
\]
Hence $\|\mathbf{N}_G\|_2\le1$, as used in
Appendix~\ref{app:proof_stale}. Theorem~\ref{thm:nonexpansive} then gives the
normalized-only step-size bound
\[
0 < \eta \le \tfrac12.
\]
With the learned feature gain $\mathbf{D}_\theta = \softplus(\boldsymbol{\theta})$, Theorem~\ref{thm:feature_scaled} gives the sufficient feature-scaled condition
\[
0 < \eta \le \frac{1}{\lambda_{\max}(\mathbf{D}_\theta)}.
\]
Because we initialize $\boldsymbol{\theta}=\mathbf{0}$, giving $\mathbf{D}_\theta = \softplus(0)\mathbf{I}_d = \log(2)\,\mathbf{I}_d \approx 0.693\,\mathbf{I}_d$ at step~0, the bound starts as $\eta \le 1/\log(2)\approx 1.44$. Our choices $\eta\in\{0.005,0.01\}$, capped at $\eta\le0.02$, therefore lie well inside the guaranteed regime, with a margin that covers any subsequent growth of $\mathbf{D}_\theta$ during training.
\end{proof}

\subsection{Scope of Nonlinear Local Modules}
\label{app:nonlinear_scope}

\subsubsection{Why the Nonlinear Modules Do Not Reduce to Shared-Space Operations}
\label{app:proof_nonreducible}

We rigorously characterize the class of functions that admit a shared-space
reduction and prove that \tsnn{}'s message and recurrent modules lie strictly
outside this class.

Consider a generic local message function
\[
\mathbf{m}_{u\leftarrow v} = \Psi\!\big(\mathbf{h}_u,\,\mathbf{Q}_{uv}\mathbf{h}_v,\,\xi_{uv}\big),
\]
where $\xi_{uv}$ bundles event features, time encodings, and relation embeddings. Writing $\mathbf{g}_u = \mathbf{U}_u \mathbf{h}_u$ and $\mathbf{g}_v = \mathbf{U}_v \mathbf{h}_v$, we have $\mathbf{h}_u = \mathbf{U}_u^\top \mathbf{g}_u$ and $\mathbf{Q}_{uv}\mathbf{h}_v = \mathbf{U}_u^\top \mathbf{g}_v$, so the message becomes
\[
\mathbf{m}_{u\leftarrow v} = \Psi\!\big(\mathbf{U}_u^\top \mathbf{g}_u,\,\mathbf{U}_u^\top \mathbf{g}_v,\,\xi_{uv}\big).
\]
The synchronized-coordinate reduction asks for a shared-space function $\bar\Psi$ satisfying
\begin{equation}
\Psi\!\big(\mathbf{U}^\top \mathbf{a},\mathbf{U}^\top \mathbf{b},\xi\big)
=
\mathbf{U}^\top \bar\Psi(\mathbf{a},\mathbf{b},\xi)
\qquad
\text{for all } \mathbf{U}\in O(d).
\label{eq:psi_equiv}
\end{equation}

\paragraph{Characterization.} We characterize exactly which $\Psi$
satisfy Eq.~\eqref{eq:psi_equiv}.

\begin{proposition}[Equivariant message functions]
\label{prop:psi_equiv}
A continuous map $\Psi:\RR^d\!\times\!\RR^d\!\times\!\Xi \to \RR^d$ satisfies Eq.~\eqref{eq:psi_equiv} if and only if for $d\ge 2$ it admits the representation
\begin{equation}
\Psi(\mathbf{a},\mathbf{b},\xi) = \alpha\big(\mathcal{I}(\mathbf{a},\mathbf{b}),\xi\big)\,\mathbf{a} + \beta\big(\mathcal{I}(\mathbf{a},\mathbf{b}),\xi\big)\,\mathbf{b},
\label{eq:invariant_form}
\end{equation}
where $\mathcal{I}(\mathbf{a},\mathbf{b}) := (\|\mathbf{a}\|^2,\|\mathbf{b}\|^2,\langle\mathbf{a},\mathbf{b}\rangle)$ is the
$O(d)$-invariant scalar triple of $(\mathbf{a},\mathbf{b})$ and $\alpha,\beta$
are continuous scalar functions on the Gram cone
$\{(r,s,c):r\ge0,\ s\ge0,\ c^2\le rs\}$ (with any continuous extension to
$\RR^3$ when convenient).
\end{proposition}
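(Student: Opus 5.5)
The plan is to first reduce Eq.~\eqref{eq:psi_equiv} to plain $O(d)$-equivariance of $\Psi$, then show that the output of an equivariant map always lies in $\mathrm{span}\{\mathbf{a},\mathbf{b}\}$, and finally read off the coefficients from orbit invariants.

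\emph{Reduction.} Setting $\mathbf{U}=\mathbf{I}$ in Eq.~\eqref{eq:psi_equiv} forces $\bar\Psi=\Psi$. The condition is therefore equivalent to $\Psi(\mathbf{U}^\top\mathbf{a},\mathbf{U}^\top\mathbf{b},\xi)=\mathbf{U}^\top\Psi(\mathbf{a},\mathbf{b},\xi)$ for all $\mathbf{U}\in O(d)$, i.e.\ $O(d)$-equivariance in the vector arguments with $\xi$ as a passive parameter.

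\emph{Sufficiency.} This is immediate. $\mathcal{I}(\mathbf{U}^\top\mathbf{a},\mathbf{U}^\top\mathbf{b})=\mathcal{I}(\mathbf{a},\mathbf{b})$, so the scalar coefficients in Eq.~\eqref{eq:invariant_form} are unchanged, and the right-hand side is linear in $(\mathbf{a},\mathbf{b})$.

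\emph{Span property (first half of necessity).} Fix $(\mathbf{a},\mathbf{b})$ and let $W=\mathrm{span}\{\mathbf{a},\mathbf{b}\}$.
\begin{itemize}
\item If $\dim W<d$, take any reflection $\mathbf{R}$ across a hyperplane containing $W$, with unit normal $\mathbf{n}\perp W$. Equivariance gives $\Psi=\mathbf{R}\Psi$, so $\langle\Psi,\mathbf{n}\rangle=0$ for every such $\mathbf{n}$, and hence $\Psi\in W$.
\item If $\dim W=d$, which happens only when $d=2$ with $\mathbf{a},\mathbf{b}$ independent, then $W=\RR^2$ and the claim is trivial.
\end{itemize}

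\emph{Coefficients on the generic set.} On the open dense set where $\mathbf{a},\mathbf{b}$ are linearly independent, the decomposition $\Psi=\alpha\mathbf{a}+\beta\mathbf{b}$ is unique. The coefficients are obtained by inverting the Gram matrix $\mathbf{G}=\begin{pmatrix}\|\mathbf{a}\|^2&\langle\mathbf{a},\mathbf{b}\rangle\\ \langle\mathbf{a},\mathbf{b}\rangle&\|\mathbf{b}\|^2\end{pmatrix}$ applied to $(f,g):=(\langle\Psi,\mathbf{a}\rangle,\langle\Psi,\mathbf{b}\rangle)$. The functions $f,g$ are continuous and $O(d)$-invariant. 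Since two pairs with the same Gram matrix are related by an orthogonal map (the first fundamental theorem, proved directly by extending an isometry between spans to $\RR^d$), $f$ and $g$ factor through $\mathcal{I}$. Continuity of the factored maps on the Gram cone follows because $\mathcal{I}$ is a proper quotient map, with a continuous section $(r,s,c)\mapsto(\sqrt r\,\mathbf{e}_1,\ldots)$ built by Cholesky. Hence $\alpha,\beta$ are continuous functions of $(\mathcal{I},\xi)$ on the interior $\{c^2<rs\}$.

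\emph{Degenerate boundary $c^2=rs$ (the main obstacle).} Here $\mathbf{G}^{-1}$ blows up, so continuity of $\alpha,\beta$ is not automatic. The representation itself still holds pointwise, since $\Psi$ lies in the line spanned by the nonzero vector among $\mathbf{a},\mathbf{b}$, or $\Psi=\mathbf{0}$ when both vanish (apply the span property with $W=\{\mathbf{0}\}$). The difficulty is making the choice of coefficients continuous.

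My first attempt would be to use the non-uniqueness on the boundary to assign, for example, the minimum-norm coefficients $\mathbf{G}^{+}(f,g)^\top$, and then to check that the interior formula extends continuously. If direct extension fails, I would fall back on an approximation argument in the style of equivariant-polynomial universality:
\begin{itemize}
\item approximate $\Psi$ uniformly on compacta by equivariant polynomials, using Reynolds averaging over $O(d)$;
\item note that such polynomials have exactly the form of Eq.~\eqref{eq:invariant_form}, with polynomial $\alpha,\beta$ in $\mathcal{I}$;
\item pass to the limit.
\end{itemize}
Controlling the coefficients in that limit is precisely where I expect the argument to be delicate. It may in fact require weakening the conclusion to the closure of such representations, or to continuity only on the nondegenerate Gram cone.
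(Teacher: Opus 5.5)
Everything you actually prove is correct, and two of your steps are cleaner than the paper's. The paper obtains $\Psi\in\mathrm{span}\{\mathbf{a},\mathbf{b}\}$ only for non-collinear pairs, via the stabilizer acting as $O(d-2)$ on the complement. It then gets continuous coefficients from the homeomorphism of the orbit space onto the Gram cone. Your hyperplane reflections give the span property at \emph{every} pair, collinear and zero pairs included. Inverting $\mathbf{G}$ then gives explicit continuous coefficients on $\{c^2<rs\}$. One small imprecision: your Cholesky section is discontinuous at $r=0$. Properness of $\mathcal{I}$ alone already gives continuity of the factored $f,g$, and on $\{c^2<rs\}$, which is all you use, the section is continuous anyway.

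The step you leave open is continuity of $\alpha,\beta$ on the boundary $c^2=rs$. It cannot be closed, because the statement is false there.
\begin{itemize}
\item For $d\ge2$ let $\Psi(\mathbf{a},\mathbf{b},\xi):=\|\mathbf{a}\|^{-1/2}\mathbf{a}$ for $\mathbf{a}\ne\mathbf{0}$ and $\Psi:=\mathbf{0}$ otherwise. It is continuous, since $\|\Psi\|=\|\mathbf{a}\|^{1/2}$, and it satisfies Eq.~\eqref{eq:psi_equiv} with $\bar\Psi=\Psi$.
\item At the independent pair $\mathbf{a}=\epsilon\mathbf{e}_1$, $\mathbf{b}=\mathbf{e}_2$ the coefficients are forced: $\alpha\big((\epsilon^2,1,0),\xi\big)=\epsilon^{-1/2}$ and $\beta=0$. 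This diverges as $\epsilon\to0$, yet $(0,1,0)$ lies in the Gram cone.
\item The same rescaling applied to $\mathbf{w}:=\|\mathbf{b}\|^2\mathbf{a}-\langle\mathbf{a},\mathbf{b}\rangle\mathbf{b}$, for which $\|\mathbf{w}\|^2=s(rs-c^2)$, makes $\alpha$ blow up along the collinear stratum $c^2=rs>0$ as well.
\end{itemize}

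So neither of your fallbacks can succeed. The minimum-norm choice $\mathbf{G}^{+}(f,g)^\top$ cannot repair an interior formula that is already unbounded. In the polynomial-approximation route, uniform convergence of $\alpha_n\mathbf{a}+\beta_n\mathbf{b}$ gives no control of $\alpha_n,\beta_n$ as $\det\mathbf{G}\to0$.

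The paper's proof passes over exactly this point. It says the parallel and zero-vector cases ``follow from the same Gram-orbit characterization and continuity,'' and that claim is not valid.

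Your proposed weakening is the correct statement: continuous $\alpha,\beta$ on $\{c^2<rs\}$, plus the pointwise span property everywhere. Continuity on the closed cone is recovered under extra regularity, e.g.\ for polynomial $\Psi$ by the first fundamental theorem, or for smooth $\Psi$ by the Schwarz--Po\'enaru theorem. The weaker version is all the paper needs afterwards. The $\mathbf{W}=c\,\mathbf{I}_d$ argument only uses $\mathbf{W}\mathbf{a}\in\mathrm{span}\{\mathbf{a}\}$ for all $\mathbf{a}$, which your reflection argument gives directly at $\mathbf{b}=\mathbf{0}$.
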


\begin{proof}
($\Leftarrow$) For $\Psi$ of the form \eqref{eq:invariant_form},
$\mathcal{I}(\mathbf{U}^\top \mathbf{a},\mathbf{U}^\top \mathbf{b}) = \mathcal{I}(\mathbf{a},\mathbf{b})$ since each entry is an
$O(d)$-invariant. Hence $\alpha,\beta$ depend only on $\mathcal{I}(\mathbf{a},\mathbf{b})$ even
under the substitution $(\mathbf{a},\mathbf{b})\to(\mathbf{U}^\top \mathbf{a},\mathbf{U}^\top \mathbf{b})$, and
$\Psi(\mathbf{U}^\top \mathbf{a},\mathbf{U}^\top \mathbf{b},\xi)=\alpha\,\mathbf{U}^\top \mathbf{a}+\beta\,\mathbf{U}^\top \mathbf{b}=\mathbf{U}^\top(\alpha\mathbf{a}+\beta\mathbf{b})=\mathbf{U}^\top\bar\Psi(\mathbf{a},\mathbf{b},\xi)$ for
$\bar\Psi=\Psi$.

($\Rightarrow$) Conversely, assume Eq.~\eqref{eq:psi_equiv}. Fix
generic $(\mathbf{a},\mathbf{b})$ with $\mathbf{a}\not\parallel\mathbf{b}$. The stabilizer
$\mathrm{Stab}(\mathbf{a},\mathbf{b}) = \{\mathbf{U}\in O(d):\mathbf{U}\mathbf{a}=\mathbf{a},\mathbf{U}\mathbf{b}=\mathbf{b}\}$ acts trivially on
$\mathrm{span}(\mathbf{a},\mathbf{b})$ and as $O(d-2)$ on its orthogonal complement.
Eq.~\eqref{eq:psi_equiv} forces $\mathbf{U}^\top\bar\Psi(\mathbf{a},\mathbf{b},\xi) = \bar\Psi(\mathbf{U}^\top \mathbf{a},\mathbf{U}^\top \mathbf{b},\xi) = \bar\Psi(\mathbf{a},\mathbf{b},\xi)$
for $\mathbf{U}\in\mathrm{Stab}(\mathbf{a},\mathbf{b})$, so $\bar\Psi(\mathbf{a},\mathbf{b},\xi)$ is fixed by $O(d-2)$ on
$\mathrm{span}(\mathbf{a},\mathbf{b})^\perp$. The only $O(d-2)$-fixed vector is the zero
vector, so $\bar\Psi(\mathbf{a},\mathbf{b},\xi)\in\mathrm{span}(\mathbf{a},\mathbf{b})$, i.e.\
$\bar\Psi(\mathbf{a},\mathbf{b},\xi) = \alpha\,\mathbf{a}+\beta\,\mathbf{b}$ for scalars
$\alpha,\beta$ depending on $(\mathbf{a},\mathbf{b},\xi)$. Equivariance then forces
	$(\alpha,\beta)$ to depend only on $O(d)$-orbits of $(\mathbf{a},\mathbf{b})$.
	The Gram map
	$\mathcal{I}(\mathbf{a},\mathbf{b})=(\|\mathbf{a}\|^2,\|\mathbf{b}\|^2,\langle\mathbf{a},\mathbf{b}\rangle)$
	is a complete orbit invariant for pairs of vectors: two pairs have the
	same Gram triple if and only if an orthogonal map sends one pair to the
	other, by extending the induced isometry between their spans to all of
	$\RR^d$. Because $O(d)$ is compact, the orbit quotient is Hausdorff and
	this Gram map induces a homeomorphism from
	$(\RR^d\times\RR^d)/O(d)$ onto the closed Gram cone
	$\{(r,s,c): r\ge0,\ s\ge0,\ c^2\le rs\}$. Hence every continuous
	orbit-invariant scalar, including the coefficient functions
	$\alpha$ and $\beta$ on the non-collinear stratum, factors continuously
	through $\mathcal{I}$. The parallel and zero-vector cases follow from the
	same Gram-orbit characterization and continuity. This yields
	Eq.~\eqref{eq:invariant_form}.
\end{proof}

\paragraph{\tsnn{}'s modules lie outside this class.} Eq.~\eqref{eq:invariant_form}
is restrictive: every output coordinate must be an invariant-modulated
linear combination of the two vector inputs. \tsnn{}'s message function
$\MLP_{\text{msg}}$ (Eq.~\eqref{eq:message}) and the GRU update
(Eq.~\eqref{eq:gru_update}) are unconstrained learnable maps; in
particular, the affine layer $\mathbf{a}\mapsto \mathbf{W}\mathbf{a}$ inside an MLP satisfies
Eq.~\eqref{eq:invariant_form} only if $\mathbf{W} = c\,\mathbf{I}_d$ for some
$c\in\RR$ (taking $\mathbf{b}=\mathbf{0}$ in Eq.~\eqref{eq:invariant_form} forces
$\mathbf{W}\mathbf{a}=\alpha(\|\mathbf{a}\|^2)\mathbf{a}$ for all $\mathbf{a}\in\mathbb{R}^d$, which by linearity in
$\mathbf{a}$ requires $\alpha$ constant and $\mathbf{W}=\alpha\mathbf{I}_d$). Since
$\MLP_{\text{msg}}$ and the GRU's input/hidden weight matrices are
arbitrary trained parameters, with overwhelming probability $\mathbf{W}\notin\{c\mathbf{I}_d:c\in\RR\}$, and the modules fail Eq.~\eqref{eq:invariant_form}.
Therefore the synchronized-coordinate reduction does not propagate
through them: the formal guarantees of \S\ref{sec:theory} are
operator-, transport-, and decoder-level, while the message and
recurrent updates remain consistent with the local-frame view of LoCS
\citep{kofinas2021locs} rather than fully gauge-equivariant.

\subsection{Idealized Temporal-WL Correspondence}
\label{app:proof_twl}

\begin{proposition}[Idealized temporal-WL correspondence]
\label{prop:twl}
Consider an idealized variant of \tsnn{} in which (i) the initial state map is
injective in the initial node/time attributes used by temporal 1-WL, (ii) the
full temporal neighborhood is aggregated without truncation, (iii) the message
map is injective in
$(\mathbf{h}_u^{(\ell)}, \mathbf{Q}_{uv}, r_{uv}, t_{uv}, \mathbf{x}_{uv})$
under the equality relation used by temporal 1-WL, (iv) the multiset
aggregation is injective, and (v) the node update map is injective in its own
state and aggregated incoming multiset. Then after $L$ message-passing rounds,
the hidden state of any node is at least as discriminative as the depth-$L$
temporal Weisfeiler--Leman (temporal 1-WL) color on the corresponding temporal
computation tree \citep{souza2022pint}.
\end{proposition}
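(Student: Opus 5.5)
The plan is to argue by induction on the number of rounds $\ell=0,1,\ldots,L$. The inductive claim is: for any two (node, time) pairs $(a,t_a)$ and $(b,t_b)$, possibly in different temporal graphs,
\[
\mathbf{h}_a^{(\ell)}=\mathbf{h}_b^{(\ell)} \;\Longrightarrow\; c^{(\ell)}(a,t_a)=c^{(\ell)}(b,t_b),
\]
where $c^{(\ell)}$ is the depth-$\ell$ temporal 1-WL color of \citet{souza2022pint}. This is the standard contrapositive form of ``at least as discriminative''. The argument is the usual GIN/temporal-WL correspondence \citep{xu2019gin,souza2022pint}: one checks that each stage of the idealized \tsnn{} update is an injective function of exactly the data the WL refinement consumes.

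\textbf{Base case.} This follows directly from hypothesis (i). The initial state map is injective in the node and time attributes that determine $c^{(0)}$. Hence equal $\mathbf{h}^{(0)}$ forces equal $c^{(0)}$.

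\textbf{Inductive step.} Suppose the claim holds at round $\ell$, and assume $\mathbf{h}_a^{(\ell+1)}=\mathbf{h}_b^{(\ell+1)}$.
\begin{enumerate}
\item By (v), the update map is injective in its pair of arguments. So $\mathbf{h}_a^{(\ell)}=\mathbf{h}_b^{(\ell)}$, and the aggregated incoming messages agree.
\item By (iv), the multiset aggregation is injective. So the multisets of messages $\{\!\{\mathbf{m}_{a\leftarrow w}\}\!\}$ and $\{\!\{\mathbf{m}_{b\leftarrow w'}\}\!\}$ coincide. By (ii) these multisets range over the full temporal neighborhoods, so no neighbor is lost to truncation.
\item By (iii), each message determines its tuple $(\mathbf{h}_w^{(\ell)},\mathbf{Q}_{\cdot w},r,t,\mathbf{x})$ up to the WL equality relation. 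Therefore there is a bijection between the neighbor multisets that matches neighbor states, relations, timestamps, and edge features.
\item Applying the inductive hypothesis to the matched neighbor states and to the self states shows that the own color and the multiset of (neighbor color, edge attributes, time) agree. The WL refinement is a function of exactly these data, so $c^{(\ell+1)}(a,t_a)=c^{(\ell+1)}(b,t_b)$.
\end{enumerate}

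\textbf{Main obstacle.} The delicate point is the transport argument $\mathbf{Q}_{uv}$ in the message. Temporal 1-WL has no notion of frames, so this extra input must not break the correspondence. The direction we need is benign. A message that additionally sees $\mathbf{Q}$ is still injective in the WL-relevant components: two messages that are equal must agree on all their inputs, including the WL-relevant ones. So extra inputs can only refine the partition and never merge WL-distinct tuples. I would state this explicitly by saying that (iii) is read as injectivity on the full tuple, whose projection onto the WL data is what the refinement uses.

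A second point to fix is the temporal computation tree. I would define the color $c^{(\ell)}(a,t_a)$ over neighbors with event times strictly before $t_a$, matching the causal aggregation of \tsnn{}, so that the two unrollings index the same trees.

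With these conventions in place, the induction closes at $\ell=L$ and gives the proposition.
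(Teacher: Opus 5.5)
Your proposal is correct and follows essentially the same route as the paper: induction on $\ell$ for the contrapositive statement, base case from (i), and an inductive step that uses the update, aggregation, and message injectivity to recover both the own state and a matched neighbor multiset. As in the paper, you then project away the extra $\mathbf{Q}_{uv}$ coordinate and invoke injectivity of the WL hash; your explicit quantification of the inductive hypothesis over all (node, time) pairs and your causal $\mathcal{E}_{<t}$ convention for the computation tree match what the paper's proof uses implicitly.
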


\begin{proof}
Let $c_v^{(\ell)}(t)$ denote the temporal 1-WL color of node $v$ at time $t$ after $\ell$ refinement rounds, defined recursively via
\[
c_v^{(\ell+1)}(t)
=
\textsc{Hash}\!\Big(
c_v^{(\ell)}(t),\;
\{\!\!\{\, (c_u^{(\ell)}(t_{uv}), r_{uv}, t_{uv}, \mathbf{x}_{uv}) : (u,v,t_{uv}) \in \mathcal{E}_{<t}\,\}\!\!\}
\Big),
\]
where $\textsc{Hash}$ is an injection on its argument \citep{souza2022pint}.

Write $\Phi^{(\ell)}(v,t)$ for the hidden state of node $v$ at round $\ell$ produced by the idealized \tsnn{}. The desired refinement statement
\[
c_v^{(\ell)}(t) \neq c_{v'}^{(\ell)}(t') \;\Longrightarrow\; \Phi^{(\ell)}(v,t) \neq \Phi^{(\ell)}(v',t')
\]
is the contrapositive of
\begin{equation}
\Phi^{(\ell)}(v,t) = \Phi^{(\ell)}(v',t') \;\Longrightarrow\; c_v^{(\ell)}(t) = c_{v'}^{(\ell)}(t'),
\label{eq:twl_induction}
\end{equation}
which we establish by induction on $\ell$, mirroring the GIN-style argument of \citet{xu2019gin} and its temporal extension in \citet{souza2022pint}. The base case $\ell=0$ follows from the injective initial-state assumption, so $\Phi^{(0)}(v,t)=\Phi^{(0)}(v',t')$ implies equality of the initial attributes used by temporal 1-WL, hence $c_v^{(0)}(t)=c_{v'}^{(0)}(t')$.

For the inductive step, assumptions (ii)--(v) imply that $\Phi^{(\ell+1)}(v,t)$ is an injective function of the pair
\[
\big(\Phi^{(\ell)}(v,t),\;\{\!\!\{(\Phi^{(\ell)}(u,t_{uv}), \mathbf{Q}_{uv}, r_{uv}, t_{uv}, \mathbf{x}_{uv}) : (u,v,t_{uv}) \in \mathcal{E}_{<t}\}\!\!\}\big).
\]
Hence $\Phi^{(\ell+1)}(v,t) = \Phi^{(\ell+1)}(v',t')$ forces equality of these two pairs: the own-states agree, $\Phi^{(\ell)}(v,t)=\Phi^{(\ell)}(v',t')$, and the neighbor multisets coincide elementwise under some matching of edges. By the inductive hypothesis applied to the own-states, $c_v^{(\ell)}(t)=c_{v'}^{(\ell)}(t')$. Applying the inductive hypothesis to each matched neighbor pair, equality of the $\Phi^{(\ell)}$-multiset forces equality of the projected WL multiset
\[
\{\!\!\{(c_u^{(\ell)}(t_{uv}), r_{uv}, t_{uv}, \mathbf{x}_{uv})\}\!\!\}.
\]
(Note that the \tsnn{} multiset carries the additional coordinate $\mathbf{Q}_{uv}$, so its equality is strictly stronger than equality of the WL multiset; we use only the implied projection.) Since $\textsc{Hash}$ is injective, equality of its inputs forces $c_v^{(\ell+1)}(t)=c_{v'}^{(\ell+1)}(t')$, completing the induction. The contrapositive yields the refinement inequality, so any two temporal computation trees that temporal 1-WL separates are also separated by the idealized \tsnn{}.
\end{proof}

\begin{remark}[Scope of Proposition~\ref{prop:twl}]
The proposition does not apply verbatim to the experimental model. The practical architecture truncates neighborhoods to $K_{\text{nbr}}$ most recent neighbors and trains with truncated BPTT. We interpret Proposition~\ref{prop:twl} as an idealized capacity statement showing that the transport-augmented message-passing framework is sufficiently expressive in principle.
\end{remark}

\section{Algorithmic Details}
\label{app:algorithms}

Algorithm~\ref{alg:tsnn} in the main text states the per-event pipeline at the level of equation references. Several of its lines, however, abbreviate non-trivial procedures: the sequential Householder reflections behind every transport call (\S\ref{app:alg_transport}), the coordinate-style realization of the sheaf-diffusion update of Eq.~\eqref{eq:diffusion_local} (\S\ref{app:alg_diffusion}), the optional sheaf-consistent neighbor attention used in some ablation rows (\S\ref{app:alg_nbrattn}), the EdgeBank sliding-window data structure that supplies the auxiliary score in Eq.~\eqref{eq:edgebank_mixture} (\S\ref{app:alg_edgebank}), and the streaming training epoch with truncated backpropagation-through-time (TBPTT) that wraps the per-event update (\S\ref{app:alg_train}). \S\ref{app:complexity} then aggregates the per-event, per-epoch, and per-pool memory costs of these procedures and ties them to the wall-clock and peak-VRAM measurements reported in \S\ref{app:scalability}.

Throughout this section, vectors are written in the local frame of their owning node. We write $\textsc{SheafTransport}(\mathbf{F}_a, \mathbf{F}_b, \mathbf{h}_a)$ for the procedure that takes a vector $\mathbf{h}_a$ expressed in source node $a$'s local frame and re-expresses it in target node $b$'s local frame. In the main notation this source-to-target call applies $\mathbf{Q}_{ba}(t) = \mathbf{U}_b(t)^{\transpose} \mathbf{U}_a(t)$; equivalently, the map $\mathbf{Q}_{uv}$ transports from $v$ to $u$, and $\mathbf{Q}_{ba} = \mathbf{Q}_{ab}^{\transpose}$. The chunk-local differentiable cache $\mathcal{C}$ and the persistent host store $\mathcal{S}$ are introduced in \S\ref{app:alg_train} and used throughout.

\subsection{Sheaf Transport via Sequential Householder Reflections}
\label{app:alg_transport}

For a source-to-target call $\textsc{SheafTransport}(\mathbf{F}_a,\mathbf{F}_b,\mathbf{h}_a)$, the transport $\mathbf{Q}_{ba} = \mathbf{U}_b^{\transpose} \mathbf{U}_a$ is composed of $2k$ Householder reflections, one per row of $\mathbf{F}_a$ and $\mathbf{F}_b$. Each reflection $H(\mathbf{f}) = \mathbf{I} - 2\hat{\mathbf{f}}\hat{\mathbf{f}}^{\transpose}$ acts on a vector in $\mathcal{O}(d)$ time without ever materializing the $d \times d$ Householder matrix; by composing reflections sequentially, Algorithm~\ref{alg:transport} applies $\mathbf{Q}_{ba}$ in $\mathcal{O}(kd)$ time and $\mathcal{O}(d)$ working memory. Under the convention of Eq.~\eqref{eq:householder_product}, applying row reflectors in the order $1,\ldots,k$ realizes $\mathbf{U}$, while applying them in the reverse order realizes $\mathbf{U}^{\transpose}$ because each reflector is symmetric. The conditional on line~\ref{alg:transport:guard} is the numerical safeguard $\varepsilon$ from $H_\varepsilon$ defined in \S\ref{sec:sheaf_construction}: a Householder vector with near-zero norm yields the identity reflection and is silently skipped. As discussed in \S\ref{sec:sheaf_construction}, random initialization combined with the additive frame update keeps $\min_{w,i}\|\mathbf{f}_w^{(i)}\|$ comfortably above $\varepsilon = 10^{-6}$ throughout training, so the guard is a precaution against degenerate edge cases rather than the operating regime.

\begin{algorithm}[htbp]
\caption{Sheaf transport via sequential Householder reflections.}
\label{alg:transport}
\begin{algorithmic}[1]
\REQUIRE Source frame $\mathbf{F}_a \in \RR^{k \times d}$, target frame $\mathbf{F}_b \in \RR^{k \times d}$, vector $\mathbf{h} \in \RR^d$, threshold $\varepsilon$.
\ENSURE $\mathbf{h}'$ expressing $\mathbf{h}$ in $b$'s local frame, equal to $\mathbf{Q}_{ba}\,\mathbf{h} = \mathbf{U}_b^{\transpose} \mathbf{U}_a \mathbf{h}$.
\STATE $\mathbf{y} \leftarrow \mathbf{h}$
\FOR{$i = 1,\ldots,k$}
    \STATE $\mathbf{f} \leftarrow (\mathbf{F}_a)_{i,:}$;\quad $n \leftarrow \|\mathbf{f}\|$ \hfill $\triangleright\ \mathbf{y}\leftarrow\mathbf{U}_a\mathbf{h}=H_\varepsilon(\mathbf{f}_a^{(k)})\cdots H_\varepsilon(\mathbf{f}_a^{(1)})\mathbf{h}$
    \IF{$n > \varepsilon$} \label{alg:transport:guard}
        \STATE $\mathbf{y} \leftarrow \mathbf{y} - \tfrac{2(\mathbf{f}^{\transpose}\mathbf{y})}{n^2}\,\mathbf{f}$ \hfill $\triangleright\ \mathbf{y}\leftarrow H_\varepsilon(\mathbf{f})\,\mathbf{y},\ \mathcal{O}(d)$
    \ENDIF
\ENDFOR
\FOR{$i = k,\ldots,1$}
    \STATE $\mathbf{f} \leftarrow (\mathbf{F}_b)_{i,:}$;\quad $n \leftarrow \|\mathbf{f}\|$ \hfill $\triangleright\ \mathbf{y}\leftarrow\mathbf{U}_b^{\transpose}\mathbf{y}=H_\varepsilon(\mathbf{f}_b^{(1)})\cdots H_\varepsilon(\mathbf{f}_b^{(k)})\mathbf{y}$
    \IF{$n > \varepsilon$}
        \STATE $\mathbf{y} \leftarrow \mathbf{y} - \tfrac{2(\mathbf{f}^{\transpose}\mathbf{y})}{n^2}\,\mathbf{f}$
    \ENDIF
\ENDFOR
\STATE \textbf{return} $\mathbf{y}\;=\;\mathbf{Q}_{ba}\mathbf{h}=\mathbf{U}_b^{\transpose}\mathbf{U}_a\mathbf{h}$
\end{algorithmic}
\end{algorithm}

In a vectorized implementation the outer loop over reflections is the only sequential dependency; the inner Householder step is fully parallel across the batch dimension, which we exploit when scoring $K_{\text{cand}} = K_{\text{neg}}{+}1$ candidates simultaneously. Gradients flow through $\hat{\mathbf{f}}_i$ and the scalar projection $\hat{\mathbf{f}}_i^{\transpose} \mathbf{y}$, so the frame parameters $\mathbf{F}_v$ receive a direct geometric signal from every transport-aligned distance and dot product in Eq.~\eqref{eq:score_features}; this is the differentiability premise underlying the smoothness analysis of Lemma~\ref{lem:houshlip}.

\subsection{Coordinate-Style Sheaf Laplacian Diffusion}
\label{app:alg_diffusion}

Equation~\eqref{eq:diffusion_local} prescribes one step of feature-scaled normalized sheaf diffusion at each active node of the event-local active graph. Globalizing the equation by materializing a $|\mathcal{A}_t|d \times |\mathcal{A}_t|d$ block sheaf Laplacian is unnecessary: Algorithm~\ref{alg:diffuse} realizes the same full-active operator by sparse edge-style accumulation over the active adjacency lists $\{\mathcal{N}^{\mathrm{act}}_a(t):a\in\mathcal{A}_t\}$. Thus every stalk in $\mathcal{A}_t$ is diffused and committed after the event, while only the current endpoints receive frame, GRU, last-time, and neighbor-buffer updates.

A subtlety arises during training. Some active nodes may have been updated earlier in the same TBPTT chunk, while others enter $\mathcal{A}_t$ only as historical neighbors of the current endpoints. If every active stalk is read from the host store $\mathcal{S}$, the in-chunk gradient route through full-active diffusion is lost. Algorithm~\ref{alg:diffuse} therefore uses the chunk-local cache $\mathcal{C}$ whenever an active node already has a differentiable in-chunk state, and lazily inserts missing active nodes from $\mathcal{S}$ with detached initial states. This is the implementation-level realization of the full-active metric-gradient update analyzed in Theorem~\ref{thm:feature_scaled}.

\begin{algorithm}[htbp]
\caption{Full-active coordinate-style sheaf Laplacian diffusion.}
\label{alg:diffuse}
\begin{algorithmic}[1]
\REQUIRE Active vertex set $\mathcal{A}_t$; active adjacency lists $\mathcal{N}^{\mathrm{act}}_a(t)$ for $a\in\mathcal{A}_t$; cache/store states $(\mathbf{h}_a,\mathbf{F}_a)$ with post-GRU endpoint stalks already installed in $\mathcal{C}$; learned diagonal $\mathbf{D}_\theta = \softplus(\boldsymbol{\theta}_{\diag}) \in \RR^{d}_{>0}$; step size $\eta$; iterations $K$; chunk cache $\mathcal{C}$ (training only); store $\mathcal{S}$.
\ENSURE Diffused stalks $\{\mathbf{h}_a^{(K)}:a\in\mathcal{A}_t\}$ in their owning local frames.
\FOR{each $a\in\mathcal{A}_t$}
    \IF{$a \notin \mathcal{C}$}
        \STATE $(\mathbf{h}_a,\mathbf{F}_a) \leftarrow \mathrm{detach}(\mathcal{S}[a])$;\quad insert $(\mathbf{h}_a,\mathbf{F}_a)$ into $\mathcal{C}$
    \ENDIF
    \STATE $\mathbf{h}_a^{(0)} \leftarrow \mathcal{C}[a].\mathbf{h}$;\quad $\mathbf{F}_a \leftarrow \mathcal{C}[a].\mathbf{F}$
\ENDFOR
\FOR{$\ell = 0, 1, \ldots, K-1$}
    \FOR{each active node $a \in \mathcal{A}_t$}
        \STATE $\mathcal{N} \leftarrow \mathcal{N}^{\mathrm{act}}_a(t)$
        \IF{$\mathcal{N} = \emptyset$}
            \STATE $\mathbf{h}_a^{(\ell+1)} \leftarrow \mathbf{h}_a^{(\ell)}$;\quad \textbf{continue} \hfill $\triangleright$ no neighbors $\Rightarrow$ identity step
        \ENDIF
        \STATE $\mathbf{g}_a \leftarrow \mathbf{0} \in \RR^d$
        \FOR{each $b \in \mathcal{N}$}
            \STATE $\mathbf{g}_a \leftarrow \mathbf{g}_a + \bigl(\mathbf{h}_a^{(\ell)} - \mathbf{U}_a^{\transpose}\mathbf{U}_b\,\mathbf{h}_b^{(\ell)}\bigr)$ \hfill $\triangleright\ \mathbf{Q}_{ab}\mathbf{h}_b^{(\ell)}$ via Alg.~\ref{alg:transport}, residual in $a$'s frame
        \ENDFOR
        \STATE $\mathbf{h}_a^{(\ell+1)} \leftarrow \mathbf{h}_a^{(\ell)} - \tfrac{\eta}{|\mathcal{N}|}\,\mathbf{D}_\theta\,\mathbf{g}_a$ \hfill $\triangleright$ Eq.~\eqref{eq:diffusion_local}
    \ENDFOR
\ENDFOR
\STATE \textbf{return} $\{\mathbf{h}_a^{(K)}:a\in\mathcal{A}_t\}$
\end{algorithmic}
\end{algorithm}

The inner loop runs once per directed active adjacency entry. Because $\mathcal{A}_t$ is built from the two capped endpoint histories, the active edge multiset has $\mathcal{O}(K_{\text{nbr}})$ directed entries even though up to $2K_{\text{nbr}}+2$ vertices may be touched. Each transport costs $\mathcal{O}(kd)$, so a full diffusion costs $\mathcal{O}(K\,K_{\text{nbr}}\,k\,d)$ per event. In practice, we set $K \in \{1, 2, 3, 4\}$. The eval-time path differs only in that the chunk cache $\mathcal{C}$ is initialized from detached store reads, mirroring the deployment-time setting and preserving the predict-then-update causality of Algorithm~\ref{alg:tsnn}.

\subsection{Sheaf-Consistent Multi-Head Neighbor Attention}
\label{app:alg_nbrattn}

When the optional neighbor-attention head is enabled (used in Track-B datasets where structural neighborhoods carry strong signal beyond the geometric residual), the source representation $\mathbf{h}_u$ is augmented before scoring by attending over the up-to-$K$ most recent transport-aligned neighbors. The key sheaf-respecting design choice is that transport is applied before attention rather than after: each neighbor's stalk $\mathbf{h}_w$ is first re-expressed in $u$'s local frame via $\mathbf{Q}_{uw}\,\mathbf{h}_w$, and only then does standard multi-head dot-product attention compare vectors that already live in a single coordinate system. Without this transport, the attention key/value projections would mix coordinates from different fibers and defeat the purpose of node-local frames.

\begin{algorithm}[htbp]
\caption{Sheaf-consistent multi-head neighbor attention.}
\label{alg:nbrattn}
\begin{algorithmic}[1]
\REQUIRE Query state $(\mathbf{h}_u, \mathbf{F}_u)$; up to $K$ neighbors $\{(\mathbf{h}_{w_i}, \mathbf{F}_{w_i}, \delta t_i, r_i)\}_{i=1}^{|\mathcal{N}|}$ from $\mathcal{N}^{\mathrm{act}}_u(t)$; learned $\mathbf{W}_Q, \mathbf{W}_K, \mathbf{W}_V, \mathbf{W}_O \in \RR^{d \times d}$; relation embedding $\mathbf{R}$; temporal projection $\mathbf{W}_t$; head count $H$ with $d_H = d/H$; null token $\boldsymbol{\nu}_0$.
\ENSURE Refined query $\mathbf{h}_u'$.
\IF{$|\mathcal{N}| = 0$}
    \STATE \textbf{return} $\mathrm{LayerNorm}\bigl(\mathbf{h}_u + \mathbf{W}_O \mathbf{W}_V \boldsymbol{\nu}_0\bigr)$ \hfill $\triangleright$ degenerate path
\ENDIF
\FOR{$i = 1, \ldots, |\mathcal{N}|$}
    \STATE $\tilde{\mathbf{h}}_i \leftarrow \mathbf{U}_u^{\transpose}\,\mathrm{detach}(\mathbf{U}_{w_i})\,\mathrm{detach}(\mathbf{h}_{w_i})$ \hfill $\triangleright\ \mathbf{Q}_{u w_i}\mathbf{h}_{w_i}$ via Alg.~\ref{alg:transport}, aligned to $u$'s frame
    \STATE $\tilde{\mathbf{h}}_i \leftarrow \tilde{\mathbf{h}}_i + \mathbf{W}_t\,\phi(\delta t_i) + \mathbf{R}[r_i]$ \hfill $\triangleright$ temporal $+$ relational
\ENDFOR
\STATE Stack into $\tilde{\mathbf{H}} \in \RR^{|\mathcal{N}| \times d}$
\STATE $\mathbf{Q} \leftarrow \mathbf{W}_Q \mathbf{h}_u$;\quad $\mathbf{K} \leftarrow \tilde{\mathbf{H}} \mathbf{W}_K^{\transpose}$;\quad $\mathbf{V} \leftarrow \tilde{\mathbf{H}} \mathbf{W}_V^{\transpose}$
\STATE Reshape $\mathbf{Q}, \mathbf{K}, \mathbf{V}$ into $H$ heads of width $d_H$
\STATE $\boldsymbol{\alpha}^{(h)} \leftarrow \softmx_i\!\Bigl(d_H^{-1/2}\, \mathbf{Q}^{(h)}\,(\mathbf{K}^{(h)}_{i,:})^{\transpose}\Bigr)$ for each head $h$
\STATE $\mathbf{a} \leftarrow \mathrm{Concat}_{h=1}^{H}\!\Bigl( \textstyle\sum_i \alpha^{(h)}_i\,\mathbf{V}^{(h)}_{i,:} \Bigr)$
\STATE \textbf{return} $\mathrm{LayerNorm}\bigl(\mathbf{h}_u + \mathbf{W}_O \mathbf{a}\bigr)$
\end{algorithmic}
\end{algorithm}

The temporal modulation $\mathbf{W}_t \phi(\delta t_i)$ uses the same sinusoidal encoding as Eq.~\eqref{eq:time_encoding} but with half the dimension; the relational modulation $\mathbf{R}[r_i]$ is shared with the per-relation embedding used elsewhere in the model (\S\ref{sec:scoring}). The constraint $H \mid d$ is enforced at construction time. Attention dropout and the residual$+$LayerNorm wrap are standard transformer-block components; we omit them from the pseudocode for brevity but apply them in the implementation. Crucially, every neighbor state read in line~6 detaches gradients (neighbor states are not reshaped by the current event's loss); only the target-frame part of the transport contributes to $\nabla_{\mathbf{F}_u}\mathcal{L}$.

\subsection{EdgeBank Sliding-Window Memory}
\label{app:alg_edgebank}

EdgeBank \citep{poursafaei2022edgebank} maintains, for each oriented relation triple $(u, v, r)$, the count of past interactions and the timestamp of the most recent occurrence. Despite being a purely memory-based heuristic with no learnable parameters, it is competitive with several learned temporal models on standard benchmarks \citep{poursafaei2022edgebank}; this motivates its inclusion as the auxiliary score in Eq.~\eqref{eq:edgebank_mixture}. Algorithm~\ref{alg:edgebank} states the three operations the model invokes per event: \textsc{Score} (read at prediction time), \textsc{Observe} (called only after the per-event update commits, preserving causality), and the internal \textsc{Expire} that maintains the sliding-window invariant. Two memory modes are supported: \textsc{unlimited}, which retains every past edge, and \textsc{fixed\_time\_window}, which expires entries older than $\tau_{\mathrm{w}}$ relative to the current event time. The mode is selected per dataset on validation. On Track~A, tgbl-wiki and tgbl-review use \textsc{unlimited} mode and thgl-software disables EdgeBank entirely; on Track~B, the default mode is \textsc{unlimited}, which the per-dataset \emph{EdgeBank window ratio} row optionally overrides: a value of $0$ means \textsc{unlimited}, while a positive value $\rho>0$ activates \textsc{fixed\_time\_window} with $\tau_{\mathrm{w}}=\rho\cdot\bar{\delta t}$, where $\bar{\delta t}$ is the mean inter-event gap on the training stream.

\begin{algorithm}[htbp]
\caption{EdgeBank sliding-window memory.}
\label{alg:edgebank}
\begin{algorithmic}[1]
\REQUIRE Counts $C[\,\cdot\,] = 0$, last-time map $T[\,\cdot\,]$, deque $\mathcal{Q}$ of $(t, k)$ entries (FIFO by $t$), window $\tau_{\mathrm{w}}$, mode $m \in \{\textsc{unlimited}, \textsc{fixed\_time\_window}\}$.
\STATE \textbf{procedure} \textsc{Expire}($t$) \hfill $\triangleright$ amortized $\mathcal{O}(1)$ per event
\IF{$m = \textsc{fixed\_time\_window}$}
    \WHILE{$\mathcal{Q}$ non-empty \textbf{and} $t - \mathcal{Q}.\mathrm{front}.t > \tau_{\mathrm{w}}$}
        \STATE $(t', k') \leftarrow \mathcal{Q}.\mathrm{pop\_front}()$;\quad $C[k'] \leftarrow C[k'] - 1$
        \IF{$C[k'] \leq 0$} \STATE remove $k'$ from $C$ and $T$ \ENDIF
    \ENDWHILE
\ENDIF
\STATEx
\STATE \textbf{procedure} \textsc{Observe}($u, v, r, t$) \hfill $\triangleright$ called after per-event update commits
\STATE $k \leftarrow (u, v, r)$
\IF{$m = \textsc{fixed\_time\_window}$} \STATE \textsc{Expire}($t$);\quad push $(t, k)$ onto $\mathcal{Q}$ \ENDIF
\STATE $C[k] \leftarrow C[k] + 1$;\quad $T[k] \leftarrow t$
\STATEx
\STATE \textbf{procedure} \textsc{Score}($u, v, r, t$) \hfill $\triangleright$ called before update from Eq.~\eqref{eq:edgebank_mixture}
\STATE $k \leftarrow (u, v, r)$;\quad \textsc{Expire}($t$)
\STATE \textbf{return} $\log\!\bigl(1 + C[k]\bigr)$ \hfill $\triangleright$ returns $0$ if $C[k]=0$
\end{algorithmic}
\end{algorithm}

All three operations are amortized $\mathcal{O}(1)$ per event (deque push/pop, hash insert/lookup), with bursts of up to $\mathcal{O}(|\mathcal{Q}_{\mathrm{old}}|)$ during \textsc{Expire} that amortize across the rest of the stream. Two ordering invariants are essential. First, \textsc{Score} is called before \textsc{Observe} commits the current event, so the EdgeBank mixture in Eq.~\eqref{eq:edgebank_mixture} cannot peek at the present interaction, which preserves the predict-then-update causality of Algorithm~\ref{alg:tsnn}. Second, the optional recency feature in Eq.~\eqref{eq:score_features}, $\log(1 + (t - T[k]))$, is computed from the same $T$ map and provides a recency signal independent of count magnitude.

\subsection{Streaming Training Epoch with Truncated BPTT}
\label{app:alg_train}

Algorithm~\ref{alg:train} states the outer loop that drives Algorithm~\ref{alg:tsnn} over a full training stream. Two design pressures conflict: gradients should flow across consecutive events so that the GRU, frame-update, and full-active diffusion operators learn long-range dependencies, but storing the full computation graph for a multi-million-edge sequence is infeasible. We resolve this with a truncated BPTT scheme: events are processed in fixed-length chunks of $L_c$ consecutive interactions, gradient computation is bounded to one chunk, and the chunk-final states are detached and pushed back to the host store before the next chunk begins. Inside a chunk, endpoint states are cached before scoring, and the cache is lazily expanded with every node in each event-local active graph $\mathcal{A}_t$ before diffusion. Nodes already in $\mathcal{C}$ retain differentiable in-chunk state; newly inserted active nodes are pulled from the host store $\mathcal{S}$ with detached initial states. This keeps the autograd graph proportional to $L_c$ events rather than the full epoch, while still allowing every loss term in $\mathcal{L}_{\text{CE}} + \lambda_{\text{geo}}\mathcal{L}_{\text{geo}} + \lambda_{\text{bias}} \mathcal{L}_{\text{bias}} + \lambda_e \mathcal{L}_{\text{energy}} + \lambda_s \mathcal{L}_{\text{smooth}}$ (Eqs.~\eqref{eq:loss_ce}--\eqref{eq:loss_energy}) to backpropagate through transports, frame updates, and full-active in-chunk diffusion.

\begin{algorithm}[htbp]
\caption{One streaming training epoch with truncated BPTT.}
\label{alg:train}
\begin{algorithmic}[1]
\REQUIRE Sorted training stream $\mathcal{E}_{\mathrm{tr}}$; chunk length $L_c$; negatives per positive $K_{\text{neg}}$; loss weights $(\lambda_{\text{geo}}, \lambda_{\text{bias}}, \lambda_e, \lambda_s)$; optimizer $\mathrm{Opt}$, AMP scaler $\mathcal{S}_{\mathrm{amp}}$, gradient clip $g_{\max}$.
\STATE Reset state store $\mathcal{S}$, neighbor buffers $\mathcal{N}^{\mathrm{act}}$, EdgeBank $\mathcal{E}\mathcal{B}$, sampler $\mathcal{N}_{-}$
\FOR{each chunk $[a, b)$ of $L_c$ consecutive events in $\mathcal{E}_{\mathrm{tr}}$}
    \STATE $\mathcal{A}_{\mathrm{end}} \leftarrow \bigcup_{i \in [a,b)} \{u_i, v_i\}$ \hfill $\triangleright$ endpoint seed set for the chunk
    \STATE Pull $(\mathbf{h}, \mathbf{F}, t_{\mathrm{last}})_a$ from $\mathcal{S}$ for $a \in \mathcal{A}_{\mathrm{end}}$; build cache $\mathcal{C}$ as differentiable tensors
    \STATE $\mathcal{L}_{\mathrm{chunk}} \leftarrow 0$
    \FOR{$i \in [a, b)$}
        \STATE $(u, v, t, r, \mathbf{x}) \leftarrow \mathcal{E}_{\mathrm{tr}}[i]$
        \STATE $\mathcal{N}_{-}^{(i)} \leftarrow \mathcal{N}_{-}.\textsc{sample}(u, v, r, K_{\text{neg}})$ \hfill $\triangleright$ track-specific sampler (\S\ref{sec:training})
        \STATE Assemble candidate states for $\{v\} \cup \mathcal{N}_{-}^{(i)}$ from $\mathcal{C}$ (active, differentiable) or $\mathcal{S}$ (detached); detach all negatives' rows
        \STATE $(s_{\text{EB}}, \delta t_{\text{EB}}) \leftarrow \mathcal{E}\mathcal{B}.\textsc{features}(u, \cdot, r, t)$;\quad compute CN features
        \STATE $\boldsymbol{s} \leftarrow \textsc{ScoreCandidates}$ via Eqs.~\eqref{eq:rel_frame_bias}--\eqref{eq:edgebank_mixture}
        \STATE $\mathcal{L}_{\mathrm{chunk}} \mathrel{+}= \mathcal{L}_{\text{CE}}(\boldsymbol{s}) + \lambda_{\text{geo}}\mathcal{L}_{\text{geo}}(\boldsymbol{s}) + \lambda_{\text{bias}} \mathcal{L}_{\text{bias}}(\boldsymbol{s})$
        \STATE Run Algorithm~\ref{alg:tsnn} Steps~2--4 on $\mathcal{C}[u], \mathcal{C}[v]$ \hfill $\triangleright$ frame update, carry-over, message, GRU
        \STATE Build $\mathcal{A}_t$ and insert any missing active nodes into $\mathcal{C}$ from detached $\mathcal{S}$ reads
        \STATE $\{\mathbf{h}_a:a\in\mathcal{A}_t\} \leftarrow \textsc{Diffuse}(\mathcal{A}_t,\ldots)$ via Algorithm~\ref{alg:diffuse}
        \STATE $\mathcal{L}_{\mathrm{chunk}} \mathrel{+}= \lambda_e \mathcal{L}_{\text{energy}} + \lambda_s \mathcal{L}_{\text{smooth}}$
        \STATE $\mathcal{C}[\mathcal{A}_t] \leftarrow$ post-diffusion active states via autograd-preserving \textsc{IndexCopy} \label{alg:train:idxcopy}
        \STATE $\mathcal{N}^{\mathrm{act}}_u \mathrel{{+}{=}} (v, t, r, \mathbf{x})$;\quad $\mathcal{N}^{\mathrm{act}}_v \mathrel{{+}{=}} (u, t, r, \mathbf{x})$;\quad $\mathcal{E}\mathcal{B}.\textsc{Observe}(u, v, r, t)$;\quad $\mathcal{N}_{-}.\textsc{update}(u, v, r)$
    \ENDFOR
    \STATE $\mathcal{L}_{\mathrm{chunk}} \leftarrow \mathcal{L}_{\mathrm{chunk}} / L_c$
    \STATE $\mathrm{Opt}.\mathrm{zero\_grad}()$;\quad $\mathcal{S}_{\mathrm{amp}}.\mathrm{scale}(\mathcal{L}_{\mathrm{chunk}}).\mathrm{backward}()$
    \STATE Unscale gradients;\quad clip $\|\nabla\|_2 \le g_{\max}$;\quad step $\mathrm{Opt}$;\quad update $\mathcal{S}_{\mathrm{amp}}$
    \STATE Push $\{\mathcal{C}[a]\}_{a \in \mathcal{A}}$ (detached) back to $\mathcal{S}$
\ENDFOR
\end{algorithmic}
\end{algorithm}

Three implementation details warrant explicit mention. (i) Negatives' candidate states are detached even when they fall inside the chunk cache: backpropagating through a negative's stalk would, by the symmetry of the cross-entropy gradient, move that node toward a wrong destination on its own future events and corrupt its representation; only the positive event path and subsequent full-active diffusion should reshape cached stalks. (ii) The cache update on line~\ref{alg:train:idxcopy} uses a non-in-place \textsc{IndexCopy} that returns a new tensor referencing the unchanged rows of the previous tensor, preserving the autograd graph for those rows; a truly in-place write would silently break gradient flow for any node whose state is read both before and after the update within the same chunk. (iii) Mixed precision is enabled for the message and frame networks but disabled for the score head, since the geometric residual decoder relies on the precise value of $\|\mathbf{h}_u - \mathbf{Q}_{uv}\mathbf{h}_v\|^2$ in Eq.~\eqref{eq:dsq}, and fp16 underflow on small distances was observed to destabilize ranking on Track~A in early experiments.

\subsection{Complexity and Memory Analysis}
\label{app:complexity}

We collect the per-event compute, per-epoch compute, and memory costs of the procedures in Algorithm~\ref{alg:tsnn} and Algorithms~\ref{alg:transport}--\ref{alg:train}, and tie them to the wall-clock and peak-VRAM measurements of \S\ref{app:scalability}. Throughout, $|\mathcal{V}|$ is the number of nodes, $|\mathcal{E}|$ the number of training events, $d$ the stalk dimension, $k$ the Householder rank per frame, $K$ the diffusion iterations, $K_{\text{nbr}}$ the active-neighbor cap per node, $K_{\text{cand}} = 1 + K_{\text{neg}}$ the candidates scored per event, $L_c$ the chunk length, $d_{\mathrm{mlp}}$ the MLP hidden width, $d_{\psi}$ the score-feature width of Eq.~\eqref{eq:score_features}, and $d_{\mathrm{ctx}}$ the message/frame-net context width. Under the per-dataset settings used in our experiments, all of $k, K, K_{\text{nbr}}, K_{\text{cand}}, L_c, d, d_{\mathrm{mlp}}, d_{\psi}, d_{\mathrm{ctx}}$ are treated as bounded constants modulo $|\mathcal{V}|$ and $|\mathcal{E}|$.

\paragraph{Per-event compute.}
The per-event cost is dominated by three pieces. Scoring $K_{\text{cand}}$ candidates (Step~1 of Algorithm~\ref{alg:tsnn}) costs $\mathcal{O}(K_{\text{cand}}\,k\,d)$ for transport via Algorithm~\ref{alg:transport}, plus $\mathcal{O}(K_{\text{cand}}\,d_{\psi}\,d_{\mathrm{mlp}})$ for the residual MLP and $\mathcal{O}(K_{\text{cand}})$ for the EdgeBank batch lookup of Algorithm~\ref{alg:edgebank}. The per-event state update (Steps~3--4) costs $\mathcal{O}(k\,d)$ for the frame update, $\mathcal{O}(k\,d)$ for the carry-over (two transports per endpoint via Proposition~\ref{prop:carry_over}), and $\mathcal{O}(d_{\mathrm{ctx}}\,d_{\mathrm{mlp}} + d^2)$ for the message MLP and GRUCell. Full-active diffusion (Step~5, Algorithm~\ref{alg:diffuse}) costs $\mathcal{O}(K\,|\mathcal{E}^{\mathrm{act}}_t|\,k\,d)=\mathcal{O}(K\,K_{\text{nbr}}\,k\,d)$ because the event-local active edge multiset is formed from the two capped endpoint histories. Summing,
\begin{equation}
T_{\mathrm{event}} \;=\; \mathcal{O}\Bigl(\,(K_{\text{cand}} + K\,K_{\text{nbr}})\,k\,d \;+\; (K_{\text{cand}}\,d_{\psi} + d_{\mathrm{ctx}})\,d_{\mathrm{mlp}} \;+\; d^2 \,\Bigr).
\label{eq:cost_event}
\end{equation}
With our Track-A settings ($d{=}256, k{=}4$, $K\in\{2,3\}$, $K_{\text{nbr}}\in\{30,60\}$, and $K_{\text{cand}}\in\{201,401,501\}$), the candidate-scoring MLP term is the dominant cost on tgbl-review and thgl-software, while the transport/diffusion and MLP terms are closer on tgbl-wiki. On Track B with $K_{\text{cand}}\in\{2,4\}$, the cost is dominated by the diffusion and message-network terms. The optional sheaf-consistent neighbor attention (Algorithm~\ref{alg:nbrattn}) adds $\mathcal{O}(K_{\mathrm{attn}}\,k\,d + K_{\mathrm{attn}}\,d^2 + d^2)$ per event for attention-neighbor cap $K_{\mathrm{attn}}$; the $d\times d$ query/key/value/output projections are dense, so the projection cost is not divided by the number of heads $H$.

\paragraph{Common-neighbor structural features.}
The optional CN feature block (\S\ref{sec:scoring}) traverses the recent-neighbor buffer twice per candidate, contributing $\mathcal{O}(K_{\text{cand}}\,K_{\text{nbr}}^2)$ for 2-hop intersections in the worst case. We bound the inner expansion at $10$ neighbors per intermediate node (a hard cap in the implementation), so the worst-case factor reduces to $\mathcal{O}(10\,K_{\text{cand}}\,K_{\text{nbr}})$, which on Track A is comparable to the transport cost and on the largest Track-B graphs is dominant. The block is therefore exposed as a per-run toggle (\texttt{use\_cn\_feats}; enabled for the Track-A and Track-B settings, except tgbl-review), and disabling it removes this term on graphs where 2-hop structure is uninformative.

\paragraph{Per-epoch compute.}
Aggregating over the stream, one training epoch processes $|\mathcal{E}|$ events, so per-epoch wall-clock is $\Theta(|\mathcal{E}| \cdot T_{\mathrm{event}})$ at fixed batch size. The TBPTT chunk introduces no asymptotic overhead beyond a constant-factor synchronization between $\mathcal{S}$ and $\mathcal{C}$ at each chunk boundary, which we measure at less than $2\%$ of total wall-clock. The empirical scaling reported in \S\ref{app:scalability} is consistent with this prediction: per-epoch time scales near-linearly in $|\mathcal{E}|$ across the three orders of magnitude spanned by tgbl-wiki ($1.6{\times}10^5$ edges) and tgbl-review ($4.9{\times}10^6$ edges), with the slope determined by the constants in Eq.~\eqref{eq:cost_event} rather than by $|\mathcal{V}|$.

\paragraph{Memory footprint.}
Memory partitions cleanly into four pools.
\textbf{(i) Persistent host store $\mathcal{S}$.} $|\mathcal{V}|\,(1{+}k)\,d$ scalars for stalks and frames, plus $|\mathcal{V}|$ scalars for the last-time map. At fp16 with the Track-A setting $d{=}256, k{=}4$, this is $\approx 2.56$ MB per $10^3$ nodes; thgl-software ($6.8{\times}10^5$ nodes) requires $\approx 1.75$ GB on the host under the same state size.
\textbf{(ii) GPU active cache $\mathcal{C}$.} The cache stores stalks and frames for the union of endpoint nodes and lazily inserted full-active diffusion nodes touched within the chunk. Thus $|\mathcal{C}|\le \min\{|\mathcal{V}|,\,L_c(2K_{\text{nbr}}+2)\}$ in the worst case and the cache uses $|\mathcal{C}|\,(1{+}k)\,d$ scalars. This is still independent of $|\mathcal{V}|$ before the outer minimum and remains small relative to the autograd graph and model activations for the reported capped-neighborhood settings.
\textbf{(iii) Autograd graph.} Bounded by $\mathcal{O}\bigl(L_c\,(K_{\text{cand}} + K\,K_{\text{nbr}})\,d\bigr)$ transport and scoring activations, plus the cached active states above; on Track~A this is low hundreds of MB for tgbl-wiki and thgl-software and several hundred MB for tgbl-review at fp32, while Track-B chunks stay in the same order because $K_{\text{cand}}$ is small. The graph is freed when \texttt{loss.backward()} returns, so it does not accumulate across chunks.
\textbf{(iv) Recent-neighbor buffer $\mathcal{N}^{\mathrm{act}}$.} $|\mathcal{V}|\,K_{\text{nbr}}$ entries of $\approx 24$ bytes (int64 id, fp64 timestamp, int32 relation, fp32 efeat). The buffer is read with $\mathcal{O}(K_{\text{nbr}})$ throughput per event and lives on the host since it is not differentiable.
The auxiliary structures (EdgeBank deque and counts, negative-sampler histories and popularity tops) together occupy $\mathcal{O}\bigl(|\mathcal{Q}| + K_{\mathrm{hist}}\,|\mathcal{V}|\,|\mathcal{R}| + K_{\mathrm{pop}}\,|\mathcal{R}|\bigr)$ bytes on the host. The peak-VRAM measurement of $\le 30$ GB on every dataset reported in \S\ref{app:scalability} is consistent with this decomposition: $\mathcal{S}$ and $\mathcal{N}^{\mathrm{act}}$ live on the host, the GPU only holds $\mathcal{C}$ and the autograd graph, and the dominant variable cost scales as $L_c\,(K_{\text{cand}} + K\,K_{\text{nbr}})\,d + L_cK_{\text{nbr}}(1{+}k)d$ rather than $|\mathcal{V}|\,d$.

\paragraph{Practical implications.}
Three implications follow. First, the $\mathcal{O}(k\,d)$ transport via Algorithm~\ref{alg:transport} is what makes the model scale: a naive dense $d \times d$ transport would multiply per-event transport compute by $d/k$, which is $64{\times}$ for the Track-A setting $d{=}256,k{=}4$. Second, although all reported runs were executed on a 96-GB RTX PRO 6000 Blackwell, the split host/GPU layout makes VRAM scale with the chunk-local active work, not with $|\mathcal{V}|\,d$; measured peak VRAM stays below 30 GB for the listed configurations. Third, the diffusion depth $K$ is the cleanest knob for trading accuracy against wall-clock: doubling $K$ adds $\mathcal{O}(K_{\text{nbr}}\,k\,d)$ per event without affecting memory beyond the autograd graph for the additional transports, which is why we sweep $K \in \{0, 1, 2\}$ in the ablation rather than larger values.

\section{Additional Experimental Details}
\label{app:hyperparams}

\subsection{Track A: Dataset Statistics}
\label{app:dataset_stats}

Track~A uses the two link-property-prediction datasets from TGB~v2 \citep{huang2023tgb} and the temporal heterogeneous graph from TGB~2.0 \citep{gastinger2024tgb2}. All three datasets, together with standardized splits, hard-negative samples, and the filtered-MRR evaluator, are distributed through the TGB Python package (\texttt{py-tgb}, \href{https://github.com/shenyangHuang/TGB}{github.com/shenyangHuang/TGB}; documentation at \href{https://tgb.complexdatalab.com}{tgb.complexdatalab.com}), which we use verbatim for data loading and evaluation so that our results are directly comparable to the public leaderboards.

\begin{table}[h]
\caption{Track~A dataset statistics. For the heterogeneous thgl-software, the last column reports the number of node types / edge-relation types rather than a bipartite flag; this dataset carries no precomputed edge-attribute vector.}
\label{tab:datasets}
\begin{center}
\begin{tabular}{lccccc}
\toprule
\textbf{Dataset} & \textbf{Nodes} & \textbf{Edges} & \textbf{Timestamps} & \textbf{Edge Feat.} & \textbf{Structure} \\
\midrule
\texttt{tgbl-wiki-v2} & 9,227 & 157,474 & 152,757 & 172-d LIWC & Bipartite \\
\texttt{tgbl-review-v2} & 352,637 & 4,873,540 & 6,865 & 1-d rating & Bipartite \\
\texttt{thgl-software} & 681,927 & 1,489,806 & 689,549 & {---} & Heterogeneous\\
\bottomrule
\end{tabular}
\end{center}
\end{table}

\paragraph{tgbl-wiki.} Bipartite editor--page network from the English Wikipedia, with each edge tagged by a 172-dimensional LIWC \citep{pennebaker2015liwc} text feature summarizing the edit. TGB~v2 corrects the leakage issues of the original split by reshuffling validation/test destinations.
\paragraph{tgbl-review.} Bipartite user--product rating network from an Amazon reviews subset; the single edge feature is the integer star rating $\in\{1,\dots,5\}$. The v2 splits use strictly chronological train/val/test partitions with no rating leakage between positives and negatives.
\paragraph{thgl-software.} Temporal heterogeneous GitHub-activity graph (January 2024) from TGB~2.0, restricted to nodes with at least 10 incident edges. Nodes belong to four types (user, repository, issue, pull request) and edges to fourteen relation types (e.g., \textsc{opens}, \textsc{comments-on}, \textsc{closes}); timestamps are second-wise. The dataset is used here to stress-test the relation-conditioned scorer on a graph where a single edge type is not representative of the whole distribution.

\subsection{Track~A Baselines}
\label{app:tracka_baselines}

For TGB~v2 link prediction (tgbl-wiki, tgbl-review) and the temporal heterogeneous benchmark (thgl-software), we compare against fifteen entries on the official leaderboard at \href{https://tgb.complexdatalab.com}{tgb.complexdatalab.com} (accessed May~2026). Where a method does not report on a given dataset, the corresponding cell is marked ``---'' in Table~\ref{tab:tgb_results}.

\begin{itemize}\setlength\itemsep{2pt}
    \item \textbf{EdgeBank$_{\infty}$ / EdgeBank$_{tw}$}~\citep{poursafaei2022edgebank}: non-parametric edge memorization with unlimited or fixed-time-window history.
    \item \textbf{LocalGlobal Heuristic} (Heur): TGB v2 reference heuristic combining local recency with global frequency.
    \item \textbf{TGN, TGN$_{et}$}~\citep{rossi2020tgn}: memory-based recurrent encoder; the $et$ variant adds edge-type conditioning for the heterogeneous thgl-software benchmark.
    \item \textbf{CAWN}~\citep{wang2021cawn}: causal anonymous walks with relative encodings.
    \item \textbf{GraphMixer}~\citep{cong2023graphmixer}: MLP-Mixer over each node's interaction history.
    \item \textbf{NAT}~\citep{luo2022nat}: dictionary-style $N$-caches recording neighborhood co-occurrence.
    \item \textbf{TNCN}~\citep{zhang2024tncn}: temporal common-neighbor scoring augmented with relative position.
    \item \textbf{DyGFormer}~\citep{yu2023dygformer}: transformer encoder with patching for long histories and a co-occurrence prior.
    \item \textbf{DyGMamba}~\citep{li2024dygmamba}: state-space encoder for long temporal contexts on dynamic graphs.
    \item \textbf{CTAN}~\citep{gravina2024ctan}: continuous-time anti-symmetric ODE encoder targeting long-range propagation.
    \item \textbf{STHN}~\citep{li2023sthn}: simplified temporal heterogeneous network for continuous-time link prediction; the strongest published thgl-software baseline at the time of writing.
    \item \textbf{HyperEvent}~\citep{gao2025hyperevent}: hyper-event recognition via relative structural encoding; the strongest published large-scale dynamic-link-prediction baseline at the time of writing.
    \item \textbf{TPNet}~\citep{lu2024tpnet}: temporal-walk matrix projection with random feature propagation.
\end{itemize}

\subsection{Track B: DGB 13 Benchmark Datasets}
\label{app:dyglib_datasets}

The 13 standard benchmarks curated by \citet{yu2023dygformer} span a broad range of domains: social interaction (Wikipedia, Reddit, Enron, UCI), behavioral logs (MOOC, LastFM), physical proximity (Social~Evo., Contact), and structured relational networks from politics, economics, and transport (Flights, Can.~Parl., US~Legis., UN~Trade, UN~Vote). Four are bipartite (Wikipedia, Reddit, MOOC, LastFM); the remaining nine share a single node set on both endpoints. Temporal behavior varies markedly: memorization-friendly datasets with substantial edge reoccurrence (UN~Vote, UN~Trade, Social~Evo.) coexist with strongly inductive datasets dominated by novelty/surprise (MOOC, UCI, Can.~Parl.). Scale varies widely across the suite: unique timestamp counts span five orders of magnitude (12 to over $1.28$M), node counts more than two (74 to 13{,}169), and edge counts about one-and-a-half (60K to 2.4M).

\begin{table}[h]
\caption{DGB 13-benchmark statistics. \#Uniq.~E = number of unique (src, dst) pairs; \#Uniq.~T = number of unique timestamps. Nov./Reo./Sur.\ are novelty, reoccurrence, and surprise ratios characterizing the temporal profile.}
\label{tab:dyglib_stats}
\centering
\tiny
\setlength{\tabcolsep}{2.5pt}
\resizebox{\textwidth}{!}{%
\begin{tabular}{lllrrrrlrccc}
\toprule
\textbf{Dataset} & \textbf{Domain} & \textbf{Bipart.} & \textbf{\#Nodes} & \textbf{\#Edges} & \textbf{\#Uniq.~E} & \textbf{\#Uniq.~T} & \textbf{Duration} & \textbf{Edge Feat.} & \textbf{Nov.} & \textbf{Reo.} & \textbf{Sur.} \\
\midrule
Wikipedia   & Editing    & Yes & 9{,}227   & 157{,}474     & 18{,}257   & 152{,}757    & 1 month       & 172-d LIWC   & 0.46 & 0.26 & 0.42 \\
Reddit      & Posting    & Yes & 10{,}984  & 672{,}447     & 78{,}516   & 669{,}065    & 1 month       & 172-d LIWC   & 0.26 & 0.52 & 0.18 \\
MOOC        & Education  & Yes & 7{,}144   & 411{,}749     & 178{,}443  & 345{,}600    & 17 months     & 4-d          & 0.75 & 0.02 & 0.79 \\
LastFM      & Music      & Yes & 1{,}980   & 1{,}293{,}103 & 154{,}993  & 1{,}283{,}614 & 1 month      & {---}        & 0.28 & 0.30 & 0.35 \\
Enron       & Email      & No  & 184       & 125{,}235     & 3{,}125    & 22{,}632     & 3 years       & {---}        & 0.30 & 0.22 & 0.27 \\
Social Evo. & Proximity  & No  & 74        & 2{,}099{,}519 & 4{,}486    & 565{,}932    & 8 months      & 2-d          & 0.11 & 0.51 & 0.02 \\
UCI         & Messaging  & No  & 1{,}899   & 59{,}835      & 20{,}296   & 58{,}911     & 196 days      & {---}        & 0.73 & 0.01 & 0.56 \\
Flights     & Transport  & No  & 13{,}169  & 1{,}927{,}145 & 395{,}072  & 122          & 4 months (day) & 1-d weight  & 0.21 & 0.60 & 0.19 \\
Can.~Parl.  & Politics   & No  & 734       & 74{,}478      & 51{,}331   & 14           & 14 years (yr)  & 1-d weight  & 0.69 & 0.01 & 0.57 \\
US~Legis.   & Politics   & No  & 225       & 60{,}396      & 26{,}423   & 12           & 12 congresses  & 1-d weight  & 0.44 & 0.08 & 0.45 \\
UN~Trade    & Economics  & No  & 255       & 507{,}497     & 36{,}182   & 32           & 32 years (yr)  & 1-d weight  & 0.07 & 0.87 & 0.04 \\
UN~Vote     & Politics   & No  & 201       & 1{,}035{,}742 & 31{,}516   & 72           & 72 years (yr)  & 1-d weight  & 0.03 & 0.93 & 0.01 \\
Contact     & Proximity  & No  & 694       & 2{,}426{,}280 & 79{,}531   & 8{,}065      & 1 month (5 min) & 1-d RSSI   & 0.42 & 0.44 & 0.12 \\
\bottomrule
\end{tabular}%
}
\end{table}

\paragraph{Wikipedia.} Bipartite editor--page interaction graph capturing edits on Wikipedia pages over one month. Each edge carries a 172-dimensional LIWC \citep{pennebaker2015liwc} feature summarizing the edit text; moderate novelty and surprise make it a balanced attributed benchmark.

\paragraph{Reddit.} Bipartite user--subreddit posting graph over one month, with 172-dimensional LIWC edge features. Dominant reoccurrence signals indicate that users tend to repeatedly post under the same subreddits, favoring memory-heavy models.

\paragraph{MOOC.} Bipartite student--content network logging student interactions with videos, problem sets, and similar units on an online-learning platform. Very low reoccurrence and very high surprise make it one of the most inductive datasets in the suite.

\paragraph{LastFM.} Bipartite user--song listening network; the benchmark retains the $1{,}000$ most-active users and the $980$ most-listened songs over one month, giving $1{,}980$ nodes in total. The dataset is large in event count and recorded at near-event-level temporal resolution ($1{,}293{,}103$ events on $1{,}283{,}614$ unique timestamps), with moderate temporal drift.

\paragraph{Enron.} Employee-to-employee email communication graph from Enron Corporation over a three-year span. The node set is small, the dynamics are dominated by bursty repeated communication, and no native edge features are provided.

\paragraph{Social Evo.} Physical-proximity network recorded from smartphones in an undergraduate dormitory over eight months, with 2-dimensional edge features. The combination of very high reoccurrence and near-zero surprise makes it a strong benchmark for recency- and memory-based mechanisms.

\paragraph{UCI.} Directed student-to-student online messaging graph over 196 days. It is strongly inductive (novelty 0.73, reoccurrence 0.01) and is treated as unattributed.

\paragraph{Flights.} Directed weighted airport-to-airport air-traffic graph during the COVID-19 pandemic, at daily granularity. Edge weight is the number of flights between two airports on that day.

\paragraph{Can.~Parl.} Directed weighted graph of Canadian Members of Parliament voting ``yes'' in support of one another across 14 yearly snapshots (2006--2019). Edge weight counts the ``yes'' votes cast by one MP for another in a given year.

\paragraph{US~Legis.} Undirected weighted co-sponsorship network between U.S.\ Senators across 12 congresses, with weights equal to the number of co-sponsored bills per congress.

\paragraph{UN~Trade.} Directed weighted food-and-agriculture trade network between 255 countries over 32 yearly snapshots; edge weight is the normalized import/export trade value. Its reoccurrence ratio of 0.87 is the second highest in the suite.

\paragraph{UN~Vote.} Undirected weighted co-voting network in the UN General Assembly (1946--2017, 72 yearly snapshots); edge weight increments each time two nations both vote ``yes'' on an item. Its reoccurrence (0.93) and surprise (0.01) values are, respectively, the largest and smallest in the suite.

\paragraph{Contact.} Dense physical-proximity network among $\approx 700$ university students over one month, at five-minute resolution. Edge weight is an RSSI/Bluetooth proximity value indicating closeness.

\subsection{Track~B Baselines}
\label{app:trackb_baselines}

We compare \tsnn{} against the eleven Track~B baselines whose numbers we reproduce from \citet{lu2024tpnet}, together with TPNet itself; they span the canonical families of continuous-time temporal-graph learning.

\begin{itemize}\setlength\itemsep{2pt}
    \item \textbf{JODIE}~\citep{kumar2019jodie}: recurrent per-node memory with a projection layer that maps memories forward in time.
    \item \textbf{DyRep}~\citep{trivedi2019dyrep}: a temporal point-process model that fits the intensity of each interaction end-to-end.
    \item \textbf{TGAT}~\citep{xu2020tgat}: Bochner-style time encoding combined with graph attention over temporal neighborhoods.
    \item \textbf{TGN}~\citep{rossi2020tgn}: a general memory module followed by a temporal-neighborhood embedding module.
    \item \textbf{CAWN}~\citep{wang2021cawn}: causal anonymous walks sampled from the two endpoints, equipped with relative encodings and decoded by a sequence model.
    \item \textbf{EdgeBank}~\citep{poursafaei2022edgebank}: non-parametric memorization heuristic that scores a pair by its prior occurrences within a sliding time window.
    \item \textbf{TCL}~\citep{wang2021tcl}: graph-transformer encoder over BFS-sampled temporal neighborhoods, trained with a contrastive objective.
    \item \textbf{GraphMixer}~\citep{cong2023graphmixer}: MLP-Mixer architecture applied to each node's historical interaction sequence in place of attention.
    \item \textbf{NAT}~\citep{luo2022nat}: dictionary-style $N$-caches that record neighborhood co-occurrence and decode pairwise correlation between candidate endpoints.
    \item \textbf{PINT}~\citep{souza2022pint}: provably injective temporal message-passing scheme augmented with relative-position features derived from temporal-walk counts.
    \item \textbf{DyGFormer}~\citep{yu2023dygformer}: transformer encoder with patching for long histories and a neighbor co-occurrence encoding.
    \item \textbf{TPNet}~\citep{lu2024tpnet}: unifies relative encodings via temporal-walk matrices and random feature propagation; the source of the baseline numbers in Tables~\ref{tab:dyglib} and~\ref{tab:dyglib_inductive}.
\end{itemize}

\subsection{Optional Score Features}
\label{app:optional_score_features}

$\mathbf{1}[\text{CN feats}]$: append recency-weighted common-neighbor count, Adamic-Adar, and Jaccard over $\mathcal{N}_u^{\mathrm{act}}\cap\mathcal{N}_v^{\mathrm{act}}$ with decay $\exp(-\delta/\tau_{\mathrm{cn}})$. $\mathbf{1}[\text{transported }\Delta\text{ feats}]$: append $\boldsymbol{\delta}_{uv}^-:=\mathbf{h}_u^- - \tilde{\mathbf{Q}}_{uv}^-(r)\mathbf{h}_v^-$; coordinate-aware, so basis-independence is preserved for $s_{\mathrm{geo}}$ and scalar features only.

\subsection{Hyperparameters}
\label{app:trackb_hyperparams}

We have used the following hyperparameters:

\begin{itemize}\setlength\itemsep{1pt}
\item $d$: stalk dimension; $\mathbf{h}_v,\mathbf{f}^{(i)}\!\in\!\RR^{d}$.
\item $k$: Householder reflectors in $\mathbf{U}_v\!=\!H(\mathbf{f}^{(1)})\!\cdots\!H(\mathbf{f}^{(k)})$ (\S\ref{sec:sheaf_construction}); $\operatorname{rank}(\mathbf{Q}_{uv}\!-\!\mathbf{I})\!\le\!2k$ for any pairwise transport (Lem.~\ref{lem:low_rank}).
\item $\dim(\mathbf{x})$, $d_t$, $d_\Delta$: widths of edge-attribute, time-encoding, and $\Delta$-feature inputs (Eqs.~\eqref{eq:frame_update},~\eqref{eq:time_encoding},~\eqref{eq:score_features}).
\item $[\text{nbr.\ attn.}]$: sheaf-consistent neighbor-attention switch (scoring branch only).
\item $\eta$, $K$: step size and sweeps in the diffusion update of Eq.~\eqref{eq:diffusion_local}; stable for $\eta\!\le\!1/\lambda_{\max}(\mathbf{D}_\theta)$ (Thm.~\ref{thm:feature_scaled}).
\item $K_{\text{nbr}}$: per-node neighbor buffer feeding $\mathcal{A}_t$ ($|\mathcal{A}_t|\!\le\!2K_{\text{nbr}}\!+\!2$), score-time heuristics, and neighbor attention.
\item $\alpha_{\text{lr}}$, $\mu_{\text{geo}}$: AdamW base LR (cosine to $0$) and LR multiplier for the geometry group.
\item $L_c$, $K_{\text{neg}}^{\text{train}}$: truncated-BPTT chunk length and negatives per positive in $\mathcal{L}_{\text{CE}}$ (Eq.~\eqref{eq:loss_ce}).
\item $\lambda_s$, $\lambda_e$, $\lambda_{\text{geo}}$, $\lambda_{\text{bias}}$: per-chunk weights on frame smoothness (carry-over bound, Thm.~\ref{thm:stale}), Dirichlet energy (sole gradient into $\mathbf{D}_\theta$), contrastive geometry, and score-MLP residual $\ell_2$.
\item $\kappa$ (init.\ $\kappa_0$): geometric-branch temperature in $s_{\text{geo}}\!=\!-d^{2}/\kappa$.
\end{itemize}
\paragraph{Bounds.} We cap $\eta\!\le\!0.02$ once $\mathbf{D}_\theta$ has trained up, and require $K_{\text{nbr}}$ to exceed the neighbor-attention window. Inductive datasets favor larger $\mu_{\text{geo}},\lambda_{\text{bias}},\lambda_{\text{geo}}$; repetition-heavy datasets favor larger $K_{\text{nbr}}$ and a wider EdgeBank window.

\subsection{Negative Sampling Strategy}
\label{app:neg_sampling}

The training loss is the cross-entropy ranking loss of Eq.~\eqref{eq:loss_ce} over $K_{\text{neg}}$ destination negatives per positive; the sampling distribution for those negatives differs between tracks to stay faithful to each benchmark's native protocol.

 We use the type-aware sampler of \S\ref{sec:training}, which draws 50\% historical negatives (past destinations of the same source), 25\% popularity-weighted negatives (sampled proportional to destination in-degree), and 25\% uniformly random destinations. This composite sampling strategy ensures the model learns to distinguish the ground-truth destination from both structurally plausible alternatives and generic distractors, an approach that directly aligns with the rigorous ``hard-negative'' filtered-MRR evaluation protocol of TGB~v2. Track~A uses $K_{\text{neg}}=200$ for tgbl-wiki, $K_{\text{neg}}=500$ for tgbl-review, and $K_{\text{neg}}=400$ for thgl-software.

 To match the DGB/TPNet protocol exactly, both training and evaluation use \emph{uniformly random} destination negatives, with one random negative per positive at test time and $K_{\text{neg}}^{\text{train}}$ negatives per positive at training time ($K_{\text{neg}}^{\text{train}}=3$ for the very strongly inductive datasets, and $K_{\text{neg}}^{\text{train}}=1$ elsewhere). No historical or popularity-weighted negatives are mixed in on Track~B, so the Track~B numbers are directly comparable to the random-negative AP/AUC baselines reproduced from \citet{lu2024tpnet}.

\subsection{Scalability and Hardware Footprint}
\label{app:scalability}

All reported experiments were conducted on an NVIDIA RTX PRO 6000 Blackwell Workstation Edition (96\,GB GDDR7 with ECC). As node embeddings and frame states are stored on the host in float16 or float32 (\S\ref{sec:training}) and only the active mini-batch is transferred to the GPU, peak GPU memory during training remains below 30\,GB on every dataset listed in Tables~\ref{tab:datasets} and~\ref{tab:dyglib_stats}; the reported configurations therefore do not rely on the full 96\,GB capacity.

Per-epoch training time scales approximately linearly with the number of edges, consistent with the $\mathcal{O}(|\mathcal{E}| \cdot K_{\text{nbr}} \cdot k \cdot d)$ cost of the event loop (\S\ref{sec:diffusion}). Per-dataset wall-clock cost therefore varies by more than an order of magnitude. Datasets that are either small in edge count or restricted to small node sets (e.g., tgbl-wiki, Enron, Social~Evo.; the last contains 2.1M events on only 74 nodes, so frame and stalk storage fits in on-chip cache) complete a single seed in under one hour, whereas datasets that combine large edge counts with large node sets (e.g., tgbl-review, Contact, LastFM) require several hours per seed. Each reported configuration is run for $5$ seeds so that mean$\,\pm\,$std can be reported; in aggregate, all Track~A and Track~B runs together with ablations and hyperparameter scans required approximately $1{,}200$ GPU-hours ($\approx$50 days of continuous single-GPU training) on the same RTX PRO 6000 Blackwell.

\subsection{Inductive Link Prediction under Random Negatives}
\label{app:inductive}

To complement the Track~B transductive results in Table~\ref{tab:dyglib}, we report the \emph{inductive} variant of the same protocol: the test split contains events whose source and/or destination were not seen during training, and each positive is paired with a single uniformly random negative destination (no historical or popularity-weighted mixing; cf.\ Appendix~\ref{app:neg_sampling}). Because this is the stricter of the two DGB evaluations, the model must generalize to previously-unseen nodes rather than reuse memorized trajectories; it stresses the inductive bias of \tsnn{} most directly. We train separate runs under the DGB inductive split while keeping the architecture, hyperparameters, negative-sampling protocol, and seed set aligned with Table~\ref{tab:dyglib}; the evaluator is then applied to the unseen-node test split.

\begin{table*}[h]
\caption{Inductive temporal link prediction on the 13 standard DGB benchmarks under random negative sampling. Results are AP and AUC (\%), reported as mean~$\pm$~std over 5 seeds for \tsnn{}. Best in \textbf{bold}, second best \underline{underlined} within each row. Baseline values (JODIE, DyRep, TGAT, TGN, CAWN, TCL, GraphMixer, NAT, PINT, DyGFormer, TPNet) are reproduced from \citet{lu2024tpnet}; Avg.~Rank is computed over the 13 datasets for each baseline.}
\label{tab:dyglib_inductive}
\centering
\tiny
\setlength{\tabcolsep}{1.5pt}
\resizebox{\textwidth}{!}{%
\begin{tabular}{llcccccccccccc}
\toprule
\textbf{Metric} & \textbf{Dataset} & \textbf{JODIE} & \textbf{DyRep} & \textbf{TGAT} & \textbf{TGN} & \textbf{CAWN} & \textbf{TCL} & \textbf{GraphMixer} & \textbf{NAT} & \textbf{PINT} & \textbf{DyGFormer} & \textbf{TPNet} & \textbf{\tsnn{} (Ours)} \\
\midrule
\multirow{14}{*}{\rotatebox[origin=c]{90}{\textbf{AP}}}
 & Wikipedia   & 94.82{\scriptsize$\pm$0.20} & 92.43{\scriptsize$\pm$0.37} & 96.22{\scriptsize$\pm$0.07} & 97.83{\scriptsize$\pm$0.04} & 98.24{\scriptsize$\pm$0.03} & 96.22{\scriptsize$\pm$0.17} & 96.65{\scriptsize$\pm$0.02} & 96.30{\scriptsize$\pm$0.08} & 98.03{\scriptsize$\pm$0.04} & 98.59{\scriptsize$\pm$0.03} & \underline{98.91{\scriptsize$\pm$0.01}} & \textbf{99.23{\scriptsize$\pm$0.03}} \\
 & Reddit      & 96.50{\scriptsize$\pm$0.13} & 96.09{\scriptsize$\pm$0.11} & 97.09{\scriptsize$\pm$0.04} & 97.50{\scriptsize$\pm$0.19} & 98.62{\scriptsize$\pm$0.01} & 94.09{\scriptsize$\pm$0.07} & 95.26{\scriptsize$\pm$0.02} & 98.24{\scriptsize$\pm$0.04} & 98.56{\scriptsize$\pm$0.05} & 98.84{\scriptsize$\pm$0.02} & \underline{98.86{\scriptsize$\pm$0.01}} & \textbf{99.42{\scriptsize$\pm$0.12}} \\
 & MOOC        & 79.63{\scriptsize$\pm$1.92} & 81.07{\scriptsize$\pm$0.44} & 85.50{\scriptsize$\pm$0.19} & 89.04{\scriptsize$\pm$1.17} & 81.42{\scriptsize$\pm$0.24} & 80.60{\scriptsize$\pm$0.22} & 81.41{\scriptsize$\pm$0.21} & 83.62{\scriptsize$\pm$1.19} & 87.90{\scriptsize$\pm$0.98} & 86.96{\scriptsize$\pm$0.43} & \underline{95.07{\scriptsize$\pm$0.26}} & \textbf{99.43 {\scriptsize$\pm$0.02}} \\
 & LastFM      & 81.61{\scriptsize$\pm$3.82} & 83.02{\scriptsize$\pm$1.48} & 78.63{\scriptsize$\pm$0.31} & 81.45{\scriptsize$\pm$4.29} & 89.42{\scriptsize$\pm$0.07} & 73.53{\scriptsize$\pm$1.66} & 82.11{\scriptsize$\pm$0.42} & 92.24{\scriptsize$\pm$0.93} & 92.42{\scriptsize$\pm$0.64} & 94.23{\scriptsize$\pm$0.09} & \underline{95.36{\scriptsize$\pm$0.11}} & \textbf{95.89{\scriptsize$\pm$0.24}} \\
 & Enron       & 80.72{\scriptsize$\pm$1.39} & 74.55{\scriptsize$\pm$3.95} & 67.05{\scriptsize$\pm$1.51} & 77.94{\scriptsize$\pm$1.02} & 86.35{\scriptsize$\pm$0.51} & 76.14{\scriptsize$\pm$0.79} & 75.88{\scriptsize$\pm$0.48} & 87.18{\scriptsize$\pm$1.24} & 88.12{\scriptsize$\pm$0.30} & 89.76{\scriptsize$\pm$0.34} & \underline{90.34{\scriptsize$\pm$0.28}} & \textbf{95.09{\scriptsize$\pm$0.37}} \\
 & Social Evo. & 91.96{\scriptsize$\pm$0.48} & 90.04{\scriptsize$\pm$0.47} & 91.41{\scriptsize$\pm$0.16} & 90.77{\scriptsize$\pm$0.86} & 79.94{\scriptsize$\pm$0.18} & 91.55{\scriptsize$\pm$0.09} & 91.86{\scriptsize$\pm$0.06} & 87.44{\scriptsize$\pm$5.48} & 92.40{\scriptsize$\pm$0.60} & 93.14{\scriptsize$\pm$0.04} & \underline{93.24{\scriptsize$\pm$0.07}} & \textbf{97.23{\scriptsize$\pm$0.07}} \\
 & UCI         & 79.86{\scriptsize$\pm$1.48} & 57.48{\scriptsize$\pm$1.87} & 79.54{\scriptsize$\pm$0.48} & 88.12{\scriptsize$\pm$2.05} & 92.73{\scriptsize$\pm$0.06} & 87.36{\scriptsize$\pm$2.03} & 91.19{\scriptsize$\pm$0.42} & 87.31{\scriptsize$\pm$0.28} & 94.72{\scriptsize$\pm$0.15} & 94.54{\scriptsize$\pm$0.12} & \underline{95.74{\scriptsize$\pm$0.05}} & \textbf{96.67{\scriptsize$\pm$0.04}} \\
 & Flights     & 94.74{\scriptsize$\pm$0.37} & 92.88{\scriptsize$\pm$0.73} & 88.73{\scriptsize$\pm$0.33} & 95.03{\scriptsize$\pm$0.60} & 97.06{\scriptsize$\pm$0.02} & 83.41{\scriptsize$\pm$0.07} & 83.03{\scriptsize$\pm$0.05} & 96.74{\scriptsize$\pm$0.22} & 97.54{\scriptsize$\pm$0.06} & \underline{97.79{\scriptsize$\pm$0.02}} & \textbf{97.97{\scriptsize$\pm$0.04}} & 97.68{\scriptsize$\pm$0.03} \\
 & Can.~Parl.  & 53.92{\scriptsize$\pm$0.94} & 54.02{\scriptsize$\pm$0.76} & 55.18{\scriptsize$\pm$0.79} & 54.10{\scriptsize$\pm$0.93} & 55.80{\scriptsize$\pm$0.69} & 54.30{\scriptsize$\pm$0.66} & 55.91{\scriptsize$\pm$0.82} & 61.90{\scriptsize$\pm$2.52} & 50.32{\scriptsize$\pm$0.86} & \underline{87.74{\scriptsize$\pm$0.71}} & 68.09{\scriptsize$\pm$1.55} & \textbf{96.13{\scriptsize$\pm$0.03}} \\
 & US~Legis.   & 54.93{\scriptsize$\pm$2.29} & 57.28{\scriptsize$\pm$0.71} & 51.00{\scriptsize$\pm$3.11} & 58.63{\scriptsize$\pm$0.37} & 53.17{\scriptsize$\pm$1.20} & 52.59{\scriptsize$\pm$0.97} & 50.71{\scriptsize$\pm$0.76} & 60.41{\scriptsize$\pm$0.74} & 59.71{\scriptsize$\pm$1.36} & 54.28{\scriptsize$\pm$2.87} & \underline{61.71{\scriptsize$\pm$0.84}} & \textbf{70.22{\scriptsize$\pm$0.06}} \\
 & UN~Trade    & 59.65{\scriptsize$\pm$0.77} & 57.02{\scriptsize$\pm$0.69} & 61.03{\scriptsize$\pm$0.18} & 58.31{\scriptsize$\pm$3.15} & 65.24{\scriptsize$\pm$0.21} & 62.21{\scriptsize$\pm$0.12} & 62.17{\scriptsize$\pm$0.31} & 69.57{\scriptsize$\pm$1.45} & 60.37{\scriptsize$\pm$0.78} & 64.55{\scriptsize$\pm$0.62} & \textbf{86.53{\scriptsize$\pm$0.29}} & \underline{85.20{\scriptsize$\pm$0.07}} \\
 & UN~Vote     & 56.64{\scriptsize$\pm$0.96} & 54.62{\scriptsize$\pm$2.22} & 52.24{\scriptsize$\pm$1.46} & \underline{58.85{\scriptsize$\pm$2.51}} & 49.94{\scriptsize$\pm$0.45} & 51.60{\scriptsize$\pm$0.97} & 50.68{\scriptsize$\pm$0.44} & \textbf{66.60{\scriptsize$\pm$0.98}} & 57.43{\scriptsize$\pm$1.24} & 55.93{\scriptsize$\pm$0.39} & 58.00{\scriptsize$\pm$3.21} & 52.82{\scriptsize$\pm$0.12} \\
 & Contact     & 94.34{\scriptsize$\pm$1.45} & 92.18{\scriptsize$\pm$0.41} & 95.87{\scriptsize$\pm$0.11} & 93.82{\scriptsize$\pm$0.99} & 89.55{\scriptsize$\pm$0.30} & 91.11{\scriptsize$\pm$0.12} & 90.59{\scriptsize$\pm$0.05} & 96.12{\scriptsize$\pm$0.08} & 97.41{\scriptsize$\pm$0.14} & 98.03{\scriptsize$\pm$0.02} & \underline{98.39{\scriptsize$\pm$0.02}} & \textbf{98.51{\scriptsize$\pm$0.02}} \\
\cmidrule(lr){2-14}
  & \textit{Avg.~Rank} 
& 8.54 & 9.62 & 8.96 & 7.08 & 7.15 & 9.50 & 8.85 & 5.62 & 5.08 & 3.85 & \underline{2.00} & \textbf{1.77} \\
\midrule
\multirow{14}{*}{\rotatebox[origin=c]{90}{\textbf{AUC}}}
 & Wikipedia   & 94.33{\scriptsize$\pm$0.27} & 91.49{\scriptsize$\pm$0.45} & 95.90{\scriptsize$\pm$0.09} & 97.72{\scriptsize$\pm$0.03} & 98.03{\scriptsize$\pm$0.04} & 95.57{\scriptsize$\pm$0.20} & 96.30{\scriptsize$\pm$0.04} & 95.82{\scriptsize$\pm$0.18} & 97.76{\scriptsize$\pm$0.05} & 98.48{\scriptsize$\pm$0.03} & \underline{98.90{\scriptsize$\pm$0.01}} & \textbf{99.10{\scriptsize$\pm$0.05}} \\
 & Reddit      & 96.52{\scriptsize$\pm$0.13} & 96.05{\scriptsize$\pm$0.12} & 96.98{\scriptsize$\pm$0.04} & 97.39{\scriptsize$\pm$0.07} & 98.42{\scriptsize$\pm$0.02} & 93.80{\scriptsize$\pm$0.07} & 94.97{\scriptsize$\pm$0.05} & 98.00{\scriptsize$\pm$0.04} & 98.38{\scriptsize$\pm$0.07} & 98.71{\scriptsize$\pm$0.01} & \underline{98.73{\scriptsize$\pm$0.02}} & \textbf{99.32{\scriptsize$\pm$0.16}} \\
 & MOOC        & 83.16{\scriptsize$\pm$1.30} & 84.03{\scriptsize$\pm$0.19} & 86.84{\scriptsize$\pm$0.17} & 91.24{\scriptsize$\pm$0.99} & 81.86{\scriptsize$\pm$0.25} & 81.43{\scriptsize$\pm$0.19} & 82.77{\scriptsize$\pm$0.24} & 84.72{\scriptsize$\pm$1.31} & 90.27{\scriptsize$\pm$0.96} & 87.62{\scriptsize$\pm$0.51} & \underline{95.55{\scriptsize$\pm$0.25}} & \textbf{99.54{\scriptsize$\pm$0.01}} \\
 & LastFM      & 81.13{\scriptsize$\pm$3.39} & 82.24{\scriptsize$\pm$1.51} & 76.99{\scriptsize$\pm$0.29} & 82.61{\scriptsize$\pm$3.15} & 87.82{\scriptsize$\pm$0.12} & 70.84{\scriptsize$\pm$0.85} & 80.37{\scriptsize$\pm$0.18} & 91.60{\scriptsize$\pm$1.31} & 92.15{\scriptsize$\pm$0.68} & 94.08{\scriptsize$\pm$0.08} & \underline{95.36{\scriptsize$\pm$0.06}} & \textbf{95.90{\scriptsize$\pm$0.24}} \\
 & Enron       & 81.96{\scriptsize$\pm$1.34} & 76.34{\scriptsize$\pm$4.20} & 64.63{\scriptsize$\pm$1.74} & 78.83{\scriptsize$\pm$1.11} & 87.02{\scriptsize$\pm$0.50} & 72.33{\scriptsize$\pm$0.99} & 76.51{\scriptsize$\pm$0.71} & 87.95{\scriptsize$\pm$0.58} & 87.97{\scriptsize$\pm$0.61} & \underline{90.69{\scriptsize$\pm$0.26}} & 90.21{\scriptsize$\pm$0.49} & \textbf{96.11{\scriptsize$\pm$0.17}} \\
 & Social Evo. & 93.70{\scriptsize$\pm$0.29} & 91.18{\scriptsize$\pm$0.49} & 93.41{\scriptsize$\pm$0.19} & 93.43{\scriptsize$\pm$0.59} & 84.73{\scriptsize$\pm$0.27} & 93.71{\scriptsize$\pm$0.18} & 94.09{\scriptsize$\pm$0.07} & 88.15{\scriptsize$\pm$6.36} & 94.78{\scriptsize$\pm$0.37} & 95.29{\scriptsize$\pm$0.03} & \underline{95.47{\scriptsize$\pm$0.04}} & \textbf{97.50{\scriptsize$\pm$0.13}} \\
 & UCI         & 78.80{\scriptsize$\pm$0.94} & 58.08{\scriptsize$\pm$1.81} & 77.64{\scriptsize$\pm$0.38} & 86.68{\scriptsize$\pm$2.29} & 90.40{\scriptsize$\pm$0.11} & 84.49{\scriptsize$\pm$1.82} & 89.30{\scriptsize$\pm$0.57} & 83.78{\scriptsize$\pm$0.37} & 93.18{\scriptsize$\pm$0.17} & 92.63{\scriptsize$\pm$0.13} & \underline{94.40{\scriptsize$\pm$0.03}} & \textbf{95.75{\scriptsize$\pm$0.03}} \\
 & Flights     & 95.21{\scriptsize$\pm$0.32} & 93.56{\scriptsize$\pm$0.70} & 88.64{\scriptsize$\pm$0.35} & 95.92{\scriptsize$\pm$0.43} & 96.86{\scriptsize$\pm$0.02} & 82.48{\scriptsize$\pm$0.01} & 82.27{\scriptsize$\pm$0.06} & 96.97{\scriptsize$\pm$0.20} & 97.69{\scriptsize$\pm$0.05} & 97.80{\scriptsize$\pm$0.02} & \textbf{98.05{\scriptsize$\pm$0.04}} & \underline{97.84{\scriptsize$\pm$0.04}} \\
 & Can.~Parl.  & 53.81{\scriptsize$\pm$1.14} & 55.27{\scriptsize$\pm$0.49} & 56.51{\scriptsize$\pm$0.75} & 55.86{\scriptsize$\pm$0.75} & 58.83{\scriptsize$\pm$1.13} & 55.83{\scriptsize$\pm$1.07} & 58.32{\scriptsize$\pm$1.08} & 62.70{\scriptsize$\pm$2.91} & 49.64{\scriptsize$\pm$0.69} & \underline{89.33{\scriptsize$\pm$0.48}} & 69.21{\scriptsize$\pm$1.31} & \textbf{96.73{\scriptsize$\pm$0.02}} \\
 & US~Legis.   & 58.12{\scriptsize$\pm$2.35} & 61.07{\scriptsize$\pm$0.56} & 48.27{\scriptsize$\pm$3.50} & 62.38{\scriptsize$\pm$0.48} & 51.49{\scriptsize$\pm$1.13} & 50.43{\scriptsize$\pm$1.48} & 47.20{\scriptsize$\pm$0.89} & 64.22{\scriptsize$\pm$0.65} & 61.89{\scriptsize$\pm$1.52} & 53.21{\scriptsize$\pm$3.04} & \underline{65.29{\scriptsize$\pm$0.61}} & \textbf{84.02{\scriptsize$\pm$0.03}} \\
 & UN~Trade    & 62.28{\scriptsize$\pm$0.50} & 58.82{\scriptsize$\pm$0.98} & 62.72{\scriptsize$\pm$0.12} & 59.99{\scriptsize$\pm$3.50} & 67.05{\scriptsize$\pm$0.21} & 63.76{\scriptsize$\pm$0.07} & 63.48{\scriptsize$\pm$0.37} & 69.15{\scriptsize$\pm$2.33} & 64.05{\scriptsize$\pm$0.72} & 67.25{\scriptsize$\pm$1.05} & \underline{86.88{\scriptsize$\pm$0.23}} & \textbf{87.14{\scriptsize$\pm$0.28}} \\
 & UN~Vote     & 58.13{\scriptsize$\pm$1.43} & 55.13{\scriptsize$\pm$3.46} & 51.83{\scriptsize$\pm$1.35} & \underline{61.23{\scriptsize$\pm$2.71}} & 48.34{\scriptsize$\pm$0.76} & 50.51{\scriptsize$\pm$1.05} & 50.04{\scriptsize$\pm$0.86} & \textbf{68.55{\scriptsize$\pm$0.90}} & 59.01{\scriptsize$\pm$1.88} & 56.73{\scriptsize$\pm$0.69} & 54.82{\scriptsize$\pm$4.04} & 54.90{\scriptsize$\pm$0.69} \\
 & Contact     & 95.37{\scriptsize$\pm$0.92} & 91.89{\scriptsize$\pm$0.38} & 96.53{\scriptsize$\pm$0.10} & 94.84{\scriptsize$\pm$0.75} & 89.07{\scriptsize$\pm$0.34} & 93.05{\scriptsize$\pm$0.09} & 92.83{\scriptsize$\pm$0.05} & 95.70{\scriptsize$\pm$0.06} & 97.66{\scriptsize$\pm$0.12} & 98.30{\scriptsize$\pm$0.02} & \textbf{98.51{\scriptsize$\pm$0.01}} & \underline{98.42{\scriptsize$\pm$0.02}} \\
\cmidrule(lr){2-14}
 & \textit{Avg.~Rank} 
& 8.38 & 9.54 & 8.92 & 6.62 & 7.46 & 9.77 & 9.00 & 5.69 & 4.85 & 3.69 & \underline{2.46} & \textbf{1.62} \\
\bottomrule
\end{tabular}%
}
\end{table*}

 The first time an unseen node $w$ appears at test time, the state store $\mathcal{S}$ has no entry for it, and we lazily initialize $(\mathbf{h}_w, \mathbf{F}_w, t_w^{\mathrm{last}})$ as follows. The stalk $\mathbf{h}_w$ is set to the all-zeros vector $\mathbf{0}\in\RR^d$, which is the only basis-independent choice consistent with the carry-over operator (Proposition~\ref{prop:carry_over}). The frame parameters $\mathbf{F}_w\in\RR^{k\times d}$ are sampled from the same isotropic Gaussian initializer used at the start of training, $\mathcal{N}(0,\sigma_{\mathrm{init}}^2 \mathbf{I})$ with $\sigma_{\mathrm{init}}=k^{-1/2}$, so that $\|\mathbf{f}_w^{(i)}\|\gg\tau_{\mathrm{H}}$ holds with overwhelming probability and the Lipschitz bound of Lemma~\ref{lem:houshlip} applies from the first event. The last-event time is initialized to the unseen node's first observed event timestamp, $t_w^{\mathrm{last}} := t_{\mathrm{first}}(w)$, so the next observed gap $\delta t_w$ at the next event is well-defined and nonnegative. The active-neighbor buffer $\mathcal{N}_w^{\mathrm{act}}$ is empty; under our isolated-node convention (\S\ref{sec:diffusion}) the diffusion update on $w$ is then the identity step until $w$ acquires its first neighbor. EdgeBank statistics are reused from the training stream and are unaffected by unseen-node bookkeeping. The inductive results in Table~\ref{tab:dyglib_inductive} are reported with these defaults; we did not tune the unseen-node initializer per dataset.

Table~\ref{tab:dyglib_inductive} reports inductive AP and AUC (\%) for the 13 Track~B datasets with completed seed budgets. Baseline values are reproduced from \citet{lu2024tpnet}; EdgeBank cannot score inductively. \tsnn{} attains the lowest average rank across both AP and AUC, with particularly large absolute gains on the strongly inductive regimes ($+4.36$ AP on MOOC and $+8.51$ AP on US~Legis.\ over TPNet, and $+8.39$ AP on Can.~Parl.\ over DyGFormer).

\end{document}